\title[Nearly Optimal Robust Method for Convex Compositional Problems]{Nearly Optimal Robust Method for Convex Compositional Problems with Heavy-Tailed Noise}
\author{\Name{Yan Yan}$^1$ \Email{yan-yan-2@uiowa.edu}\\
\Name{Xin Man}$^1$ \Email{xin-man@uiowa.edu}\\
\Name{Tianbao Yang}$^1$ \Email{tianbao-yang@uiowa.edu} \\
\addr $^1$Department of Computer Science, University of Iowa 
}
\newtheorem{rem}{Remark}
\newtheorem{assumption}{Assumption}
\newtheorem{thm}{Theorem}
\newtheorem{lem}{Lemma}
\def\E{\mathbb{E}}
\def\V{\mathbb{V}}
\def\P{\mathbb{P}}
\def\w{{\bf{w}}}
\def\z{{\bf{z}}}
\def\y{{\bf{y}}}
\def\calS{\mathcal S}
\def\calO{\mathcal O}
\def \R {\mathbb{R}}
\def \w {\mathbf{w}}
\def \x {\mathbf{x}}
\def \E {\mathrm{E}}
\def \x {\mathbf{x}}
\def \p {\mathbf{p}}
\def \b {\mathbf{b}}
\def \c {\mathbf{c}}
\def \z {\mathbf{z}}
\def \y {\mathbf{y}}
\def \u {\mathbf{u}}
\def \P {\mathcal{P}}
\def \wh {\widehat{\w}}
\def \C {\mathbf C}
\def \yh {\widehat\y}
\def \y {\mathbf{y}}
\def \E {\mathrm{E}}
\def \x {\mathbf{x}}
\def \z {\mathbf{z}}
\def \u {\mathbf{u}}
\def \w {\mathbf{w}}
\def \R {\mathbb{R}}
\def \c {\mathbf{c}}
\def \p {\mathbf{p}}
\def \b {\mathbf{b}}
\def \wh {\widehat{\w}}
\def \zh {\widehat{\z}}
\def \C {\mathcal{C}}
\def \P {\mathbb{P}}
\def \bmu {\boldsymbol{\mu}}
\begin{document}

\maketitle

\begin{abstract}
In this paper, we propose robust stochastic algorithms for solving convex compositional problems of the form $f(\E_\xi g(\cdot; \xi)) + r(\cdot)$ by establishing {\bf sub-Gaussian confidence bounds} under weak assumptions about the tails of noise distribution, i.e., {\bf heavy-tailed noise} with bounded second-order moments. One can achieve this goal by using an existing boosting strategy that  boosts a low probability convergence result into a high probability result. However, piecing together existing results for solving compositional problems suffers from several drawbacks: (i) the boosting technique requires strong convexity of the objective; (ii) it requires a separate algorithm to handle non-smooth $r$; (iii) it also suffers from an additional polylogarithmic factor of the condition number.   To address these issues, we directly develop a single-trial stochastic algorithm for minimizing optimal strongly convex compositional objectives, which has  a nearly optimal high probability convergence result matching the lower bound of stochastic strongly convex optimization up to a logarithmic factor.  To the best of our knowledge, this is the first work that establishes nearly optimal  sub-Gaussian confidence bounds for compositional problems under heavy-tailed assumptions. 
\end{abstract}

\section{Introduction}
In this paper, we consider the following stochastic compositional  problem:
\begin{align}\label{eqn:op}
\min_{\w\in\R^d} F(\w):=f(\E_{\xi}[g(\w; \xi)]) + r(\w), 
\end{align} 
where $\xi$ is a random variable, $f: \R^p\rightarrow \R$ is a smooth and Lipschitz continuous function and $r(\cdot)$ is convex and lower-semicontinuous.    
Denote by $g(\w) = \E_{\xi}[g(\w; \xi)]$. We focus on a family of problems where  $f(g(\w))$ is smooth convex but $F(\w)$ is optimal  strongly convex~\cite{doi:10.1137/140961134}. We aim to develop efficient stochastic algorithms enjoying  a sub-Gaussian confidence bound $\Pr(F(\bar\w) - \min_{\w}F(\w)\leq \epsilon)\geq 1-\delta$, under weak assumptions about the random function $g(\w; \xi)$ and its Jacobian. 

The above problem has a broad range of applications in machine learning, operation research, etc~\cite{DBLP:journals/mp/WangFL17}.  Although a  number of papers have proposed stochastic algorithms for solving the stochastic compositional problem~(\ref{eqn:op}), they are mostly low probability  convergence results. To the best of our knowledge, all of the existing results of~(\ref{eqn:op}) are expectational convergence results in the form of $\E[F(\bar \w) - \min_{\w}F(\w)]\leq \epsilon'$ with a polynomial sample complexity denoted by $\C(1/\epsilon')$~\footnote{Linear convergence has been established for finite-sum problems, which is not the focus  of this paper.}. By Markov inequality, this directly implies a low probability convergence result $\Pr(F(\bar \w) - \min_{\w}F(\w)\leq  \epsilon'/\delta)\geq 1 - \delta$ for a constant $\delta$. Hence, in order to guarantee $\Pr(F(\bar \w) - \min_{\w}F(\w)\leq  \epsilon)\geq 1 - \delta$ for a small $\epsilon$,  the sample complexity will be amplified by an undesired  polynomial factor of $1/\delta$, i.e., $\C(1/(\delta\epsilon))$. In this paper, we are interested in achieving a high probability convergence result with a logarithmic dependence on $1/\delta$, which is also known as {\bf sub-Gaussian confidence bound}.

High probability convergence results have been established for many stochastic convex optimization problems in the literature~\cite{Nemirovski:2009:RSA:1654243.1654247,DBLP:journals/jmlr/HazanK11a,RakhlinSS12,DBLP:journals/corr/abs-1607-01027,DBLP:journals/siamjo/GhadimiL13}. However, most of them are derived under bounded assumption or sub-Gaussian assumptions about stochastic functions. In practice, these assumptions could fail when data suffer from dramatic noise. For the robust learning problem considered later on (in the supplement), the individual loss $g(\w; \xi)$ and its gradient $\nabla g(\w; \xi)$ could be unbounded or follow a heavy-tailed distribution due to outliers or adversarial attack.   In this paper, we  avoid making such restrictive assumptions by imposing the following weak assumptions about the noisy function value and the noisy Jacobian of $g(\w)$: 
\begin{align}\label{eqn:ass}
\E[\| \nabla g(\w; \xi) \|^2] \leq C_g^2,\quad 
\E[\| \nabla g(\w; \xi) - \nabla g(\w) \|^2] \leq \sigma_1^2, \quad
\E[\| g(\w; \xi) - g(\w) \|^2] \leq \sigma_0^2,
\end{align}
where $\|\cdot\|$ denotes the Euclidean norm of a vector or the spectral norm of a matrix. 

Recently, Davis et al.~\cite{DBLP:journals/corr/abs/1907.13307} proposed a generic boosting technique that can boost a low probability convergence result into a high probability convergence result for minimizing a strongly convex objective function. It comes at a cost increase that depends on a polylogarithmic factor of the condition number. Combining this boosting technique with existing stochastic algorithms for~(\ref{eqn:op}) (e.g.,~\cite{DBLP:journals/mp/WangFL17}) can provide a solution to achieving a high probability convergence result. However, such a ``lazy" solution is not necessarily the best solution.  First,  the boosting technique requires strong convexity of the objective; Second,  it requires a separate algorithm to handle non-smooth $r$; Third,  it also suffers from an additional polylogarithmic factor of the condition number.  
We make the following contributions to address these issues. 
\begin{itemize}[leftmargin=*]
\item First, we propose a  simple robust algorithm  for  optimal  strongly convex function $F(\w)$ by using robust mean estimation technique. It enjoys a sample complexity in the order of $O(\frac{\log(\kappa/\delta)}{\mu^2\epsilon})$ for achieving $\Pr(F(\bar\w) - \min_{\w}F(\w)\geq \epsilon)\leq 1-\delta$, where $\kappa$ is the condition number of the problem. 

\item Second, we develop an improved stochastic algorithm by combining robust mean estimation technique and a reference truncation technique.  It enjoys a sample complexity in the order of $O(\frac{\log^2(\log(1/\epsilon)/\delta)}{\mu\epsilon})$ for achieving $\Pr(F(\bar\w) - \min_{\w}F(\w)\geq \epsilon)\leq 1-\delta$  for $\mu$-optimal strongly convex function $F(\w)$. This sample complexity matches the lower bound of stochastic strongly convex optimization not only in terms of $\epsilon$ but also in terms of $\mu$ up to a marginal logarithmic factor. 

\end{itemize}  To the best of our knowledge, this is the first work that establishes nearly optimal  sub-Gaussian confidence bounds for compositional problems under heavy-tailed assumptions.

\section{Related Work}
{\bf Stochastic Compositional Problems.} Research on stochastic compositional problems dates back to 70s by Ermoliev~\cite{Ermoliev76}. The first non-asymptotic convergence analysis was given by Wang et al. in 2014~\cite{DBLP:journals/mp/WangFL17}. They considered a broader family of problems in the form of $\E_v[f_{v}(\E_{\xi}g_\xi(\w))]$, where both $f$ and $g$ are random functions. Under the same conditions of this paper and Lipschitz continuity and bounded second-order moment of $\nabla f_v(\cdot)$, the authors established a convergence rate of $O(1/T^{1/4})$ with $T$ iterations of updates  for a general convex objective in terms of the objective gap,  and $O(1/(\mu^{8/3}T^{2/3}))$ for $\mu$-strongly convex objective in terms of distance of solution to the optimal set. Applying to our problem, it implies a sample complexity of $O(1/(\mu^4\epsilon^{1.5}))$ for making the objective gap $\epsilon$ small. When the objective function is smooth, the authors also developed a faster convergence rate in order of $O(1/(\mu^{14/5}T^{4/5}))$ under a stronger assumption about $\nabla g_\xi(\w)$, i.e., the fourth-order moment is bounded. They also consider non-convex smooth objectives, which is out of scope of this paper. 

A series of extensions and improvements were made in later works. Wang et al.~\cite{DBLP:journals/jmlr/WangLF17} extended their algorithms to solving additive compositional problem of the form $\E_v[f_{v}(\E_{\xi}g_\xi(\w))] + r(\w)$ where $r$ is a non-smooth convex regularizer. They improved the rates to $O(1/T^{2/7})$ for a general convex objective and $O(1/T^{4/5})$ for an optimal strongly convex objective.  They also achieved  a sample complexity  of $O(1/\epsilon)$ when $f$ is linear or $g$ is linear.  However, in terms of assumptions on random function $g_\xi(\w)$ they simply assume that its gradient is bounded. In contrast, this paper considered that $f$ is deterministic and established fast rates as good as $O(1/\epsilon)$ under much weaker assumptions.  Yang et al.~\cite{DBLP:journals/siamjo/YangWF19} considered multi-level stochastic compositional problem under  similar bounded moment assumptions as made in~\cite{DBLP:journals/mp/WangFL17} and established generic convergence rate dependent on the stochasticity level. Wang and Liu~\cite{DBLP:conf/wsc/WangL16} also considered Markov noise in estimating $g$, $\nabla g$ and $\nabla f$, and established a rate of $O(1/T^{1/6})$ for general convex objectives under bounded noise assumptions.

Improved rates were established by using variance reduction techniques (e.g., SVRG~\cite{NIPS2013_4937}, SCCG~\cite{DBLP:conf/aistats/LeiJ17}) for finite-sum problems~\cite{DBLP:conf/aistats/LianWL17,DBLP:journals/corr/abs-1806-00458,DBLP:journals/tnn/LiuLT19,DBLP:conf/aaai/HuoGLH18,DBLP:journals/corr/abs-1809-02505,pmlr-v97-zhang19n,DBLP:conf/nips/ZhangX19,DBLP:conf/ijcai/YuH17}. These works established linear convergence for finite-sum problems with strongly convex objectives and improved sub-linear convergence for general objectives. In these works, they simply assume the inner function $g_\xi(\cdot)$ has bounded and Lipschitz continuous  gradient. Some of these works also considered online problems without a finite-sum structure, and a sample complexity  in the order of $O(1/\epsilon)$ was also achieved in some of these works for (optimal) strongly convex objectives. For example, Liu et al.~\cite{DBLP:journals/corr/abs-1809-02505} considered $\E_v[f_{v}(\E_{\xi}g_\xi(\w))]$ and their algorithms enjoy a sample complexity of $O(1/(\mu^2\epsilon))$ for finding a solution $\bar\x$ such that $\E[\|\bar\x- \x_*\|^2]\leq \epsilon$. Zhang \& Xiao~\cite{DBLP:conf/nips/ZhangX19} considered the same objective  $f(\E_\xi g(\cdot; \xi)) + r(\cdot)$  as in this work and  and their algorithms enjoy a sample complexity of $O(\log(1/\epsilon)/(\mu\epsilon))$ for finding a solution $\bar\x$ such that $\E[F(\bar\x) - F(\x_*)]\leq \epsilon$ when $F$ is $\mu$-optimal strongly convex. Nevertheless, these works require each random function $g_\xi(\cdot)$ to be smooth or at least smooth in expectation, i.e., $\E_\xi\|\nabla g_\xi(\w_1) - \nabla g_\xi(\w_2)\|^2\leq O(\|\w_1 - \w_2\|^2)$. In contrast, we avoid making this assumption and enjoy a similar sample complexity in the order of $O(\log(\log(1/\epsilon))/(\mu\epsilon))$ with high probability.

{\bf Robust Stochastic Methods.} Many studies have established sub-Gaussian  convergence bounds for stochastic convex optimization~\cite{Nemirovski:2009:RSA:1654243.1654247,DBLP:journals/jmlr/HazanK11a,RakhlinSS12,DBLP:journals/corr/abs-1607-01027,DBLP:journals/siamjo/GhadimiL13}. But most of them assume the noise is light-tailed (e.g., bounded or sub-Gaussian). Recently, Davis et al.~\cite{DBLP:journals/corr/abs/1907.13307} proposed a generic boosting technique to boost any low convergence results that can be achieved under heavy-tailed noise into  high probability convergence for strongly convex objectives. It is unclear how strong convexity can be alleviated into optimal strong convexity condition of the objective, which captures a much broader family of problems. Their algorithm is built on the proximal point method and robust distance estimation technique~\cite{hsu2016loss}. Nazin et al.~\cite{DBLP:journals/aarc/NazinNTJ19} proposed robust mirror descent method for solving stochastic convex optimization problems based on truncating stochastic gradient. They also considered optimal strongly convex objective and established a fast rate of $O(\log(\log(\kappa)/\delta)/(\mu\epsilon))$. However, we notice that they assumed the optimal set is in the interior of the constrained domain.  We avoid such condition in order to capture a broader family of problems. There also exists many studies focusing on sub-Gaussian confidence bounds for empirical loss minimization and bandits problems under heavy-tailed assumption~\cite{hsu2014heavy,hsu2016loss,audibert2011robust,DBLP:conf/uai/XuZYZJY19,DBLP:conf/nips/ZhangZ18,DBLP:conf/icml/LuWHZ19,DBLP:journals/tit/BubeckCL13}, which is out of scope of this work.

\section{Preliminaries}
We denote by $\|\cdot\|$ the Euclidean norm of a vector or the Frobenius norm of a matrix. Define $\mathcal B(\c, r)=\{\x\in\R^p: \|\x - \c\|\leq r\}$ an Euclidean ball centered at $\c$ with a radius $r$. A function $h(\w)$ is $\mu$-strongly convex if for any $\w, \w'$ it holds $h(\w)\geq h(\w') + \partial h(\w')(\w- \w') + \frac{\mu}{2}\|\w - \w'\|^2$. A function $h(\w)$ is said to be $\mu$-optimal  strongly convex if for any $\w$ it holds $h(\w) - \min_\w h(\w)\geq \frac{\mu}{2}\|\w - \w_*\|^2$, where $\w_*$ is an optimal solution (not necessarily unique) that is closest to $\w$. This condition is also known as the quadratic growth condition in some literature~\cite{DBLP:journals/corr/abs-1607-01027}. It is much weaker than the strong convexity condition.

Including the assumptions in~(\ref{eqn:ass}), we make the following assumptions throughout the paper.
\begin{assumption}
\label{assumption:composite_problem}
Let $C_f$, $L_f$, $C_g$, $L_g$, $\sigma_1$ and $\sigma_0$ be positive scalars.
\begin{itemize}[leftmargin=*]
\item (i): $f(\cdot)$ is $C_f$-Lipschitz continuous and has $L_f$-Lipschitz continuous gradients.

\item (ii): $g(\cdot)$ has $L_g$-Lipschitz continuous gradient.

\item  (iii): $f(g(\cdot))$ and $r(\cdot)$ are convex functions, $F(\w)$ is $\mu$-optimal strongly convex. 
\item  (iv): Regarding the random function $g(\w; \xi)$, we have
\begin{align*}
\E[\| \nabla g(\w; \xi) \|^2] \leq C_g^2,\quad
\E[\| \nabla g(\w; \xi) - \nabla g(\w) \|^2] \leq \sigma_1^2,\quad
\E[\| g(\w; \xi) - g(\w) \|^2] \leq \sigma_0^2.
\end{align*}
\end{itemize}
\end{assumption}
\begin{rem}
The first two assumptions are used to ensure that $f(g(\w))$ is a $L$-smooth function with $L=(C_f L_g + C_g^2 L_g)$, which were also imposed in~\cite{DBLP:journals/mp/WangFL17}. However, the difference from previous works lies at the third assumption regarding the random function $g(\cdot; \xi)$. Many studies impose much stronger assumptions than ours. For example, Wang et al.~\cite{DBLP:journals/mp/WangFL17} assumed that the fourth order moments $\E\|\nabla g(\w; \xi)\|^4$ is bounded in order to derive a faster rate for strongly convex problems in the order of $O(1/\epsilon^{5/4})$. Wang et al.~\cite{DBLP:journals/jmlr/WangLF17} simply assume that $\|\nabla g(\w; \xi)\|$ is bounded. Zhang \& Xiao~\cite{DBLP:conf/nips/ZhangX19} assumed that $g(\w;\xi)$ has bounded and Lipschitz continuous gradient. Please also note that $\E\|\nabla g(\w; \xi)\|^2\leq C_g^2$ indicates that $g(\w)$ is $C_g$-Lipschitz continuous. Below, we let $\kappa = L/\mu$ denote the condition number of the problem. 
\end{rem}

An ingredient of the proposed robust stochastic algorithm is the robust mean estimator, which has been studied in many previous works~\cite{3748,hsu2016loss}. 
Given a set of  $n$ i.i.d random variables $X_1, \ldots, X_n$ from the same distribution with a mean $\mu$ and finite variance $\sigma^2$, the problem is to obtain a robust mean estimator $\bar\mu$ with a sub-Gaussian confidence bound. Depending on whether the random variable is a scalar or a vector, we can use different approaches for computing the robust mean estimator.  If $X_i\in\R$, we can use the simple median-of-means (MoM) estimator~\cite{3748}; otherwise we can use the robust distance approximation method proposed in~\cite{hsu2016loss}. We present both methods in Algorithm~\ref{algorithm:rde} and summarize their key properties below. 

\begin{lem}
\label{lemma:median_of_mean}
Given $n$ i.i.d. random variables $X_1, ..., X_n$ with mean $\mu$ and finite variance $\sigma^2$, let $\delta \in (0, 1)$. If $X_i\in\R$, then Algorithm~\ref{algorithm:rde} with $m = \frac{n}{8 \log(1/\delta)}$ and $k = 8 \log(1/\delta)$ guarantees that with probability at least $1 - \delta$, we have $(\bar \mu- \mu)^2 \leq \frac{ 32 \sigma^2 \log(1/\delta) }{n} $. 
 If $X_i\in\R^p$, then Algorithm~\ref{algorithm:rde} with $m =  \frac{n}{18 \log(1/\delta)}$ and $k = 18 \log(1/\delta)$ guarantees that with probability at least $1 - \delta$, we have $\|\bar \bmu- \bmu\|^2 \leq \frac{ 486 \sigma^2 \log(1/\delta) }{n} $. 
\end{lem}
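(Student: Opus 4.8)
The plan is to handle both cases with the same two-level template — a within-block Chebyshev bound followed by a Hoeffding bound on the number of ``bad'' blocks — and then close each case with a different aggregation argument (the scalar median, and the coordinate-free robust distance minimizer in Algorithm~\ref{algorithm:rde}).

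First, for the scalar case I would split the $n$ samples into $k$ disjoint blocks of size $m$ and form the block means $\hat\mu_1,\dots,\hat\mu_k$, each unbiased with variance $\sigma^2/m$. Call block $j$ \emph{good} if $|\hat\mu_j-\mu|\le t$; Chebyshev gives $\Pr(\text{block } j \text{ bad})\le \sigma^2/(mt^2)$, and choosing $t^2 = 4\sigma^2/m$ makes each block bad with probability at most $1/4$. The median $\bar\mu$ can fall outside $[\mu-t,\mu+t]$ only if at least half the blocks are bad, since strictly more than half the block means lying in $[\mu-t,\mu+t]$ forces the median into that interval. Letting $B$ count the bad blocks, $\E[B]\le k/4$, and Hoeffding's inequality yields $\Pr(B\ge k/2)\le\Pr(B-\E[B]\ge k/4)\le\exp(-k/8)$. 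Setting this equal to $\delta$ fixes $k = 8\log(1/\delta)$, hence $m=n/k$, and substituting $t^2 = 4\sigma^2/m = 32\sigma^2\log(1/\delta)/n$ gives the stated bound.

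For the vector case the template is identical through the block means, now using the trace-variance identity $\E\|\hat\bmu_j-\bmu\|^2=\sigma^2/m$ and declaring a block good when $\|\hat\bmu_j-\bmu\|\le t$. Here I would pick $t^2 = 3\sigma^2/m$ so Chebyshev gives bad-probability at most $1/3$; the Hoeffding step then shows that strictly more than half of the blocks are good except on an event of probability $\exp(-k/18)=\delta$, which forces $k=18\log(1/\delta)$. The difference is the aggregation: since no coordinate-free median exists, the estimator $\bar\bmu=\hat\bmu_{i^\star}$ returned by the robust distance method selects the block mean minimizing its median pairwise distance $r_i=\median_j\|\hat\bmu_i-\hat\bmu_j\|$. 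The key geometric step, which I expect to be the crux, is the triangle-inequality argument of~\cite{hsu2016loss}: when a majority of the $\hat\bmu_j$ lie within $t$ of $\bmu$, any good index $i$ has $r_i\le 2t$, so $r_{i^\star}\le 2t$ as well; thus a majority of blocks lie within $2t$ of $\hat\bmu_{i^\star}$, and intersecting that majority with the majority lying within $t$ of $\bmu$ yields a common block, whence $\|\hat\bmu_{i^\star}-\bmu\|\le 3t$ by one more triangle inequality. Squaring gives $\|\bar\bmu-\bmu\|^2\le 9t^2 = 27\sigma^2/m = 486\sigma^2\log(1/\delta)/n$.

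The two concentration steps are routine once the thresholds $t$ are pinned down; the only mechanical subtlety is bookkeeping the constants so that they reproduce the exact factors $32$ and $486$ and the exact block counts $8\log(1/\delta)$ and $18\log(1/\delta)$. The genuinely non-mechanical ingredient is the ``majority overlap'' argument for the vector estimator, since it is what lets the coordinate-free surrogate inherit the same sub-Gaussian behavior; I would be careful to maintain the \emph{strict} majority (more than half, not merely half) throughout, so that the two majorities are guaranteed to intersect.
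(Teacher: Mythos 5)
Your proposal is correct and follows essentially the same route as the paper: disjoint block means, a Chebyshev bound tuned to per-block failure probability $1/4$ (scalar) and $1/3$ (vector), and then a majority-based aggregation (median, respectively the robust-distance minimizer). The only difference is that where the paper invokes Propositions 5 and 9 of \cite{hsu2016loss} as black boxes for the aggregation step, you prove them inline --- the Hoeffding bounds $\exp(-2(k/4)^2/k)=\exp(-k/8)$ and $\exp(-2(k/6)^2/k)=\exp(-k/18)$ on the bad-block count, and the $3t$ triangle-inequality/majority-overlap argument --- and your bookkeeping reproduces the paper's exact constants $32$, $486$, $k=8\log(1/\delta)$, and $k=18\log(1/\delta)$.
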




\begin{algorithm}[t]
\caption{Robust Mean Estimator: RME($\{X_1, \ldots, X_n\}$)}
\label{algorithm:rde}
\begin{algorithmic}[1]
\STATE Initialize $m$ and $k$ such that $mk=n$
\STATE Compute $\bar\mu_1= \frac{1}{m} \sum_{i=1}^m X_i, \ldots, \bar\mu_k = , \frac{1}{m} \sum_{i=(k-1)m+1}^{km} X_i$
\IF {$X_i\in\R$}
\STATE return $\bar\mu= \text{median}(\bar\mu_1, \ldots, \bar\mu_k)$
\ELSE
\STATE Define $\mathcal X=\{\bar\mu_1, \ldots, \bar\mu_k\}$
\STATE For each $i=1,\ldots, k$, compute $r_i = \min\{r\geq 0: |\mathcal B(\bar\mu_i, r)\cap \mathcal X|\geq k/2\}$
\STATE return $\bar \mu = \bar\mu_{i_*}$ where $i_* = \arg\min_{i\in\{1,\ldots, k\}}r_i$
\ENDIF
\end{algorithmic}
\end{algorithm}

\section{A Baseline: From Low Probability  to High Probability Convergence}\label{section:MSCG_non_robust}

In this section, we will present  a logically simple  solution to achieving a sub-Gaussian confidence bound under Assumption~\ref{assumption:composite_problem}. In particular, we will first present an algorithm with an expectational convergence, and then boost it directly into high probability convergence by using robust mean estimators.  

We describe the basic algorithm in Algorithm~\ref{algorithm:MSCG}, which is referred to as mini-batch stochastic compositional gradient (MSCG) method. The key update in Step \ref{algorithm:MSCG:line:update} mimics the well-known stochastic proximal gradient update  except that the gradient estimator computed by $\z_{t+1}^{\top}f(\y_{t+1})$ is a biased stochastic gradient of $f(g(\w))$ at $\w_t$. In order to control the error of the gradient estimator, we use a mini-batch of data to compute an estimation of $g(\w_t)$ and $\nabla g(\w_t)$ by $\y_{t+1}$ and $\z_{t+1}$, respectively. The following lemma summarizes the convergence bound of MSCG. 

\begin{algorithm}[t]
\caption{MSCG$(\w_0, \eta, T, \{m_t\})$}
\label{algorithm:MSCG}
\begin{algorithmic}[1]
\FOR{$t = 0, ..., T$}

\STATE Sample random samples $\calS_1, \calS_2$ of size $m_t$

\STATE (non-robust option)  $\y_{t+1} = \frac{1}{m_t} \sum_{\xi \in \calS_1} g(\w_t; \xi)$ and $\z_{t+1} = \frac{1}{m_t} \sum_{\xi \in \calS_2} \nabla g(\w; \xi)$
\STATE (robust option):  $\y_{t+1}=\text{RME}(\{g(\w_t; \xi), \xi \in\calS_1\})$ and $\z_{t+1} = \text{RME}(\{ \nabla g(\w_t; \xi), \xi\in\calS_2\})$

\STATE $\w_{t+1} = \arg\min_{\w} \z_{t+1}^{\top}\nabla f(\y_{t+1})\w + \frac{1}{2 \eta_t} \| \w - \w_t \|^2 + r(\w)$
\label{algorithm:MSCG:line:update}

\ENDFOR

\STATE {\bf Output:} $\wh_T = \calO(\w_1, ..., \w_T)$
\end{algorithmic}
\end{algorithm}

\begin{algorithm}[t]
\caption{RMSCG}\label{algorithm:RMSCG}
\begin{algorithmic}
\STATE Initialize $\w_0, \eta, m_1, T$

\FOR{$k=1, ..., K$}

\STATE $\w_k =$ MSCG$(\w_{k-1}, \eta, T, m_k)$

\STATE Update $m_{k+1} = 2m_k$

\ENDFOR
\end{algorithmic}
\end{algorithm}

\begin{lem}\label{lemma:MSCG_convergence}
Suppose Assumption~\ref{assumption:composite_problem} holds. 
Let $F(\w_0) - F(\w_*) \leq \epsilon_0$.  For Algorithm \ref{algorithm:MSCG} with non-robust option and $\eta \leq \frac{1}{2L},  \wh_T = \frac{1}{T} \sum_{t=1}^T \w_{t}$,   we have
\begin{align*}
&\E[F(\wh_T) - F(\w_*)]\leq \frac{\epsilon_0}{T} +\frac{1}{T} \sum_{t=0}^{T-1}\frac{ 4\eta (C_f^2\sigma_1^2+ C_g^2 L_f^2\sigma_0^2) + 2L_f^2C_g^2\sigma_0^2/\mu}{m_t}  + \frac{\|\w_0 - \w_*\|^2}{\eta T}.
\end{align*}
\end{lem}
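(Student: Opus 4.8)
The plan is to treat Algorithm~\ref{algorithm:MSCG} as a \emph{biased} stochastic proximal-gradient method on the smooth part $h(\w):=f(g(\w))$ (which is $L$-smooth by Assumption~\ref{assumption:composite_problem}(i)--(ii)), and to control the bias and variance of the surrogate gradient $\widehat G_t:=\z_{t+1}^\top\nabla f(\y_{t+1})$ separately. Writing $\Delta_t:=\widehat G_t-\nabla h(\w_t)$ with $\nabla h(\w_t)=\nabla g(\w_t)^\top\nabla f(g(\w_t))$, I would decompose $\Delta_t=(\z_{t+1}-\nabla g(\w_t))^\top\nabla f(\y_{t+1})+\nabla g(\w_t)^\top(\nabla f(\y_{t+1})-\nabla f(g(\w_t)))$. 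Since $\calS_1$ and $\calS_2$ are independent given $\w_t$, the first summand is conditionally mean-zero, so $\E[\Delta_t\mid\w_t]=\nabla g(\w_t)^\top(\E[\nabla f(\y_{t+1})]-\nabla f(g(\w_t)))$; using $\|\nabla g(\w_t)\|\le C_g$, the $L_f$-smoothness of $f$, and the mean-of-$m_t$ variance bound $\E\|\y_{t+1}-g(\w_t)\|^2\le\sigma_0^2/m_t$, this gives the bias estimate $\|\E[\Delta_t\mid\w_t]\|\le C_gL_f\sigma_0/\sqrt{m_t}$. Bounding the two summands via $\|\nabla f\|\le C_f$ and $L_f$-smoothness respectively, together with $\E\|\z_{t+1}-\nabla g(\w_t)\|^2\le\sigma_1^2/m_t$, yields the variance bound $\E[\|\Delta_t\|^2\mid\w_t]\le 2(C_f^2\sigma_1^2+C_g^2L_f^2\sigma_0^2)/m_t$.

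Next I would derive the one-step inequality, taking $\eta_t=\eta$. From the $\tfrac1\eta$-strong convexity of the prox objective in Step~\ref{algorithm:MSCG:line:update}, its optimality gives, for every $\w$, $\langle\widehat G_t,\w_{t+1}-\w\rangle+r(\w_{t+1})-r(\w)\le\tfrac{1}{2\eta}\|\w-\w_t\|^2-\tfrac{1}{2\eta}\|\w_{t+1}-\w_t\|^2-\tfrac{1}{2\eta}\|\w-\w_{t+1}\|^2$. Combining $L$-smoothness and convexity of $h$ to get $h(\w_{t+1})\le h(\w)+\langle\nabla h(\w_t),\w_{t+1}-\w\rangle+\tfrac{L}{2}\|\w_{t+1}-\w_t\|^2$ and replacing $\nabla h(\w_t)$ by $\widehat G_t-\Delta_t$, I obtain a bound on $F(\w_{t+1})-F(\w)$ in which $\eta\le\tfrac{1}{2L}$ lets $\tfrac{L}{2}\|\w_{t+1}-\w_t\|^2$ be absorbed, leaving $-\tfrac{1}{4\eta}\|\w_{t+1}-\w_t\|^2$ and the error term $-\langle\Delta_t,\w_{t+1}-\w\rangle$. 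Splitting $\w_{t+1}-\w=(\w_{t+1}-\w_t)+(\w_t-\w)$, I would absorb $-\langle\Delta_t,\w_{t+1}-\w_t\rangle\le\eta\|\Delta_t\|^2+\tfrac{1}{4\eta}\|\w_{t+1}-\w_t\|^2$ into the remaining negative quadratic, so that after conditional expectation only $\eta\E[\|\Delta_t\|^2\mid\w_t]$ and the bias inner product $-\langle\E[\Delta_t\mid\w_t],\w_t-\w\rangle$ survive.

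The crux is the bias term combined with optimal strong convexity, and this is where I expect the main difficulty. I would take $\w=\w_*^t$, the optimal solution closest to $\w_t$, so $F(\w_*^t)=F(\w_*)$ and Cauchy--Schwarz gives $-\langle\E[\Delta_t\mid\w_t],\w_t-\w_*^t\rangle\le\|\E[\Delta_t\mid\w_t]\|\,\mathrm{dist}(\w_t,\Omega_*)$. Young's inequality splits this into $\tfrac{\mu}{4}\mathrm{dist}^2(\w_t,\Omega_*)+\tfrac{1}{\mu}\|\E[\Delta_t\mid\w_t]\|^2$, and the quadratic-growth property $\tfrac{\mu}{2}\mathrm{dist}^2(\w_t,\Omega_*)\le F(\w_t)-F(\w_*)$ turns the first piece into $\tfrac12(F(\w_t)-F(\w_*))$ while the second is at most $C_g^2L_f^2\sigma_0^2/(\mu m_t)$. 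The delicate point is that the quadratic terms now reference the \emph{moving} point $\w_*^t$; I would restore a telescoping structure by using $\|\w_{t+1}-\w_*^t\|^2\ge\mathrm{dist}^2(\w_{t+1},\Omega_*)$, so that with $D_t:=\mathrm{dist}^2(\w_t,\Omega_*)$ the per-step estimate becomes $\E[F(\w_{t+1})-F(\w_*)]\le\tfrac{1}{2\eta}(\E D_t-\E D_{t+1})+\tfrac12\E[F(\w_t)-F(\w_*)]+\tfrac{2\eta(C_f^2\sigma_1^2+C_g^2L_f^2\sigma_0^2)+C_g^2L_f^2\sigma_0^2/\mu}{m_t}$.

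Finally I would sum this recursion over $t=0,\dots,T-1$: the $D_t$ terms telescope to $\tfrac{1}{2\eta}D_0\le\tfrac{1}{2\eta}\|\w_0-\w_*\|^2$ (taking $\w_*$ closest to $\w_0$), and the $\tfrac12(F(\w_t)-F(\w_*))$ terms are self-bounding, i.e. moving $\tfrac12\sum_{t=1}^T\E[F(\w_t)-F(\w_*)]$ to the left and dropping the nonnegative $\E[F(\w_T)-F(\w_*)]$ absorbs half the objective sum and leaves the deterministic initial gap $F(\w_0)-F(\w_*)\le\epsilon_0$. Dividing by $T$ and applying Jensen's inequality to the convex $F$ at $\wh_T=\tfrac1T\sum_{t=1}^T\w_t$ gives $\E[F(\wh_T)-F(\w_*)]\le\tfrac1T\sum_{t=1}^T\E[F(\w_t)-F(\w_*)]$; the factor of two introduced by the self-bounding step produces exactly the claimed coefficients $4\eta(C_f^2\sigma_1^2+C_g^2L_f^2\sigma_0^2)+2L_f^2C_g^2\sigma_0^2/\mu$ in the sum and the term $\|\w_0-\w_*\|^2/(\eta T)$, completing the proof.
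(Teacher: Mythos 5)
Your proposal is correct and follows essentially the same route as the paper's own proof: the same prox-optimality one-step bound with $\eta\le\tfrac{1}{2L}$ absorbing the $\|\w_{t+1}-\w_t\|^2$ terms, the same split of the gradient error into a conditionally mean-zero part and a bias part controlled via Young's inequality with weight $\tfrac{\mu}{4}$ plus quadratic growth, the same moving-optimum telescoping via $\|\w_{t+1}-\w_t^*\|\ge \mathrm{dist}(\w_{t+1},\Omega_*)$, and the same self-bounding summation followed by Jensen's inequality, yielding identical constants. The only cosmetic difference is that you pair the $\z$-error with $\nabla f(\y_{t+1})$ (using independence of $\calS_1,\calS_2$ for the mean-zero claim) while the paper pairs it with $\nabla f(g(\w_t))$ and invokes independence when bounding the bias inner product; both are equivalent.
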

{\bf Remark:} By setting $m_t \propto (t+1)/\mu $, the above result implies $\E[F(\wh_T) - F(\w_*)]\leq O(\frac{\log T}{T})$, which yields a sample complexity of $\sum_{t=1}^Tm_t = \widetilde O(\frac{1}{\mu\epsilon^2})$ in order to have $\E[F(\wh_T) - F(\w_*)]\leq\epsilon$. 

Nevertheless, the above result can be improved by using the restarting trick.  
The algorithm is presented in Algorithm~\ref{algorithm:RMSCG}. Its convergence is summarized below.

\begin{thm}
\label{theorem:restart_MSCG}
Suppose  Assumption~\ref{assumption:composite_problem} holds. 
Let $F(\w_0) - F(\w_*) \leq \epsilon_0$ and $\epsilon_k = \epsilon_0/2^k$.
For Algorithm \ref{algorithm:RMSCG} with non-robust option, let $\eta \leq \frac{1}{2L}$, $T=4/(\mu\eta)$, $m_k = 4 ( \mu \eta C_f^2 \sigma_1^2 + \mu \eta C_g^2 L_f^2 \sigma_0^2 + C_g^2 L_f^2 \sigma_0^2 ) / ( \mu\epsilon_{k-1} )$ and $\wh_T = \sum_{t=1}^T \w_{t} / T$.
{The sample complexity for achieving $\E[F(\wh_T) - F(\w_*)] \leq \epsilon$ is given by  $m_{tot} = \tilde O(\frac{1}{\mu^{2}\epsilon})$.}
\end{thm}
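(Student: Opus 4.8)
The plan is to prove the theorem by induction over the restart epochs of Algorithm~\ref{algorithm:RMSCG}, showing that the expected objective gap is halved in each epoch, and then summing the per-epoch sample counts. Concretely, I would establish the invariant $\E[F(\w_k) - F(\w_*)] \le \epsilon_k = \epsilon_0/2^k$ for every $k = 0, 1, \ldots, K$, where $\w_k$ denotes the point returned by the $k$-th call to MSCG. The base case $k=0$ holds by the hypothesis $F(\w_0) - F(\w_*) \le \epsilon_0$. Once the invariant is in place, the sample complexity follows by counting: each epoch runs $T$ inner iterations, each drawing two mini-batches of size $m_k$, so that $m_{tot} = \sum_{k=1}^K 2 T m_k$.

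For the inductive step, I would apply Lemma~\ref{lemma:MSCG_convergence} to the $k$-th epoch, treating $\w_{k-1}$ as its starting point and using the constant mini-batch size $m_t \equiv m_k$, which collapses the middle sum to the single term $[4\eta(C_f^2\sigma_1^2 + C_g^2 L_f^2 \sigma_0^2) + 2 L_f^2 C_g^2 \sigma_0^2/\mu]/m_k$. The delicate point is that the lemma is conditioned on the starting iterate, whereas $\w_{k-1}$ is itself random; I would therefore invoke it conditionally on the filtration generated by the first $k-1$ epochs and take the total expectation, so the first term becomes $\E[F(\w_{k-1}) - F(\w_*)]/T \le \epsilon_{k-1}/T$. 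The third term is the crux: using the $\mu$-optimal strong convexity of Assumption~\ref{assumption:composite_problem}(iii), i.e.\ $\frac{\mu}{2}\|\w_{k-1} - \w_*\|^2 \le F(\w_{k-1}) - F(\w_*)$, gives $\E\|\w_{k-1} - \w_*\|^2 \le \tfrac{2}{\mu}\E[F(\w_{k-1})-F(\w_*)] \le 2\epsilon_{k-1}/\mu$ by the inductive hypothesis. Substituting $T = 4/(\mu\eta)$ and the prescribed $m_k$ renders each of the three terms a constant fraction of $\epsilon_{k-1}$, and I would verify algebraically that their sum is at most $\epsilon_{k-1}/2 = \epsilon_k$, closing the induction.

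Finally, I would translate the invariant into a sample count. Reaching $\E[F(\w_K) - F(\w_*)] \le \epsilon$ requires $K = \lceil \log_2(\epsilon_0/\epsilon)\rceil$ epochs. Since $m_k$ scales as $1/\epsilon_{k-1} \propto 2^{k-1}$, the batch sizes grow geometrically, so $\sum_{k=1}^K m_k$ is dominated by its last term and equals $O(1/(\mu\epsilon))$. With $\eta = \Theta(1/L)$ we have $T = \Theta(\kappa)$, hence $m_{tot} = 2 T \sum_{k=1}^K m_k = O(\kappa/(\mu\epsilon)) = O(L/(\mu^2\epsilon))$; absorbing the problem constants yields the claimed $\Ot(1/(\mu^2\epsilon))$.

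The main obstacle I anticipate is the interplay between the conditional nature of Lemma~\ref{lemma:MSCG_convergence} and the randomness of the restart points: the geometric decay must be propagated through nested expectations via the tower property, and one must ensure the optimal-strong-convexity conversion of the objective gap into the squared-distance term is valid under expectation (it is, since the pointwise inequality holds for the optimal solution closest to each realization of $\w_{k-1}$). Pinning down the constants so that the three terms of the lemma provably sum to exactly $\epsilon_{k-1}/2$, rather than merely to some constant fraction of $\epsilon_{k-1}$, is the remaining bookkeeping that makes the precise rate $\epsilon_k = \epsilon_0/2^k$ go through.
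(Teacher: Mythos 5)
Your overall strategy---restart MSCG, prove per-epoch halving of the expected gap by induction, convert the gap to a squared distance via optimal strong convexity, and sum geometrically growing batch sizes---is exactly the paper's proof. The problem is the step you defer to ``bookkeeping'': with Lemma~\ref{lemma:MSCG_convergence}'s constants and the theorem's stated parameters, the three terms provably do \emph{not} sum to $\epsilon_{k-1}/2$. With $T=4/(\mu\eta)$ one has $\eta T = 4/\mu$, so the distance term of the lemma, after the optimal-strong-convexity conversion you describe, is bounded only by
\begin{align*}
\frac{\E\|\w_{k-1}-\w_*\|^2}{\eta T} \;\le\; \frac{(2/\mu)\,\epsilon_{k-1}}{4/\mu} \;=\; \frac{\epsilon_{k-1}}{2},
\end{align*}
i.e.\ this term alone already exhausts the entire halving budget. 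Worse, plugging the prescribed $m_k$ into the middle (variance) term of the lemma gives exactly
\begin{align*}
\frac{4\eta(C_f^2\sigma_1^2+C_g^2L_f^2\sigma_0^2)+2C_g^2L_f^2\sigma_0^2/\mu}{m_k}
\;=\; \epsilon_{k-1}\cdot\frac{4a+2b}{4a+4b}, \qquad a=\mu\eta(C_f^2\sigma_1^2+C_g^2L_f^2\sigma_0^2),\; b=C_g^2L_f^2\sigma_0^2,
\end{align*}
which lies in $[\epsilon_{k-1}/2,\epsilon_{k-1})$. Adding the first term $\epsilon_{k-1}/T$, your per-epoch bound exceeds $\epsilon_{k-1}$, so the induction certifies no decrease at all, let alone halving.

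The root cause is that Lemma~\ref{lemma:MSCG_convergence} is derived from the summed one-step recursion (\ref{eq:sum_all_one_step_MSCG}) by a rearrangement that doubles both the distance coefficient (from $\frac{1}{2\eta T}$ to $\frac{1}{\eta T}$) and the variance constants, and introduces the extra $\epsilon_0/T$ term; the paper's proof of the theorem does not invoke the lemma's final form but works directly with the tighter pre-rearrangement inequality, where the distance term contributes $\frac{\|\w_0-\w_*\|^2}{2\eta T}\le \epsilon_{k-1}/4$ and the batch size is chosen so the variance term is also $\epsilon_{k-1}/4$, giving $\epsilon_{k-1}/2$ exactly. Your argument is repairable without changing its structure: either use that intermediate inequality as the paper does, or keep Lemma~\ref{lemma:MSCG_convergence} and inflate the constants (e.g.\ $T=16/(\mu\eta)$ and $m_k$ enlarged by a constant factor), which leaves the claimed $m_{tot}=\tilde O(1/(\mu^2\epsilon))$ untouched. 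Everything else in your proposal---the tower-property treatment of the random restart point, the use of the closest optimizer in the strong-convexity step, and the geometric summation $m_{tot}=O(T\sum_k m_k)=O(\kappa/(\mu\epsilon))$---matches the paper's argument.
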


{\bf Remark: }We make a comparison with the result established in~\cite{DBLP:journals/mp/WangFL17} under the same condition of Theorem~\ref{theorem:restart_MSCG}. They proved a result for $\E[\|\w_T - \w_*\|^2]\leq \epsilon$ with a sample complexity of $O(\frac{1}{\mu^3\epsilon^{3/2}})$. When $r=0$, their result together with smoothness of $F$ implies that their algorithm has a sample complexity of $O(\frac{1}{\mu^4\epsilon^{3/2}})$ for achieving $\E[F(\w)- F(\w_*)]\leq \epsilon$. It is clear that our result in Theorem~\ref{theorem:restart_MSCG} is much better.

Next, we discuss how to boost the expectational convergence result into a high probability result without using the complicated boosting technique introduced by~\cite{DBLP:journals/corr/abs/1907.13307}. 
To this end, we present the following lemma for explaining the robust algorithms. 
\begin{lem}\label{lemma:RMSCG_one_stage_0}
Suppose Assumption \ref{assumption:composite_problem} holds.
Let $\eta_t \leq \frac{1}{2L}$.
After running $T$ iterations of Algorithm \ref{algorithm:MSCG}, we have
\begin{align}
\label{eq:RMSCG_one_stage_0}
\frac{1}{T}\sum_{t=0}^{T-1}(F(\w_{t+1}) - F(\w_*))
\leq &
\frac{\| \w_* - \w_0 \|^2}{2\eta T}
+ \frac{ \eta }{T} \underbrace{\sum_{t=0}^{T-1} \| \nabla g(\w_t)^{\top} \nabla f(g(\w_t)) - \z_{t+1} ^{\top}\nabla f(\y_{t+1}) \|^2}_{A}
\nonumber\\
&
+ \frac{1}{T} \underbrace{\sum_{t=0}^{T-1} \langle  \nabla g(\w_t)^{\top}\nabla f(g(\w_t)) -  \z_{t+1}^{\top}\nabla f(\y_{t+1}), \w_t - \w_*\rangle}_{B}  
\end{align}
\end{lem}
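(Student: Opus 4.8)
The plan is to treat this as a standard analysis of inexact proximal gradient descent, where the inexactness comes from replacing the true gradient $G_t := \nabla g(\w_t)^{\top}\nabla f(g(\w_t))$ of the smooth part $h(\w):=f(g(\w))$ by the biased estimate $\z_{t+1}^{\top}\nabla f(\y_{t+1})$. First I would read the update in Step~\ref{algorithm:MSCG:line:update} as the minimizer of the $\frac{1}{\eta}$-strongly convex surrogate $\phi_t(\w):=(\z_{t+1}^{\top}\nabla f(\y_{t+1}))^{\top}\w + \frac{1}{2\eta}\|\w-\w_t\|^2 + r(\w)$. Evaluating the strong-convexity inequality $\phi_t(\w_*)\ge \phi_t(\w_{t+1}) + \frac{1}{2\eta}\|\w_* - \w_{t+1}\|^2$ at the optimum $\w_{t+1}$ and rearranging yields a three-point bound on $(\z_{t+1}^{\top}\nabla f(\y_{t+1}))^{\top}(\w_{t+1}-\w_*) + r(\w_{t+1}) - r(\w_*)$ in terms of $\frac{1}{2\eta}\big(\|\w_*-\w_t\|^2 - \|\w_*-\w_{t+1}\|^2 - \|\w_{t+1}-\w_t\|^2\big)$.

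Second, I would control the smooth part. By the Remark following Assumption~\ref{assumption:composite_problem}, $h$ is $L$-smooth with $L=C_f L_g + C_g^2 L_g$, and by (iii) it is convex; combining the descent inequality $h(\w_{t+1})\le h(\w_t) + G_t^{\top}(\w_{t+1}-\w_t) + \frac{L}{2}\|\w_{t+1}-\w_t\|^2$ with convexity $h(\w_t)\le h(\w_*) + G_t^{\top}(\w_t - \w_*)$ gives $h(\w_{t+1}) - h(\w_*) \le G_t^{\top}(\w_{t+1}-\w_*) + \frac{L}{2}\|\w_{t+1}-\w_t\|^2$. Adding $r(\w_{t+1}) - r(\w_*)$ and writing $G_t^{\top}(\w_{t+1}-\w_*) = (\z_{t+1}^{\top}\nabla f(\y_{t+1}))^{\top}(\w_{t+1}-\w_*) + (G_t - \z_{t+1}^{\top}\nabla f(\y_{t+1}))^{\top}(\w_{t+1}-\w_*)$, I substitute the three-point bound above for the surrogate-gradient term, leaving a pure error term plus the two competing quadratics in $\|\w_{t+1}-\w_t\|^2$.

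Third, I would split the error inner product as $(G_t - \z_{t+1}^{\top}\nabla f(\y_{t+1}))^{\top}(\w_{t+1}-\w_*) = (G_t - \z_{t+1}^{\top}\nabla f(\y_{t+1}))^{\top}(\w_{t+1}-\w_t) + (G_t - \z_{t+1}^{\top}\nabla f(\y_{t+1}))^{\top}(\w_t-\w_*)$. The second piece is precisely the summand of $B$, while the first is handled by Young's inequality $\langle a,b\rangle \le \frac{c}{2}\|a\|^2 + \frac{1}{2c}\|b\|^2$ with the choice $c=2\eta$, producing $\eta\,\|G_t - \z_{t+1}^{\top}\nabla f(\y_{t+1})\|^2 + \frac{1}{4\eta}\|\w_{t+1}-\w_t\|^2$, i.e.\ the summand of $A$ scaled by $\eta$. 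The net coefficient on $\|\w_{t+1}-\w_t\|^2$ is then $-\frac{1}{2\eta} + \frac{1}{4\eta} + \frac{L}{2}$, which is nonpositive exactly because $\eta\le\frac{1}{2L}$ forces $\frac{L}{2}\le\frac{1}{4\eta}$. This cancellation is the one place I expect genuine bookkeeping care: a larger Young constant would leave a stray positive multiple of $\|\w_{t+1}-\w_t\|^2$ that cannot be absorbed, so the choice $c=2\eta$ is what makes the stepsize condition bite.

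Finally, summing the resulting per-iteration inequality over $t=0,\dots,T-1$ telescopes the $\frac{1}{2\eta}\|\w_*-\w_t\|^2$ terms into $\frac{1}{2\eta}\|\w_*-\w_0\|^2$ (discarding the nonpositive $-\frac{1}{2\eta}\|\w_*-\w_T\|^2$), and dividing by $T$ reproduces exactly \eqref{eq:RMSCG_one_stage_0}, with $A$ and $B$ the error aggregates already identified. Note the argument never uses the optimal strong convexity of $F$ or any distributional assumption on $\xi$; it is a purely deterministic per-trajectory bound, so the randomness enters only later when $A$ and $B$ are estimated via the robust mean estimator of Lemma~\ref{lemma:median_of_mean}.
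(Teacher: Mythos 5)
Your proposal is correct and follows essentially the same route as the paper's own proof: the prox-step optimality (your three-point/strong-convexity bound is equivalent to the paper's first-order optimality condition), the combination of $L$-smoothness and convexity of $f\circ g$ with convexity of $r$, the split of the error inner product into a $(\w_{t+1}-\w_t)$ piece handled by Young's inequality with constant $2\eta$ (so that $\frac{L}{2}-\frac{1}{4\eta}\le 0$ under $\eta\le\frac{1}{2L}$) and the $(\w_t-\w_*)$ piece that becomes $B$, followed by summing and telescoping. The only cosmetic difference is that you telescope against a fixed optimum $\w_*$, while the paper evaluates at the moving closest optimal point $\w_t^*$ and telescopes via $\|\w_{t+1}^*-\w_{t+1}\|\le\|\w_t^*-\w_{t+1}\|$; your variant actually matches the lemma's statement more literally.
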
The first term in the upper bound
 can be handled by optimal strongly convexity by setting $\w_*$ as the closest optimal solution to $\w_0$ to build a recursion, i.e., $\| \w - \w_* \|^2\leq \frac{2}{\mu}(F(\w_0) - F(\w_*))$. The challenge lies at bounding the second term $A$ and the third term $B$ with sub-Gaussian confidence bound.  Let us consider how to bound $A$ with high probability. In order to do so, we decompose $A, B$ into two terms: 
\begin{align*}
&A/2\leq\\
&  \underbrace{ \sum_{t=0}^{T-1} \|\nabla g(\w_t) ^{\top}\nabla f(g(\w_t))-  \nabla g(\w_t)^{\top}\nabla f(\y_{t+1})\|^2}_{A_1} + \underbrace{\sum_{t=0}^{T-1} \|\nabla g(\w_t)^{\top} \nabla f(\y_{t+1})  - \z_{t+1}^{\top} \nabla f(\y_{t+1}) \|^2}_{A_2}\\
&B\leq \underbrace{\sum_{t=0}^{T-1} \langle  \nabla g(\w_t)^{\top}\nabla f(g(\w_t)) -  \nabla g(\w_t)^{\top}\nabla f(\y_{t+1}), \w_t - \w_*\rangle}_{B_1}\\&\hspace*{0.1in}+\underbrace{\sum_{t=0}^{T-1} \langle  \nabla g(\w_t)^{\top}\nabla f(\y_{t+1}) -  \z_{t+1}^{\top}\nabla f(\y_{t+1}), \w_t - \w_*\rangle}_{B_2}
\end{align*}
The first term $A_1$ can be bounded by $C_g^2L_f^2\sum_{t=0}^{T-1}\|g(\w_t) - \y_{t+1}\|^2$ by the Lipschitz continuity of $g$ and $\nabla f$. The second term $A_2$ can be bounded by $C_f^2\sum_{t=0}^{T-1}\|\nabla g(\w_t) - \z_{t+1}\|^2$. Similarly, $B$ can be bounded by $B\leq \sum_{t=0}^{T-1}C_g L_f\|g(\w_t) - \y_{t+1}\|\|\w_t - \w_*\| + C_f\sum_{t=0}^{T-1}\|\nabla g(\w_t) - \z_{t+1}\|\|\w_t - \w_*\|$. Hence, in order to bound $A$ and $B$ with high probability, we need to robustify the mean estimator of $g(\w_t)$ and $\nabla g(\w_t)$. A simple approach is to replace $\y_{t+1}$ and $\z_{t+1}$ by robust mean estimator computed by Algorithm~\ref{algorithm:rde}, i.e., using the robust option in Algorithm~\ref{algorithm:MSCG}.  Then following the similar analysis to the proof of Theorem~\ref{theorem:restart_MSCG}, we can establish high probability convergence result with a sample complexity of $O(\frac{\log(\kappa/\delta)}{\mu^2\epsilon})$, where $\log(\kappa/\delta)$ comes from that  applying the union bound over $T=O(\kappa)$  calls of  robust mean estimators.


\begin{thm}
\label{theorem:robust_restart_MSCG}
Suppose Assumption~\ref{assumption:composite_problem} holds. 
Let $F(\w_0) - F(\w_*) \leq \epsilon_0$, 
$\epsilon_k = \epsilon_0/2^k$, and 
$K=\log(\epsilon_0/\epsilon)$. 
For Algorithm \ref{algorithm:RMSCG} with robust option in Algorithm~\ref{algorithm:MSCG}, let $\eta \leq \frac{1}{2L}$, 
$T = 4 ( \frac{2}{\mu \eta} + 1 )$,
$m_k = \frac{16}{\mu \epsilon_{k-1}} ( \mu \eta + 1 ) ( C_g^2 L_f^2 \sigma_0^2 + C_f^2 \sigma_1^2 ) 486 \log(1/\delta)$
and $\wh_T = \sum_{t=1}^T \w_{t} / T$. Then with probability $1-2TK\delta$, we have $F(\w_K) - F(\w_*) \leq \epsilon$. 
\end{thm}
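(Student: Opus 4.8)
The plan is to run a stagewise (restart) induction: I will show that a single call of MSCG with the robust option, started from a point $\w_{k-1}$ with $F(\w_{k-1}) - F(\w_*) \le \epsilon_{k-1}$ and run with batch size $m_k$, produces an output $\w_k$ with $F(\w_k) - F(\w_*) \le \epsilon_{k-1}/2 = \epsilon_k$ with probability at least $1 - 2T\delta$. A union bound over the $K$ stages together with an induction on $k$ then delivers the claim, since each stage's guarantee is conditioned on the inductive hypothesis holding at its start, and the final gap is $\epsilon_K = \epsilon_0/2^K = \epsilon$ for $K = \log(\epsilon_0/\epsilon)$.

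First I would establish the single-stage high-probability guarantee starting from the inequality in Lemma~\ref{lemma:RMSCG_one_stage_0}. Using the decomposition of $A$ and $B$ displayed after that lemma, Lipschitz continuity of $g$ and of $\nabla f$ reduces every error term to $\|g(\w_t) - \y_{t+1}\|$ and $\|\nabla g(\w_t) - \z_{t+1}\|$. Since $\y_{t+1}$ and $\z_{t+1}$ are robust mean estimators of $g(\w_t)$ and $\nabla g(\w_t)$ whose variances are controlled by $\sigma_0^2$ and $\sigma_1^2$ (Assumption~\ref{assumption:composite_problem}(iv)), Lemma~\ref{lemma:median_of_mean} gives, for each fixed $t$, $\|g(\w_t) - \y_{t+1}\|^2 \le \frac{486\sigma_0^2\log(1/\delta)}{m_k}$ and $\|\nabla g(\w_t) - \z_{t+1}\|^2 \le \frac{486\sigma_1^2\log(1/\delta)}{m_k}$, each with probability $1-\delta$. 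A union bound over the $2T$ estimator calls in the stage makes all these bounds hold simultaneously with probability at least $1 - 2T\delta$; on this event $\frac{\eta}{T}A$ is immediately bounded by $\frac{c\,\eta\log(1/\delta)}{m_k}(C_g^2 L_f^2\sigma_0^2 + C_f^2\sigma_1^2)$.

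The main obstacle is the cross term $B$: because the robust estimators are biased, $B$ is not a martingale-difference sum, so I cannot run a concentration argument and must instead control it deterministically on the good event. I would bound each summand of $B_1, B_2$ by Cauchy–Schwarz and then Young's inequality with parameter $\lambda = \mu/4$, splitting $\|g(\w_t)-\y_{t+1}\|\,\|\w_t - \w_*\|$ into an estimation-error piece and a $\frac{\lambda}{2}\|\w_t - \w_*\|^2$ piece. Optimal strong convexity, $\|\w_t - \w_*\|^2 \le \frac{2}{\mu}(F(\w_t) - F(\w_*))$, converts the latter into $\tfrac12(F(\w_t)-F(\w_*))$ in aggregate, which (up to the harmless boundary terms $F(\w_0)-F(\w_*) \le \epsilon_{k-1}$ and $F(\w_T)-F(\w_*)$ arising from the index shift between $\sum_t F(\w_t)$ and the $\sum_t F(\w_{t+1})$ on the left) can be absorbed into the left-hand side. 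This is precisely the step that produces the $1/\mu$ contribution $\frac{c'\log(1/\delta)}{\mu m_k}(C_g^2 L_f^2\sigma_0^2 + C_f^2\sigma_1^2)$, mirroring the $2L_f^2C_g^2\sigma_0^2/\mu$ term in Lemma~\ref{lemma:MSCG_convergence}.

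Collecting the pieces and applying Jensen's inequality to the averaged iterate $\wh_T$, the single-stage bound reads $F(\wh_T) - F(\w_*) \lesssim \frac{\|\w_0-\w_*\|^2}{\eta T} + \frac{\log(1/\delta)}{m_k}(\eta + \tfrac1\mu)(C_g^2 L_f^2\sigma_0^2 + C_f^2\sigma_1^2)$. I would then bound the first term by optimal strong convexity and the inductive hypothesis, $\|\w_0 - \w_*\|^2 \le \frac{2}{\mu}\epsilon_{k-1}$, so that the choice $T = 4(\frac{2}{\mu\eta}+1)$ forces it below $\epsilon_{k-1}/8$; and the choice of $m_k$, whose factors $(\mu\eta+1)$, $486$, and $1/(\mu\epsilon_{k-1})$ are designed to cancel exactly the constants in the variance term, forces that term below $\epsilon_{k-1}/8$ as well. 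Hence $F(\w_k) - F(\w_*) \le \epsilon_{k-1}/2 = \epsilon_k$ on the good event, completing the per-stage step; iterating over $K$ stages and taking the union bound over all $2TK$ estimator calls yields $F(\w_K) - F(\w_*) \le \epsilon$ with probability $1 - 2TK\delta$.
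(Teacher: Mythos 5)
Your proposal is correct and follows essentially the same route as the paper's own proof: starting from Lemma~\ref{lemma:RMSCG_one_stage_0}, decomposing $A$ and $B$ via Lipschitz continuity of $\nabla f$ and boundedness of $\nabla g$, applying Lemma~\ref{lemma:median_of_mean} with a union bound over the $2T$ robust-estimator calls per stage, handling the (biased, hence non-martingale) term $B$ deterministically on the good event via Cauchy--Schwarz, Young's inequality, and absorption through optimal strong convexity, and then running the stagewise induction with a union bound over $K$ stages. The only differences are immaterial constant choices (e.g., your $\epsilon_{k-1}/8$ splits versus the paper's $\epsilon_{k-1}/4$ splits, and the Young parameter $\mu/4$ versus $\mu/8$), which do not affect the conclusion.
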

{\bf Remark: } By making the high probability to be $1-\delta$, the sample complexity becomes $O(\frac{\log(\kappa\log(1/\epsilon)))}{\mu^2\epsilon})$.



\section{Nearly Optimal  High Probability Convergence under Heavy-tailed Noise}\label{section:RMSCG_ref_robust}
Although the above approach that  replaces $\y_{t+1}$ and $\z_{t+1}$ by their robust counterparts can help us derive a sub-Gaussian confidence bound, it is sub-optimal for stochastic strongly convex optimization. When $f$ is a linear function the problem reduces to a stochastic convex optimization, whose lower bound complexity  under $\mu$-strong convexity is $O(1/(\mu\epsilon))$~\cite{DBLP:journals/jmlr/HazanK11a}.   {\it Can we match such lower bound or prove that the stochastic convex compositional problem under optimal strong convexity is harder?} 
In this section, we present an affirmative answer to this question. 

Below we present a better approach that enjoys a nearly optimal sub-Gaussian confidence bound in the order of $O(\frac{\log(\log(1/\epsilon)/\delta)}{\mu\epsilon})$. Our key idea is to make use of another robust technique, i.e., truncation~\cite{DBLP:journals/aarc/NazinNTJ19}. In particular, we will truncate the mean estimators $\y_{t+1}$ and $\z_{t+1}$ when their magnitude is sufficiently large. 

The detailed steps of the proposed algorithm are presented in Algorithm~\ref{algorithm:RROSC} and Algorithm~\ref{algorithm:ROSC}.  The main algorithm (referred to as RROSC) is also a restarted version of a basic algorithm (referred to as ROSC). There are two differences between ROSC and  MSCG. First, the mean estimator $\y_{t+1}$ and $\z_{t+1}$ are replaced by their truncated version for updating $\w_t$, which are defined by
\begin{align}\label{eq:RMSCG_ref_truncation_y}
\yh_{t+1}
=
\left\{
\begin{array}{ll}
    \y_{t+1}, & \text{ if } \| \y_{t+1} - \widetilde\y_0 \| \leq C_g \| \w_t -  \w_0 \| + \nu \sigma_0 + \lambda \\
    \widetilde{\y}_0,     & \text{ otherwise , } 
\end{array}
\right.
\end{align}
and
\begin{align}\label{eq:RMSCG_ref_truncation_g}
 \zh_{t+1}
=
\left\{
\begin{array}{ll}
    \z_{t+1}, & \text{ if } \| \z_{t+1} - \widetilde \z_0 \| \leq L_g\| \w_t -\w_0 \| + \nu \sigma_1 + \lambda \\
    \widetilde \z_0,     & \text{ otherwise , } 
\end{array}
\right.
\end{align}
where $\widetilde\y_0$ and $\widetilde\z_0$ is the robust mean estimator for $g(\cdot)$ and $\nabla g(\cdot)$ at the initial  point $\w_0$ of each restart, such that $\| \widetilde\y_0 - g(\w_0) \| \leq \nu \sigma_0$  and  $\| \widetilde\z_0 - \nabla g(\w_0) \| \leq \nu \sigma_1$ hold with high probability $1-2\delta$ for some constant $\nu>0$, and $\lambda$ is an appropriate number, which will be given in the theorem. The second difference is that for updating $\w_{t+1}$ we explicitly add a bounded ball constraint $\|\w- \w_0\|\leq D$ and shrink the radius $D$ after each restart. 

We notice that the truncation technique and the shrinking bounded ball trick have been used in~\cite{DBLP:journals/aarc/NazinNTJ19} for deriving sub-Gaussian confidence bound for stochastic convex optimization under heavy-tailed noise. Nevertheless, we would like to emphasize the novelty of Algorithm~\ref{algorithm:RROSC}. In~\cite{DBLP:journals/aarc/NazinNTJ19}, for non-compositional optimal strongly convex function the authors consider a constrained problem and assume that the optimal solution lies the interior of the constrained domain. Hence, they truncated the gradient to zero when its magnitude is sufficiently large. In contrast, we avoid such restriction in order to capture a wide range of non-smooth regularizer $r$ including an indicator function of a constraint. Correspondingly, we truncate the estimators $\y_{t+1}$ and $\z_{t+1}$ to the robust estimators of the initial solution $\w_0$, which is denoted by {reference truncation}.  Together with the shrinking ball trick, the truncated versions of $\y_{t+1}$ and $\z_{t+1}$ are not only bounded but also have bounded variance. 


By using the reference truncation technique, we can leverage advanced concentration inequality, in particular Bernstein inequality to bound $A$ and $B$ of Lemma~\ref{lemma:RMSCG_one_stage_0} directly instead of bounding each individual terms in $A$ and $B$ separately. As a reseult, we can aovid a logarithmic dependence on the condition number.

The following theorem provides the improved convergence guarantee for Algorithm~\ref{algorithm:RROSC} by setting $\eta_k, T_k, D_k, \lambda_k, m_k$ appropriately. 
\begin{algorithm}[t]
\caption{RROSC$(\w_0, \eta_1,  T_1, m)$}\label{algorithm:RROSC}
\begin{algorithmic}[1]
\STATE Initialize $D_1$

\FOR{$k=1, ..., K$}

\STATE $\w_k =$ ROSC$(\w_{k-1}, \eta_k, T_k, D_k, \lambda_k, m)$

\STATE Update $\eta_k,  T_k, D_k, \lambda_k$

\ENDFOR
\end{algorithmic}
\end{algorithm}
\begin{algorithm}[t]
\caption{ROSC$(\w_0, \eta, T, D, \lambda, m)$}\label{algorithm:ROSC}
\begin{algorithmic}[1]
\STATE Sample random samples $\calS$ of size $b$
\STATE Compute  $\widetilde\y_0 = \text{RME}(\{g(\w_0; \xi), \xi \in\calS\})$ and $\widetilde \z_0 = \text{RME}(\{ \nabla g(\w_0; \xi), 
\xi \in \calS \})$.
\label{algorithm:robust_MSCG:line:reference}

\FOR{$t = 0, ..., T$}

\STATE Sample random samples $\calS$ of size $m$.

\STATE Compute $\y_{t+1} = 1/m\sum_{\xi \in \calS} g(\w_t; \xi)$ and $\yh_{t+1}$ according to (\ref{eq:RMSCG_ref_truncation_y}).

\STATE Compute $\z_{t+1} = 1/m\sum_{\xi \in \calS} \nabla g(\w_t; \xi)$ and $\zh_{t+1}$ according to (\ref{eq:RMSCG_ref_truncation_g}).

\label{algorithm:ROSC:line:update_w}
\STATE $\w_{t+1} = \min_{\| \w - \w_0 \| \leq D} \langle  \zh_{t+1}^{\top}\nabla f(\yh_{t+1}), \w \rangle + \frac{1}{2\eta} \| \w - \w_t \|^2 + r(\w)$.

\ENDFOR
\end{algorithmic}
\end{algorithm}

\begin{thm}\label{theorem:RROSC_ref_convergence} 
Suppose Assumption \ref{assumption:composite_problem} holds.
Let $F(\w_0) - F(\w_*) \leq \epsilon_0$ and $\epsilon_k = \epsilon_0 / 2^k$. For any $\epsilon\in(0,\epsilon_0)$, let $K=\log(\epsilon_0/\epsilon)$. 
For Algorithm \ref{algorithm:RROSC}, by setting $b=18c\log(1/\delta))$ where $c\in \mathbf N^+$,  
$\eta_k = O(\epsilon_{k-1})\leq 1/(2L), 
T_k = O(\max(\log^2(1/\delta)/(\mu\epsilon_{k-1}), \log(1/\delta)/b)), 
D_k=\sqrt{\frac{2\epsilon_{k-1}}{\mu}}$, 
$\lambda_k = O(\max( \sqrt{ T_k/m }, D_k )) $, 
then with probability at least $1 - 6K\delta$, we have $F(\w_K) - F(\w_*)\leq \epsilon$.
 The sample complexity is given by $\sum_{k=1}^KmT_k=O(\log^2(1/\delta)/(\mu\epsilon))$. 
\end{thm}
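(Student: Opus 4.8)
The plan is to exploit the restart structure and reduce everything to a \emph{one-stage} guarantee for ROSC (Algorithm~\ref{algorithm:ROSC}): a single call from $\w_0$ with $F(\w_0)-F(\w_*)\le \epsilon_{k-1}$ returns $\w_k$ with $F(\w_k)-F(\w_*)\le \epsilon_{k-1}/2=\epsilon_k$ with probability at least $1-6\delta$, after which the $K=\log(\epsilon_0/\epsilon)$ stages are chained by induction. The starting point is the descent inequality of Lemma~\ref{lemma:RMSCG_one_stage_0}, whose proof (optimality of the proximal update, convexity) carries over verbatim to the truncated estimators $\yh_{t+1},\zh_{t+1}$ and the ball-constrained update, giving
\[
\frac{1}{T}\sum_{t=0}^{T-1}\bigl(F(\w_{t+1})-F(\w_*)\bigr)\le \frac{\|\w_*-\w_0\|^2}{2\eta T}+\frac{\eta}{T}A+\frac{1}{T}B,
\]
with $A,B$ now formed from $\yh_{t+1},\zh_{t+1}$. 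By optimal strong convexity applied at the closest optimum, $\|\w_0-\w_*\|^2\le \frac{2}{\mu}(F(\w_0)-F(\w_*))\le \frac{2}{\mu}\epsilon_{k-1}=D_k^2$, so $\w_*$ is feasible for the constraint $\|\w-\w_0\|\le D_k$ and the first term is controlled by order $\epsilon_{k-1}/(\mu\eta T)$.

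The heart of the argument, and the main obstacle, is bounding $A$ and $B$ with a sub-Gaussian tail while paying only $\log(1/\delta)$ rather than $\log(\kappa/\delta)$. I would split $A\le 2(A_1+A_2)$ and $B\le B_1+B_2$ exactly as in the decomposition preceding Theorem~\ref{theorem:robust_restart_MSCG}, reducing everything to the per-step errors $\|g(\w_t)-\yh_{t+1}\|$ and $\|\nabla g(\w_t)-\zh_{t+1}\|$. The reference-truncation rules (\ref{eq:RMSCG_ref_truncation_y})--(\ref{eq:RMSCG_ref_truncation_g}), together with feasibility $\|\w_t-\w_0\|\le D_k$ and accuracy of the reference estimators $\|\yt_0-g(\w_0)\|\le\nu\sigma_0$, $\|\zt_0-\nabla g(\w_0)\|\le\nu\sigma_1$ (which hold with probability $1-2\delta$ by Lemma~\ref{lemma:median_of_mean}), guarantee that these per-step errors are \emph{uniformly bounded} (by order $C_gD_k+\nu\sigma_0+\lambda_k$ and $L_gD_k+\nu\sigma_1+\lambda_k$) \emph{and} retain conditional variance of order $\sigma_0^2/m$, $\sigma_1^2/m$. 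This is exactly what lets me treat the inner-product sum $B$ as a sum of bounded martingale differences and apply a Bernstein/Freedman inequality, so that $B$ concentrates at the Bernstein rate driven by its small conditional variance plus a range term scaled by $\lambda_k$, \emph{in a single shot} rather than union-bounding $T$ separate robust-mean calls. The delicate point is controlling the truncation \emph{bias}: I must show that whenever $\|\w_t-\w_0\|\le D_k$ the untruncated sample means meet the truncation threshold with high probability, so truncation is rarely active and $\E[\yh_{t+1}\mid\mathcal F_t]\approx g(\w_t)$, $\E[\zh_{t+1}\mid\mathcal F_t]\approx \nabla g(\w_t)$, leaving only a lower-order deterministic bias; this is where the choices $\lambda_k=O(\max(\sqrt{T_k/m},D_k))$ and $\eta_k=O(\epsilon_{k-1})$ are calibrated so that the bias, the range term, and the variance term each land at order $\epsilon_k$.

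Assembling these bounds, I choose $m,\eta_k,T_k,\lambda_k$ so that on the good event the right-hand side is at most $\epsilon_{k-1}/4+\epsilon_{k-1}/4=\epsilon_k$; since the returned iterate is $\wh_T=\frac{1}{T}\sum_t\w_{t}$ and $F$ is convex, Jensen transfers the same bound to $\w_k$, establishing the one-stage claim. The six bad events per stage---the two reference estimators (total probability $2\delta$) and the Bernstein deviation bounds for the function-value and Jacobian errors that feed both $A$ and $B$ (four further events of probability $\delta$)---contribute $6\delta$, so a union bound over the $K$ restarts yields the overall probability $1-6K\delta$.

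Finally, the sample complexity is $\sum_{k=1}^K m\,T_k$. With $T_k=O(\log^2(1/\delta)/(\mu\epsilon_{k-1}))$ and $\epsilon_{k-1}=\epsilon_0/2^{k-1}$ decreasing geometrically, the sum is dominated by the last stage where $\epsilon_{K-1}=\Theta(\epsilon)$, giving $\sum_k mT_k=O(\log^2(1/\delta)/(\mu\epsilon))$, which matches the stochastic strongly convex lower bound in both $\epsilon$ and $\mu$ up to the logarithmic factor. The one genuinely new ingredient beyond Theorem~\ref{theorem:restart_MSCG} is the joint boundedness-plus-small-variance provided by reference truncation, so I expect the bias-versus-range trade-off in the Bernstein step to be the part requiring the most care.
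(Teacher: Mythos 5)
Your proposal is correct and follows essentially the same route as the paper's own proof: the per-stage reduction via Lemma~\ref{lemma:RMSCG_one_stage_0} with the truncated estimators, the $A_1,A_2,B_1,B_2$ decomposition bounded in a single shot by Bernstein's inequality for martingales (using exactly the boundedness/bias/variance properties that reference truncation plus the ball constraint provide, with the bias controlled by showing truncation is rarely active), the $2\delta+4\delta$ per-stage failure accounting, the induction over $K$ restarts, and the geometric sum for the sample complexity. No gaps; the only differences are in inessential constants (e.g., splitting the bound into five $\epsilon_{k-1}/10$ pieces in the paper versus your $\epsilon_{k-1}/4+\epsilon_{k-1}/4$).
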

{\bf Remark: } Different from Theorem~\ref{theorem:restart_MSCG}, we can set the number of samples for computing the estimators of $g$ and $\nabla g$ as a constant, which gives  $\eta_k$ a geometric decreasing sequence and $T_k$ a geometric increasing sequence. 
It is notable that this complexity is nearly optimal up to a logarithmic factor  as it includes stochastic strongly convex optimization as a special case, whose lower bound is proved to be $O(1/(\mu\epsilon))$~\cite{DBLP:journals/jmlr/HazanK11a}.

\subsection{Analysis: Proof Sketch}
We present a proof sketch in this subsection  for proving the main result. 
Lemma~\ref{lemma:RMSCG_one_stage_0} is the starting point for proving our result. Different from previous section, we will bound $A$ and $B$ in a whole instead of bounding each term in the summation individually. To this end, we can decompose $A$ and $B$ into two terms and bound the two terms separately. Below, we denote $\hat \sigma_m^2 = \sigma_1^2 / m$ and $\tilde \sigma_m^2 = \sigma_0^2 / m$. 
\begin{lem}\label{lemma:truncation_bounds}
Let us consider Algorithm~\ref{algorithm:RROSC}. 
Given $\w_0$ suppose $\| \widetilde\y_0 - g(\w_0) \| \leq \nu \sigma_0$  and  $\| \widetilde\z_0 - \nabla g(\w_0) \| \leq \nu \sigma_1$ hold with high probability $1-2\delta$, where $\nu \leq \sqrt{T}$, and $\lambda = O(\max( \sqrt{ T / m }, D))$, then for any $\w$ such that $\| \w_0 - \w \| \leq D$, with probability $1-4\delta$
\begin{align*}
A_1=: &
\sum_{t=0}^{T-1} \| \nabla g(\w_t)^\top \nabla f(g(\w_t)) - \nabla g(\w_t)^\top \nabla f(\yh_{t+1}) \|^2 
\leq 
8 C_g^2 L_f^2 ( 3\log(1/\delta) + 2 ) \max( \tilde\sigma_m^2 T, C_g^2 D^2 ) \\
A_2:= &
\sum_{t=0}^{T-1} \| \nabla g(\w_t)^\top \nabla f(\yh_{t+1}) - \zh_{t+1}^\top \nabla f(\yh_{t+1}) \|^2 
\leq 8 C_f^2 ( 3\log(1/\delta) + 2 ) \max( \hat\sigma_m^2 T, L_g^2 D^2 ) 
\\
B_1:= &
\sum_{t=0}^{T-1} \langle \nabla g(\w_t)^\top \nabla f(g(\w_t)) - \nabla g(\w_t)^\top \nabla f(\yh_{t+1}) , \w_t - \w \rangle 
\\
& \qquad \qquad \qquad \qquad \qquad \qquad \qquad \qquad \qquad 
\leq 
8 C_g L_f D ( \log(1/\delta) + 1 ) \max( \tilde\sigma_m \sqrt{T}, C_g D )
\\
B_2:= &
\sum_{t=0}^{T-1} \langle \nabla g(\w_t)^\top \nabla f(\yh_{t+1}) - \zh_{t+1}^\top \nabla f(\yh_{t+1}) , \w_t - \w \rangle 
\leq 
8 C_f D ( \log(1/\delta) + 1 ) \max(\hat\sigma_m \sqrt{T}, L_g D).
\end{align*}
\end{lem}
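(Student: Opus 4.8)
The plan is to reduce all four quantities to sums of (squared) truncated estimation errors and then control those sums with a Bernstein-type martingale inequality (Freedman), exploiting that reference truncation keeps the errors uniformly bounded while inflating their conditional variance only by a constant. First I would peel off the deterministic Lipschitz factors. Since $\E\|\nabla g(\w;\xi)\|^2\le C_g^2$ gives $\|\nabla g(\w_t)\|\le C_g$, while $\nabla f$ is $L_f$-Lipschitz and $f$ is $C_f$-Lipschitz (so $\|\nabla f\|\le C_f$), one gets $A_1\le C_g^2 L_f^2\sum_t\|g(\w_t)-\yh_{t+1}\|^2$, $A_2\le C_f^2\sum_t\|\nabla g(\w_t)-\zh_{t+1}\|^2$, and by Cauchy--Schwarz the corresponding signed reductions of $B_1,B_2$ to inner products of $\nabla g(\w_t)^\top(\nabla f(g(\w_t))-\nabla f(\yh_{t+1}))$-type terms with $\w_t-\w$. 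It then suffices to analyze the single error process $\widehat\Delta_t:=\yh_{t+1}-g(\w_t)$ (the Jacobian case being verbatim with $(\sigma_1,L_g,C_f,\hat\sigma_m)$ replacing $(\sigma_0,C_g,C_gL_f,\tilde\sigma_m)$).

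The heart of the argument is two conditional-moment estimates for $\widehat\Delta_t$. Let $\Delta_t:=\y_{t+1}-g(\w_t)$ be the unbiased sample-mean error, with $\E[\Delta_t\mid\mathcal F_t]=0$ and $\E[\|\Delta_t\|^2\mid\mathcal F_t]\le\tilde\sigma_m^2$. On the assumed reference event and the ball constraint $\|\w_t-\w_0\|\le D$, I would first show $\|g(\w_t)-\widetilde\y_0\|\le C_g D+\nu\sigma_0=:\rho$, so that on both truncation branches $\|\widehat\Delta_t\|\le 2\rho+\lambda=:R$ almost surely; moreover truncation forces $\|\Delta_t\|>\lambda$, whence $\Pr(\text{trunc}\mid\mathcal F_t)\le\tilde\sigma_m^2/\lambda^2$ by Chebyshev. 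Splitting over the two branches then gives the variance bound $\E[\|\widehat\Delta_t\|^2\mid\mathcal F_t]\le\tilde\sigma_m^2+\rho^2\tilde\sigma_m^2/\lambda^2\le 2\tilde\sigma_m^2$ once $\lambda\gtrsim\rho$, which holds for $\lambda=O(\max(\sqrt{T/m},D))$. For the $B$ terms I additionally need a sharp bias bound: writing $\E[\widehat\Delta_t\mid\mathcal F_t]=-\E[\Delta_t\mathbf{1}_{\{\text{trunc}\}}\mid\mathcal F_t]+(\widetilde\y_0-g(\w_t))\Pr(\text{trunc}\mid\mathcal F_t)$ and applying Cauchy--Schwarz against the $O(\tilde\sigma_m^2/\lambda^2)$ truncation probability yields $\|\E[\widehat\Delta_t\mid\mathcal F_t]\|\le\tilde\sigma_m^2/\lambda+\rho\,\tilde\sigma_m^2/\lambda^2$, which sums to $O(\sqrt T\,\tilde\sigma_m)$ once $\lambda\gtrsim\sqrt T\,\tilde\sigma_m$.

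With these in hand the concentration is routine. For $A_1$ I would decompose $\sum_t\|\widehat\Delta_t\|^2=\sum_t\E[\|\widehat\Delta_t\|^2\mid\mathcal F_t]+\sum_t\big(\|\widehat\Delta_t\|^2-\E[\|\widehat\Delta_t\|^2\mid\mathcal F_t]\big)$; the first sum is $\le 2\tilde\sigma_m^2 T$, and the second is a martingale with per-term bound $R^2$ and summed conditional variance $\le 2R^2\tilde\sigma_m^2 T$, so Freedman gives a deviation $O\big(R\tilde\sigma_m\sqrt{T\log(1/\delta)}+R^2\log(1/\delta)\big)$, which collapses to $O\big(\log(1/\delta)\max(\tilde\sigma_m^2 T,C_g^2 D^2)\big)$ after substituting $R\lesssim\max(\sqrt T\tilde\sigma_m,C_g D)$ and using AM--GM; multiplying by $C_g^2 L_f^2$ recovers the stated bound. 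For $B_1$ I would split into the bias sum, bounded deterministically by $2D\cdot C_g L_f\sum_t\|\E[\widehat\Delta_t\mid\mathcal F_t]\|=O(C_g L_f D\sqrt T\tilde\sigma_m)$, plus a centered martingale whose per-term magnitude is $O(D\,C_g L_f R)$ and whose summed conditional variance is $\le (2D)^2 C_g^2 L_f^2\cdot 2\tilde\sigma_m^2 T$, to which Freedman applies again; both pieces fit inside $8C_g L_f D(\log(1/\delta)+1)\max(\tilde\sigma_m\sqrt T,C_g D)$. The terms $A_2,B_2$ are identical with the Jacobian constants. A union bound over these four martingale events, each at level $\delta$, on top of the assumed reference event, gives the overall probability $1-4\delta$.

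The main obstacle is precisely the pair of conditional-moment estimates: showing that reference truncation leaves the conditional variance within a constant factor of the untruncated $\tilde\sigma_m^2$, and, more delicately, that the truncation-induced bias is $O(\tilde\sigma_m^2/\lambda)$ rather than the naive $O(\tilde\sigma_m)$. The refined bias bound---obtained by Cauchy--Schwarz against the small $O(\tilde\sigma_m^2/\lambda^2)$ truncation probability---is what keeps the $B$ terms at the $\sqrt T\,\tilde\sigma_m$ scale instead of degrading to $T\tilde\sigma_m$, and is the key to avoiding the condition-number logarithm. Everything downstream is bookkeeping: a careful application of Freedman's inequality and the choice of $\lambda$ balancing the $\sqrt T\,\tilde\sigma_m$ and $C_g D$ (respectively $\hat\sigma_m\sqrt T$ and $L_g D$) regimes, which is the origin of the $\max(\cdot,\cdot)$ in every bound.
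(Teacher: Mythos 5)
Your proposal follows essentially the same route as the paper's proof: the same Lipschitz peeling of $A_1,A_2,B_1,B_2$, the same three conditional-moment estimates for the truncated errors (uniform bound via the reference event plus the ball constraint, a Chebyshev bound $\tilde\sigma_m^2/\lambda^2$ on the truncation probability, and the Cauchy--Schwarz-refined bias bound of order $\tilde\sigma_m^2/\lambda$), followed by Bernstein's inequality for martingales, the choice $\lambda\asymp\max(\tilde\sigma_m\sqrt{T},D)$, and a union bound over the four events. The only cosmetic difference is that you center the martingale and invoke Freedman plus AM--GM, whereas the paper applies a non-centered Bernstein form under the thresholds $s\sqrt{T}\le a$ and $k\le 3a$; these are equivalent, and even your implicit exchange of expectation with the $\nabla f$-Lipschitz bound in the $B_1$ bias term mirrors the step the paper itself glosses over by declaring that case ``similar'' to $B_2$.
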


The above results are proved by using Bernstein inequality for martingales~\cite{peel2013empirical}. To this end, truncations (\ref{eq:RMSCG_ref_truncation_y}) and (\ref{eq:RMSCG_ref_truncation_g}) builds $\zh_{t+1}$ and $\yh_{t+1}$ that allow $\| \zh_{t+1} - \nabla g(\w_t) \|$ and $\| \yh_{t+1} - g(\w_t) \|$ to have the bounded values, expectation and variance.
With these bounds, Bernstein inequality gives sub-Gaussian confidence results.

\begin{proof} (of Theorem \ref{theorem:RROSC_ref_convergence}). 

We use induction to prove the result. Let us first consider $k=1$. It is clear that $\|\w_0- \w_*\|\leq\sqrt{\frac{2}{\mu}(F(\w_0) - F(\w_*))}\leq D_1$.  
By replacing $\y_{t+1}$ and $\z_{t+1}$ with $\yh_{t+1}$ and $\zh_{t+1}$  in (\ref{eq:RMSCG_one_stage_0}) and plugging the results of Lemma \ref{lemma:truncation_bounds}, the following inequality holds with probability $1-6\delta$.
\begin{align}\label{eq:RMSCG_one_stage_robust}
&
F(\wh_T) - F(\w_*)
\leq 
\frac{ \epsilon_0 }{ \mu \eta T}
+ \frac{ 2 \eta }{ T } \cdot 8 C_g^2 L_f^2 ( 3 \log(1/\delta) + 2 ) \max(\sigma_0^2 T / m, C_g^2 D^2)
\nonumber\\
&
+ \frac{ 16 \eta C_f^2 ( 3\log(1/\delta) + 2 )  }{ T } \max( \sigma_1^2 T / m, L_g^2 D^2 )
+ \frac{ 8 C_g L_f D ( \log(1/\delta) + 1 )  }{ T }\max( \sigma_0 \sqrt{T / m}, C_g D )
\nonumber\\
&
+ \frac{ 1 }{ T } \cdot 8 C_f D ( \log(1/\delta) + 1 ) \max( \sigma_1 \sqrt{T / m}, L_g D )  ,
\end{align}
where we apply $\mu$-optimal strong convexity and $F(\w_0) - F(\w_*) \leq \epsilon_0$.

In order to have $F(\wh_T) - F(\w_*) \leq \epsilon_0 / 2$, we can set $m = O(1)$ as a constant and set $T = \frac{10}{\mu \eta}$ and  $\eta = O(\epsilon_0)$ (the detailed value of $\eta$ can be found in the supplement).
Then by induction, after $K = \lceil \log(\epsilon_0 / \epsilon) \rceil$ repeated calls of Algorithm~\ref{algorithm:ROSC} with probability $1 - 6 K \delta$, $F(\w_K) - F_*\leq \epsilon$. As a result, the the sample complexity is 
\begin{align*}
m_{tot}
&= \sum_{k=1}^K mT_k
= \sum_{k=1}^K O\bigg( \frac{ 1}{\mu \eta_k}\bigg )
= \sum_{k=1}^K O\bigg( \frac{ \log^2(1/\delta) }{ \mu \epsilon_k}\bigg )
= O\bigg(\frac{\log^2(1/\delta)}{\mu \epsilon}\bigg)  ,
\end{align*}
where $\eta_k = O(\epsilon_{k-1}/\log^2(1/\delta))$ and $\epsilon_k = \epsilon_{k-1} / 2$.
\end{proof}

\section{Experiments}

In this section, we consider an application of the proposed algorithm in machine learning and present some experimental results. Let us consider the problem of robust learning from multiple distributions~\cite{NIPS2017_7056,DBLP:conf/aaai/QianZTJSL19}.  In particular, suppose there are $m$ data sources with $\P_i, i=1,\ldots, m$ denoting the data distribution of each source, which are not necessarily identical. Denote by $\mathcal L_i(\w) = \E_{\xi\sim \P_i}\ell(\w; \xi)$ the expected loss of data from the $i$-th source, where $\xi$ denotes a random data. Taking into account the inconsistency between $\P_i$ and out of sample distribution, we can formulate a distributionally robust optimization  problem for learning a predictive model $\w$~\cite{NIPS2017_7056,DBLP:conf/aaai/QianZTJSL19}: 
\begin{align*}
\min_{\w\in \R^d}\max_{\p \in\Delta_m}\sum_{i=1}^m p_i \mathcal L_i(\w) - h(\p) + r(\w)
\end{align*}
where $\Delta_m$ is an $m$-dimensional simplex and $h(\p)$ is a convex regularizer of $\p$. By choosing a KL divergence regularization for $\p$, i.e., $h(\p) =\lambda KL(\p, \mathbf 1/m)= \lambda\sum_i p_i \log(mp_i)$, the above problem is equivalent to the following compositional problem: 
\begin{align}\label{eqn:dros}
\min_{\w\in\R^d}\lambda \log\left(\sum_{i=1}^m\exp\left(\frac{\E_{\xi\sim\P_i}[\ell(\w; \xi)]}{\lambda}\right)\right) + r(\w)
\end{align}
which is an instance of~(\ref{eqn:op}) by setting $f(\u) = \lambda \log(\sum_{i=1}^m \exp(u_i/m)): \R^m\rightarrow\R$ and $\E_{\xi}[g(\w; \xi)] = (\E_{\xi\sim\P_1}[\ell(\w; \xi)], \ldots, \E_{\xi\sim\P_m}[\ell(\w; \xi)])^{\top}:\R^d\rightarrow\R^m$. For our experiments, we will consider square loss $\ell(\w; \xi) = (\w^{\top}\x - y)^2$ for a feature-label pair $\xi = (\x, y)$. 
The following lemma shows that~(\ref{eqn:dros}) is a $\mu$-optimal strongly convex function. 
\begin{lem}\label{lemma:square_loss_is_scvx}
Suppose $E_{\xi\sim \mathbb{P}_i} [\x \x^T] \neq 0$,   $\ell(\w; \xi) = (\w^{\top}\x - y)^2$  is a square loss and the epigraph of $r(\w)$ is a polyhedral, the objective in~(\ref{eqn:dros}) is a $\mu$-optimal strongly convex function for some $\mu>0$. 
\label{obj_mu_strongly_convex}
\end{lem}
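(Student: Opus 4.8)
The plan is to recast $F$ into the canonical composite form $F(\w)=\Phi(A\w)+r(\w)$, where $\Phi$ is strongly convex on bounded sets and $r$ is polyhedral, and then appeal to the classical error-bound / quadratic-growth theory for such functions. First I would write each inner expected loss as a convex quadratic: since $\ell(\w;\xi)=(\w^\top\x-y)^2$, we get $\mathcal{L}_i(\w):=\E_{\xi\sim\mathbb{P}_i}[\ell(\w;\xi)]=\w^\top A_i\w-2b_i^\top\w+c_i$ with $A_i=\E_{\xi\sim\mathbb{P}_i}[\x\x^\top]\succeq 0$, $b_i=\E_{\xi\sim\mathbb{P}_i}[y\x]$ and $c_i=\E_{\xi\sim\mathbb{P}_i}[y^2]$. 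The assumption $A_i\neq 0$ makes each $A_i$ nonzero, and moreover $b_i\in\mathrm{range}(A_i)$, because $A_iv=0$ forces $\E[(v^\top\x)^2]=0$, hence $v^\top\x=0$ almost surely and $v^\top b_i=\E[y\,v^\top\x]=0$. Factoring $A_i=B_i^\top B_i$ then lets me absorb the linear term and write $\mathcal{L}_i(\w)=\|B_i\w-e_i\|^2+c_i'$, so that $\mathcal{L}_i$ depends on $\w$ only through $B_i\w$. Stacking $A=[B_1;\dots;B_m]$ and setting $\Phi(u)=\lambda\log\sum_i\exp((\|u_i-e_i\|^2+c_i')/\lambda)$ gives $f(g(\w))=\Phi(A\w)$ and therefore $F(\w)=\Phi(A\w)+r(\w)$.

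Next I would show $\Phi$ is strongly convex on every compact set. Writing $q_i(u)=\|u_i-e_i\|^2+c_i'$ and letting $p_i(u)=\exp(q_i/\lambda)/\sum_j\exp(q_j/\lambda)$ be the softmax weights, the Hessian decomposes as $\nabla^2\Phi=\sum_i p_i\nabla^2 q_i+\tfrac1\lambda(\text{a weighted covariance of the }\nabla q_i)$. The covariance term is positive semidefinite, and since each $q_i$ is block-separable with $\nabla^2 q_i=2E_{ii}$ (twice the identity on block $i$), the first term equals $2\,\mathrm{diag}(p_1I,\dots,p_mI)\succeq 2(\min_i p_i(u))I$. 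Because the $p_i$ are continuous and strictly positive, $\min_i p_i$ is bounded away from $0$ on any compact set, so $\Phi$ is strongly convex there; its gradient is Lipschitz because $f\circ g$ is $L$-smooth by Assumption~\ref{assumption:composite_problem}. Monotonicity and convexity of $f$ together with convexity of the $\mathcal{L}_i$ and of $r$ also confirm that $F$ is convex.

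With $F(\w)=\Phi(A\w)+r(\w)$, $\Phi$ strongly convex with Lipschitz gradient on compacts, and $r$ polyhedral, $F$ lies exactly in the class for which the Luo--Tseng error bound holds; for convex $F$ this error bound is equivalent to the quadratic growth condition, yielding $F(\w)-F_*\geq\tfrac\mu2\,\mathrm{dist}(\w,\mathcal{W}_*)^2$ with $F_*=\min_\w F(\w)$ and $\mathcal{W}_*$ the solution set. This is precisely $\mu$-optimal strong convexity, which is the desired conclusion.

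The hard part, and the reason the polyhedrality of $r$ is essential, is bridging the gap between strong convexity of $\Phi$ \emph{along $A$} and growth in the full variable $\w$: since each $A_i$ is only assumed nonzero, $A$ may have a nontrivial kernel and $\Phi(A\w)$ is flat along $\ker A$, so the quadratic lower bound in $\|A(\w-\w_*)\|$ must be upgraded to one in $\|\w-\w_*\|$. It is exactly the polyhedral (Hoffman-type) structure of $r$ and of the resulting solution set that effects this upgrade, and this is the step that the Luo--Tseng machinery handles rather than an elementary estimate. A related subtlety is that the modulus $\mu$ is only uniform on bounded regions: along a kernel direction on which $r$ grows merely linearly the global quadratic bound can fail, so $\mu$ should be understood relative to the bounded region in which the iterates live, which is guaranteed by the ball constraint $\|\w-\w_0\|\leq D$ in Algorithm~\ref{algorithm:ROSC}, with convexity used to extend a neighborhood bound to that region.
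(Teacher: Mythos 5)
Your proposal is correct, and it takes a genuinely different route from the paper's proof. The paper exploits the minimax structure: it writes $F(\w)=\max_{\p\in\Delta_m}\phi(\w,\p)$, fixes the dual variable at a saddle point $(\w_*,\p_*)$, notes that for this fixed $\p_*$ the function $\phi(\cdot,\p_*)$ is a single convex quadratic $\|A\w\|^2-\b^{\top}\w+c$ plus the polyhedral $r$ (with $A^{\top}A=\sum_i p_i^* A_i^{\top}A_i$), invokes Lemma 2 of Gong--Ye \cite{gong2014linear} to get quadratic growth of $\phi(\cdot,\p_*)$, and then transfers it to $F$ through $\phi(\w,\p_*)\le\phi(\w,\hat p(\w))=F(\w)$ and $\phi(\w_*,\p_*)=F(\w_*)$. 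You instead keep the log-sum-exp aggregation intact: after completing the square in each $\mathcal{L}_i$ --- using $b_i\in\mathrm{range}(A_i)$, an observation the paper never needs --- you express $F(\w)=\Phi(A\w)+r(\w)$ and prove, via the softmax Hessian decomposition, that $\Phi$ itself is strongly convex on compact sets, then apply the Luo--Tseng error-bound machinery directly to this composite. The trade-off: the paper's dual-fixing trick needs only the most classical error bound (convex quadratic plus polyhedral) and no Hessian computation, whereas your route needs the error bound in the more general ``strongly convex on compacts composed with a linear map'' form; in exchange, your argument yields growth directly in terms of $\mathrm{dist}(\w,\mathcal{W}_*)$ for the solution set $\mathcal{W}_*$ of $F$, while the paper's transfer step is delicate --- Gong--Ye delivers growth relative to the distance to $\arg\min_{\w}\phi(\w,\p_*)$, a set that contains but need not equal $\arg\min_{\w}F(\w)$, and the paper silently identifies the two. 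Finally, your closing caveat is genuine and applies equally to the paper: global quadratic growth can fail for this function class (e.g.\ $w_1^2+|w_2|$ grows only linearly along the second coordinate, so no single $\mu>0$ works globally), hence in both proofs the modulus $\mu$ should be understood as attached to a bounded region or sublevel set containing the iterates; the paper's unqualified ``for some $\mu>0$'' inherits this restriction from Gong--Ye's lemma exactly as your version does from Luo--Tseng.
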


Next we present our experimental setup and results.
Our main purpose is to demonstrate that our proposed algorithm is robust enough to handle compositional problems under heavy-tailed noise.
To this end, we compare our proposed RMSCG (non-robust option) and RROSC with two relevant algorithms, i.e., Accelerated Stochastic Compositional Proximal Gradient (ASC-PG) \cite{DBLP:journals/jmlr/WangLF17} and Restarted Composite Incremental Variance Reduction (RCIVR) \cite{DBLP:conf/nips/ZhangX19}, which are also designed for compositional problems.

Our experiments are performed on the E2006 dataset\footnote{\url{https://www.csie.ntu.edu.tw/~cjlin/libsvmtools/datasets/regression.html\#E2006-tfidf}.} with some modification.
In order to produce the heavy-tailed noise, we add different categories of noise $\varepsilon$ on the original dataset, particularly, on the label $y$, as in \cite{DBLP:conf/uai/XuZYZJY19}.
They include
1) Pareto noise: we draw a noise $\varepsilon$ from a Pareto distribution with the tail parameter $1/\beta \in \{ 1.01, 2.01 \}$ and then re-center them to get a zero mean;
2) student-t noise: we draw a noise $\varepsilon$ from a Student's t-distribution with degrees of freedom $1/\beta \in \{ 2.5, 5 \}$;
3) sparse noise: we generate a random sparse vector from $[ -\beta, \beta ]$ with $\beta \in \{ 5, 10 \}$, which is added to the output $\y$ with Gaussian noise.
We then generate the new label $y$ by using the six kinds of noise (three different types of noise and two noise levels, respectively) at random, which makes six various distributions on the dataset.

\begin{figure}
\centering
  {\includegraphics[width=0.48\textwidth]{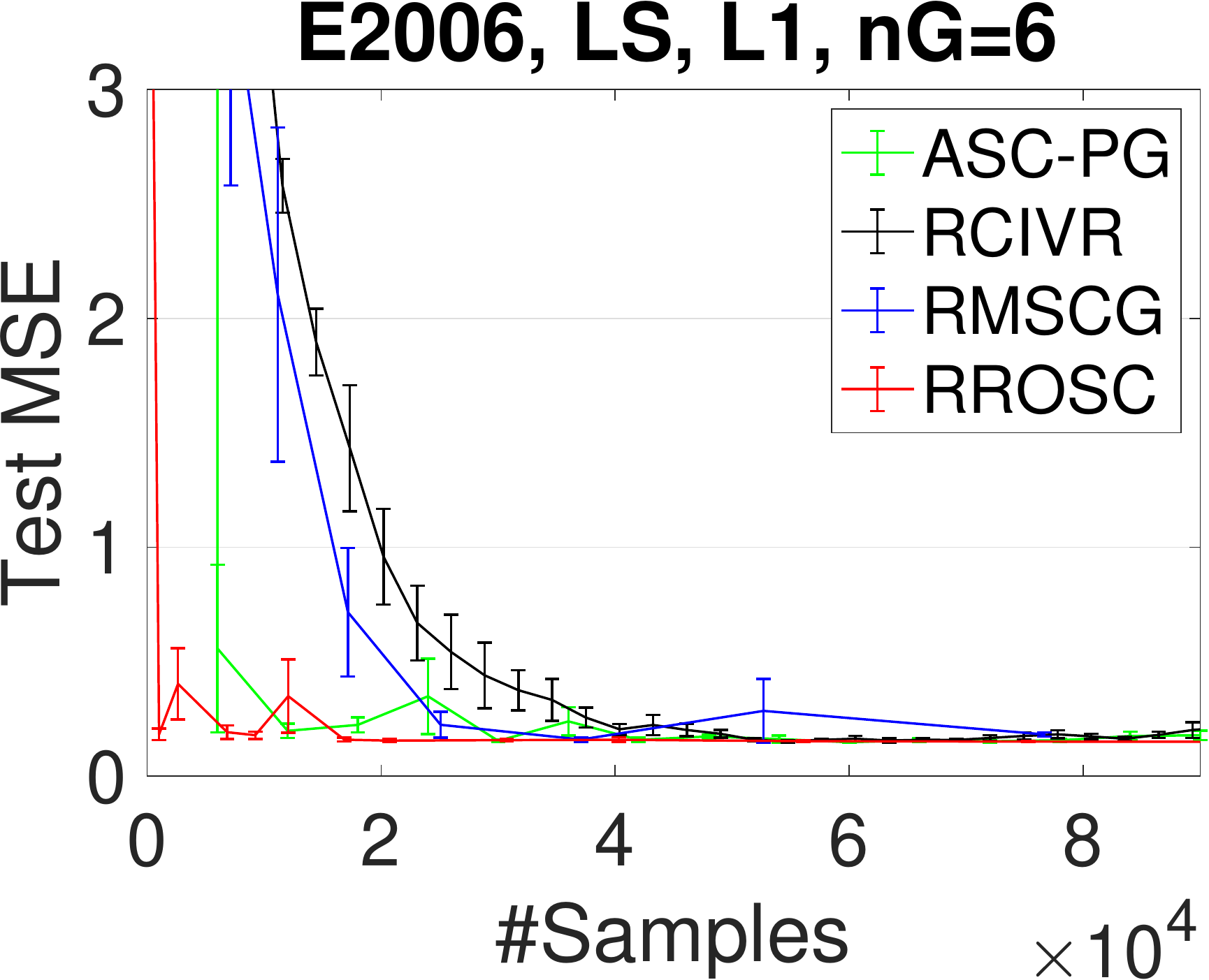}}
  {\includegraphics[width=0.48\textwidth]{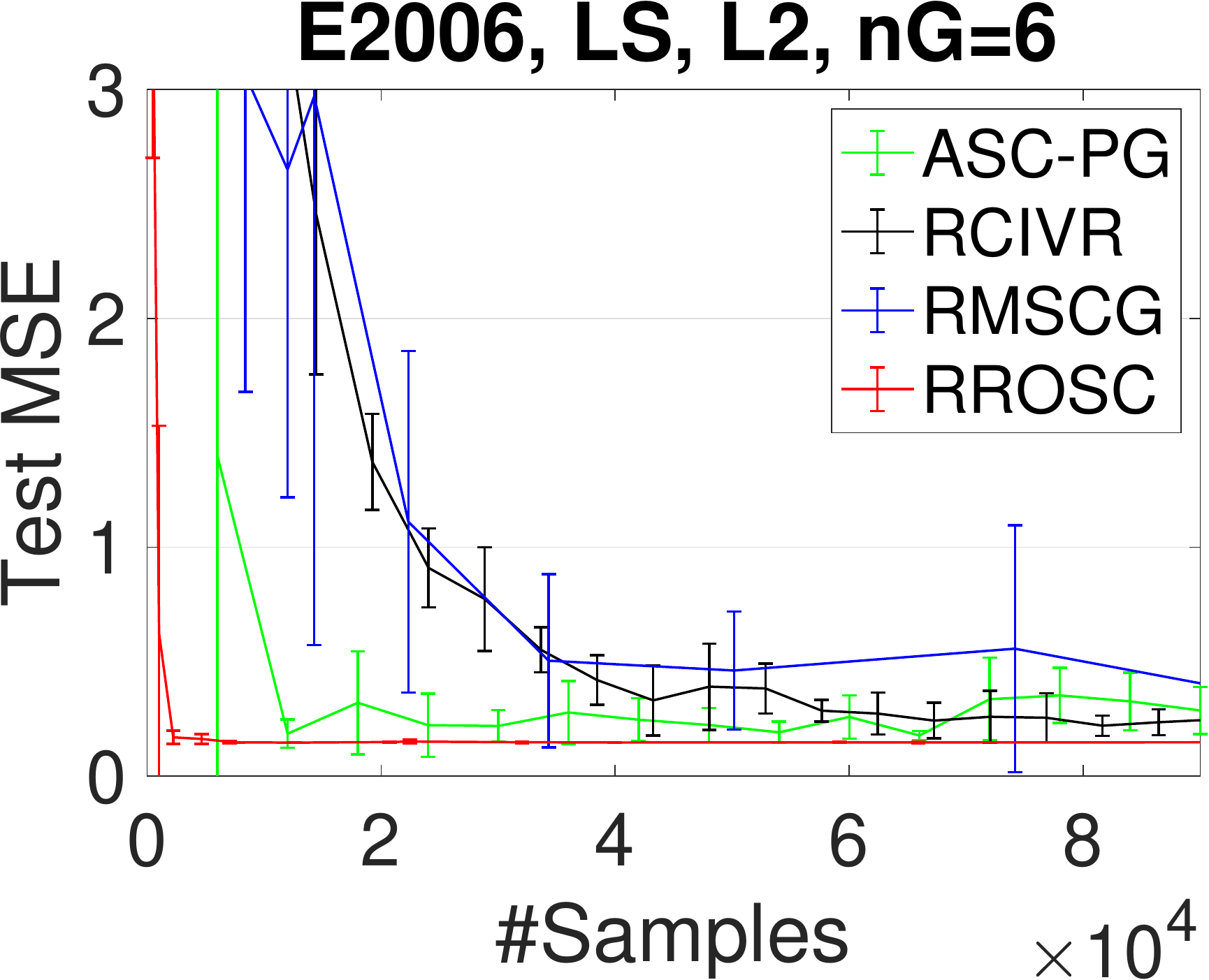}}
\caption{Test MSE error bars: Square loss with $L_1$ regularizer (left) and $L_2$ regularizer (right). X axis represents the number of samples.}
\label{fig:E2006_errerbar_L1_L2}
\end{figure}

To compare the baseline algorithms, we use mean square error (MSE) as the evaluation metric.
We use the original training and testing set for training and testing.
To select the hyper-parameters of the four algorithms and the regularizer parameter $\lambda_{reg}$, we randomly select 10\% of training data as the validation set.
We repeat the experiment for five trials with different random noise and collect the test MSE of each algorithm.
We report the error bar results of four algorithms on two problems in Figure \ref{fig:E2006_errerbar_L1_L2} to show the average MSE with its standard deviation.

For the regularizer $r(\w)$, we employ two commonly used ones, $L_1$ regularizer (i.e., $\lambda_{reg} \| \w \|_1$) and $L_2$ regularizer (i.e., $\frac{ \lambda_{reg} }{ 2 } \| \w \|_2^2$).
We choose the value for $\lambda_{reg}$ from the range $\{ 10^{-5}, 10^{-4}, ..., 10^{5} \}$.
Step size of each algorithm is selected from the range $\{ 10^{-5}, 10^{-4}, ..., 10^{5} \}$ on the validation set.
The radius $D$ of RROSC is set to $100$.

As clearly shown in Figure \ref{fig:E2006_errerbar_L1_L2}, our RROSC is more robust than other baselines.
First, our RROSC achieves faster convergence in terms of test MSE than others.
This supports our theoretical analysis that RROSC has the nearly optimal convergence for stochastic convex compositional problems.
Second, the error bars imply that ASC-PG, RCIVR and RMSCG suffer from the large deviation as they converge, while RROSC enjoys a very stable and robust convergence.
This verifies the effect of the truncation technique and our high probability convergence under heavy-tailed noise.

\section{Conclusion}

We have proposed a single-trial stochastic algorithm for convex compositional problems with heavy-tailed noise.
We employed the truncation technique in our proposed algorithm to achieve sub-Gaussian confidence bounds by Bernstein inequality.
For $\mu$-optimal strongly convex problems, the total sample complexity of our algorithm is $\tilde O(1 / (\mu \epsilon))$.
To the best of our knowledge, this is the first one to establish nearly optimal sub-Gaussian confidence bounds for compositional problems up to a logarithmic factor.

\bibliographystyle{plain}
\bibliography{all,ref}

\newpage
\appendix

\section{Proof of Lemma \ref{lemma:median_of_mean} }
\begin{proof}
The proof of lemma \ref{lemma:median_of_mean} is two fold. First scenario is the median of means for scalar random variable. Then for multivariate case, we have the robust mean estimation procedure from \cite{hsu2016loss}. Here we present the proof for both. \\

\textbf{Case 1:} \textit{(median of means)}\\
Since $\bar \mu_i = \frac{1}{m} \sum_{i=1}^{m} X_i$ with $X_i$ having mean $\mu$ and finite variance $\sigma^2$, then $\E[\bar \mu_i] = \mu$, $\bar \sigma^2= \frac{\sigma^2}{m}$. \\
By Chebyshev's inequality, $\P[|\bar \mu_i - \mu| \leq k_1 \frac{\sigma}{m}] \geq 1 - \frac{1}{k_1^2}$. Let $k_1 = 2\sqrt{n/k}$. Then we have,
\begin{align}
    \P\Big[|\mu_i - \mu| \leq \sqrt{\frac{4\sigma^2 k }{n}} \Big] \geq 1 - \frac{k}{4n} \geq \frac{3}{4}
\end{align}
Then apply Proposition 5 in \cite{hsu2016loss}, we have with $k= 8 \log(1/\delta)$, 
\begin{align}
    \P\Big[|\bar \mu - \mu|^2 \leq \frac{32\sigma^2 \log(1/\delta) }{n} \Big] \geq 1 - \delta
\end{align}


\textbf{Case 2:} \textit{(Robust mean estimator)}\\
Similar to the scalar case, we denote $\bar \bmu_i = \frac{1}{m} \sum_{i=1}^{m} X_i$ with $X_i$.
and $\E[\bar \bmu_i] = \bmu$. The variance is defined as the summation of variance of each element. $\bar \sigma^2= \sum_{i=1}^{d}\frac{\sigma_i^2}{m} = \frac{\sigma^2}{m}$, which is still finite.

By Chebyshev's inequality for finite dimensional vectors, we have,
\begin{align}
    &\P[\|\bar\bmu_i - \bmu \| \leq  k_2 \|\bar\sigma\| ] > 1 - \frac{1}{k_2^2}
\\\Rightarrow    
    &\P\Big[\|\bar\bmu_i - \bmu\| \leq \sqrt{\frac{3\sigma^2 k }{n}} \Big] \geq 1 - \frac{k}{3n} \geq \frac{2}{3}
\end{align}
where we plug in $k_2 = \sqrt{3n/k}$. 
If we denote $\varepsilon =  \sqrt{\frac{3\sigma^2 k }{n}}$, and apply Proposition 9 from \cite{hsu2016loss}, we will have,
\begin{align}
        \P\big[\|\bar \bmu_{i^*} - \bmu\| \leq 3\varepsilon \big] \geq 1 - e^{-\frac{k}{18}}
\end{align}
Then plug in $k = 18 \log(1/\delta)$,  $\varepsilon =  \sqrt{\frac{3\sigma^2 k }{n}}$ and denote $\bar\bmu = \bar \bmu_{i^*}$, we have,
\begin{align}
    \P\Big[\|\bar \bmu - \bmu\|^2 \leq 
    \frac{486\sigma^2 \log(1/\delta) }{n} \Big] \geq 1 - \delta   
\end{align}

That concludes the proof of lemma \ref{lemma:median_of_mean}.
\end{proof}

\section{Proofs in Section \ref{section:MSCG_non_robust} }

\subsection{Proof of Lemma \ref{lemma:MSCG_convergence} }

Before proving Lemma \ref{lemma:MSCG_convergence}, Theorem \ref{theorem:restart_MSCG} and Theorem \ref{theorem:robust_restart_MSCG}, we first provide two useful lemmas, which are proved in the subsequent subsections.
The following lemma shows that, under Assumption \ref{assumption:composite_problem}, $f(g(\w))$ has Lipschitz continuous gradients.

\begin{lem}\label{lemma:composite_smooth}
Suppose Assumption \ref{assumption:composite_problem} holds.
$F(\w)$ is $L$-smooth with $L = C_f L_g + C_g^2 L_g$.
\end{lem}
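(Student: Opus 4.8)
The plan is to prove that $F(\w) = f(g(\w)) + r(\w)$ has a smooth part $f(g(\w))$ with Lipschitz-continuous gradient, where the Lipschitz constant is $L = C_f L_g + C_g^2 L_g$. Since $r(\w)$ is the nonsmooth regularizer, ``$F(\w)$ is $L$-smooth'' should be understood as a statement about its differentiable component $f(g(\w))$. First I would compute the gradient of the composition by the chain rule, obtaining $\nabla (f \circ g)(\w) = \nabla g(\w)^\top \nabla f(g(\w))$, and then bound the difference $\|\nabla(f\circ g)(\w_1) - \nabla(f\circ g)(\w_2)\|$ for arbitrary $\w_1, \w_2$.

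The central step is a standard add-and-subtract decomposition. I would write
\begin{align*}
\nabla g(\w_1)^\top \nabla f(g(\w_1)) - \nabla g(\w_2)^\top \nabla f(g(\w_2))
&= \nabla g(\w_1)^\top \big(\nabla f(g(\w_1)) - \nabla f(g(\w_2))\big) \\
&\quad + \big(\nabla g(\w_1) - \nabla g(\w_2)\big)^\top \nabla f(g(\w_2)),
\end{align*}
and bound each of the two resulting terms by the triangle inequality. For the first term I would use that $\|\nabla g(\w_1)\| \le C_g$ (which follows from $\E\|\nabla g(\w;\xi)\|^2 \le C_g^2$ by Jensen, as noted in the Remark after Assumption~\ref{assumption:composite_problem}), together with the $L_f$-Lipschitz continuity of $\nabla f$ and the $C_g$-Lipschitz continuity of $g$; this gives a bound of the form $C_g \cdot L_f \cdot C_g \|\w_1 - \w_2\| = C_g^2 L_f \|\w_1 - \w_2\|$. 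For the second term I would use the $C_f$-Lipschitz continuity of $f$, which bounds $\|\nabla f(g(\w_2))\| \le C_f$, together with the $L_g$-Lipschitz continuity of $\nabla g$, yielding $C_f \cdot L_g \|\w_1 - \w_2\|$. Adding these produces a Lipschitz constant $C_f L_g + C_g^2 L_f$.

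I expect the main obstacle to be reconciling the constant: the naive decomposition above produces $C_f L_g + C_g^2 L_f$, whereas the stated result is $L = C_f L_g + C_g^2 L_g$. This suggests either a typo in the statement (with $L_f$ intended in the second summand) or a slightly different grouping of the Lipschitz constants; I would double-check which Lipschitz property is being invoked for the factor $\|\nabla f(g(\w_1)) - \nabla f(g(\w_2))\|$, since the chain $g$ is $C_g$-Lipschitz feeds into $\nabla f$ being $L_f$-Lipschitz. The clean way to present the argument is to bound $\|\nabla f(g(\w_1)) - \nabla f(g(\w_2))\| \le L_f \|g(\w_1) - g(\w_2)\| \le L_f C_g \|\w_1 - \w_2\|$ and keep careful track of which constant multiplies which, then match against the cited constant from \cite{DBLP:journals/mp/WangFL17}. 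Once the two terms are combined and the coefficient of $\|\w_1 - \w_2\|$ is read off, the Lipschitz gradient bound, and hence $L$-smoothness of the differentiable part of $F$, follows immediately.
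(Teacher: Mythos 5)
Your proposal is correct and follows essentially the same route as the paper's proof: the same add-and-subtract decomposition of $\nabla g(\w_1)^\top \nabla f(g(\w_1)) - \nabla g(\w_2)^\top \nabla f(g(\w_2))$ into two terms, bounded respectively via the $L_g$-Lipschitz continuity of $\nabla g$ with $\|\nabla f\| \le C_f$, and via the $L_f$-Lipschitz continuity of $\nabla f$ chained with the $C_g$-Lipschitz continuity of $g$ and $\|\nabla g\| \le C_g$.

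Your suspicion about the constant is also well founded: the correct constant is $C_f L_g + C_g^2 L_f$, exactly as you derive, and the stated $L = C_f L_g + C_g^2 L_g$ is a typo. Indeed, the paper's own proof contains this slip at its step $(b)$, where the term $\|\nabla g(\w')\|\cdot\|\nabla f(g(\w)) - \nabla f(g(\w'))\|$ is bounded by $C_g L_g \|g(\w) - g(\w')\|$; the Lipschitz constant of $\nabla f$ is $L_f$, not $L_g$, so this line should read $C_g L_f \|g(\w) - g(\w')\| \le C_g^2 L_f \|\w - \w'\|$. The error is inconsequential for the rest of the paper (only the symbolic value of $L$ changes, and all results are stated in terms of $L$ or the condition number $\kappa = L/\mu$), but your version of the constant is the right one.
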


The following lemma proves the one-step result for Algorithm \ref{algorithm:MSCG}, which is further used to prove Lemma \ref{lemma:MSCG_convergence}, Theorem \ref{theorem:restart_MSCG} and Theorem \ref{theorem:robust_restart_MSCG}.

\begin{lem}
\label{lemma:one_step_recursion_MSCG}
Suppose Assumption \ref{assumption:composite_problem} holds. 
Let $\eta_t \leq 1/(2L)$.
For Algorithm \ref{algorithm:MSCG}, we have
\begin{align}\label{eq:one_step_recursion_MSCG}
\E[ F(\w_{t+1}) -  F(\w) ]
\leq &
\frac{1}{2\eta_t} \Big( \E[\| \w - \w_t \|^2] - \E[\| \w- \w_{t+1} \|^2] \Big) 
\nonumber\\
&
+ \frac{\mu}{4} \E[ \| \w - \w_t \|^2 ] 
+ 2\eta_t C_f^2 \frac{\sigma_1^2}{m_t}
+ 2\eta_t C_g^2 L_f^2 \frac{\sigma_0^2}{m_t}
+ \frac{C_g^2 L_f^2 \sigma_0^2}{\mu m_t}
\end{align}
\end{lem}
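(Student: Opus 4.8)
The plan is to treat this as a standard one-step analysis of the stochastic proximal-gradient update in Step~\ref{algorithm:MSCG:line:update}, where the only nonstandard feature is that the surrogate gradient $\z_{t+1}^\top\nabla f(\y_{t+1})$ is a \emph{biased} estimate of the true gradient $\nabla h(\w_t):=\nabla g(\w_t)^\top\nabla f(g(\w_t))$ of $h:=f\circ g$. First I would use the $\tfrac{1}{\eta_t}$-strong convexity of the proximal subproblem defining $\w_{t+1}$: since $\w_{t+1}$ is its minimizer, for every $\w$,
$$\langle \z_{t+1}^\top\nabla f(\y_{t+1}),\, \w_{t+1}-\w\rangle + r(\w_{t+1}) - r(\w) \leq \frac{1}{2\eta_t}\left(\|\w-\w_t\|^2 - \|\w_{t+1}-\w_t\|^2 - \|\w-\w_{t+1}\|^2\right).$$
Next I would invoke $L$-smoothness of $h$ (Lemma~\ref{lemma:composite_smooth}) to write $h(\w_{t+1}) \leq h(\w_t) + \langle \nabla h(\w_t), \w_{t+1}-\w_t\rangle + \tfrac{L}{2}\|\w_{t+1}-\w_t\|^2$, and convexity of $h$ to bound $h(\w_t) + \langle \nabla h(\w_t), \w-\w_t\rangle \leq h(\w)$. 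Writing the gradient error as $\Delta_t := \z_{t+1}^\top\nabla f(\y_{t+1}) - \nabla h(\w_t)$ and combining these three ingredients yields, after cancellation,
$$F(\w_{t+1}) - F(\w) \leq \langle \Delta_t, \w-\w_t\rangle - \langle \Delta_t, \w_{t+1}-\w_t\rangle + \frac{1}{2\eta_t}\left(\|\w-\w_t\|^2 - \|\w-\w_{t+1}\|^2\right) + \left(\tfrac{L}{2}-\tfrac{1}{2\eta_t}\right)\|\w_{t+1}-\w_t\|^2.$$

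The cross term is then absorbed by Young's inequality, $-\langle \Delta_t, \w_{t+1}-\w_t\rangle \leq \eta_t\|\Delta_t\|^2 + \tfrac{1}{4\eta_t}\|\w_{t+1}-\w_t\|^2$, after which the residual coefficient on $\|\w_{t+1}-\w_t\|^2$ becomes $\tfrac{L}{2}-\tfrac{1}{4\eta_t}$, which is nonpositive precisely because $\eta_t \leq \tfrac{1}{2L}$; hence that term can be discarded. What remains is to take expectations and control $\E[\langle \Delta_t,\w-\w_t\rangle]$ and $\E[\eta_t\|\Delta_t\|^2]$.

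The crux, and the only place the compositional structure bites, is the bias of $\Delta_t$. I would split it as $\Delta_t = (\z_{t+1}-\nabla g(\w_t))^\top\nabla f(\y_{t+1}) + \nabla g(\w_t)^\top\big(\nabla f(\y_{t+1}) - \nabla f(g(\w_t))\big)$. Conditioning on the iterates through step $t$, the mini-batches $\calS_1$ (for $\y_{t+1}$) and $\calS_2$ (for $\z_{t+1}$) are independent, so the conditional expectation of the first piece factorizes and vanishes, $\E[(\z_{t+1}-\nabla g(\w_t))^\top\nabla f(\y_{t+1})\mid\w_t] = \E[\z_{t+1}-\nabla g(\w_t)\mid\w_t]^\top\E[\nabla f(\y_{t+1})\mid\w_t] = 0$, since the sample mean $\z_{t+1}$ is unbiased. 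Thus the only surviving conditional bias is the second piece, whose norm I bound using $\|\nabla g(\w_t)\|\leq C_g$ (from $\E\|\nabla g(\w;\xi)\|^2\leq C_g^2$ and Jensen), the $L_f$-Lipschitzness of $\nabla f$, and $\E[\|\y_{t+1}-g(\w_t)\|\mid\w_t]\leq \sigma_0/\sqrt{m_t}$. A Cauchy--Schwarz step followed by Young's inequality $ab\leq \tfrac{\mu}{4}a^2 + \tfrac{1}{\mu}b^2$ then converts $\E[\langle\Delta_t,\w-\w_t\rangle]$ into exactly $\tfrac{\mu}{4}\E\|\w-\w_t\|^2 + \tfrac{C_g^2L_f^2\sigma_0^2}{\mu m_t}$, producing two of the target terms.

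For the variance term I would bound $\|\Delta_t\|^2 \leq 2\|(\z_{t+1}-\nabla g(\w_t))^\top\nabla f(\y_{t+1})\|^2 + 2\|\nabla g(\w_t)^\top(\nabla f(\y_{t+1})-\nabla f(g(\w_t)))\|^2$ and apply $\|\nabla f\|\leq C_f$, $\|\nabla g(\w_t)\|\leq C_g$, the $L_f$-Lipschitzness of $\nabla f$, and the mini-batch variance identities $\E[\|\z_{t+1}-\nabla g(\w_t)\|^2\mid\w_t]\leq \sigma_1^2/m_t$, $\E[\|\y_{t+1}-g(\w_t)\|^2\mid\w_t]\leq \sigma_0^2/m_t$, giving $\E[\|\Delta_t\|^2\mid\w_t]\leq 2C_f^2\sigma_1^2/m_t + 2C_g^2L_f^2\sigma_0^2/m_t$ and hence the two $\eta_t$-weighted terms. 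Taking full expectations and assembling the pieces reproduces the claimed inequality. The main obstacle is this bias control: because $\nabla f$ is nonlinear, $\z_{t+1}^\top\nabla f(\y_{t+1})$ is not unbiased, and the independence of the two mini-batches is exactly what kills the dominant cross term, leaving a $\sigma_0/\sqrt{m_t}$-order bias that the $\tfrac{\mu}{4}$-Young split can afford.
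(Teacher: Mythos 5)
Your proposal is correct and follows the same skeleton as the paper's proof: prox-optimality of $\w_{t+1}$, $L$-smoothness and convexity of $f\circ g$ (Lemma~\ref{lemma:composite_smooth}), Young absorption of the $\|\w_{t+1}-\w_t\|^2$ term under $\eta_t\le \frac{1}{2L}$, and then a conditional bias/variance split finished by the $\frac{\mu}{4}$-Young step. The one genuine difference is the pivot used to split the gradient error: you write $\Delta_t = (\z_{t+1}-\nabla g(\w_t))^\top\nabla f(\y_{t+1}) + \nabla g(\w_t)^\top\big(\nabla f(\y_{t+1})-\nabla f(g(\w_t))\big)$, whereas the paper pivots at $\z_{t+1}^\top\nabla f(g(\w_t))$, pairing the Jacobian error with the \emph{deterministic} vector $\nabla f(g(\w_t))$ and the $\nabla f$-error with the \emph{random} matrix $\z_{t+1}$. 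This changes where the independence of the two mini-batches $\calS_1,\calS_2$ enters. In the paper, the Jacobian-error term has zero conditional mean for free (deterministic multiplier, term $B$), but independence is then needed twice, to factor expectations of products such as $\E\big[\|\z_{t+1}\|^2\,\|\nabla f(g(\w_t))-\nabla f(\y_{t+1})\|^2\big]$ in the variance term $C$ and the analogous product in the bias term $D$. In your split, independence is invoked exactly once — to annihilate the conditional mean of the cross term $(\z_{t+1}-\nabla g(\w_t))^\top\nabla f(\y_{t+1})$ — and every remaining bound only uses the pointwise bound $\|\nabla f(\cdot)\|\le C_f$ and the deterministic bound $\|\nabla g(\w_t)\|\le C_g$, so no factorization of expectations of products of dependent-looking quantities is ever required. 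Both routes yield identical constants, namely $2\eta_t C_f^2\sigma_1^2/m_t + 2\eta_t C_g^2L_f^2\sigma_0^2/m_t$ for the variance and $\frac{\mu}{4}\E\|\w-\w_t\|^2 + \frac{C_g^2L_f^2\sigma_0^2}{\mu m_t}$ for the bias; yours is arguably marginally cleaner in its deployment of independence, while the paper's gets its zero-bias step without any appeal to independence at all.
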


\begin{proof}

Let $\w = \w_t^*$, where $\w_t^*$ is the closest point in to $\w_t$ in the optimal set. 
Hence by the $\mu$-optimal strongly convexity of $F(\w)$, we have $F(\w_t) - F(\w_t^*) \geq \frac{\mu}{2} \| \w_t - \w_t^* \|^2$.

For (\ref{eq:one_step_recursion_MSCG}) of Lemma \ref{lemma:one_step_recursion_MSCG}, we let $\eta_t = \eta \leq 1/(2L)$, plug in $\w = \w_t^*$ and summation over over $t = 0, ..., T-1$ as follows
\begin{align}
\label{eq:sum_all_one_step_MSCG}
\sum_{t=0}^{T-1} \E[ F(\w_{t+1}) - F(\w_t^*) ]
\leq &
\frac{1}{2\eta} \sum_{t=0}^{T-1} \Big( \E [\| \w_t^* - \w_t \|^2] - \E [\| \w_{t}^* - \w_{t+1} \|^2] \Big) 
+ \sum_{t=0}^{T-1} \frac{\mu}{4} \E [ \|\w_t^* - \w_t\|^2 ]
\nonumber\\ 
&
+ \Big( 2\eta C_f^2 \sigma_1^2 
+ 2\eta C_g^2 L_f^2 \sigma_0^2 
+ \frac{ C_g^2 L_f^2 \sigma_0^2 }{ \mu } \Big)
\sum_{t=0}^{T-1} \frac{1}{ m_t }
\nonumber\\
\stackrel{(a)}\leq &
\frac{1}{2\eta} \sum_{t=0}^{T-1} \Big( \E [\| \w_t^* - \w_t \|^2] - \E [\| \w_{t+1}^* - \w_{t+1} \|^2] \Big) 
+ \sum_{t=0}^{T-1} \frac{1}{2} \E [ F(\w_t) - F(\w_t^*) ]
\nonumber\\
&
+ \Big( 2\eta C_f^2 \sigma_1^2 
+ 2\eta C_g^2 L_f^2 \sigma_0^2 
+ \frac{ C_g^2 L_f^2 \sigma_0^2 }{ \mu } \Big)
\sum_{t=0}^{T-1} \frac{1}{ m_t }
\nonumber\\
= & 
\frac{1}{2 \eta} \E [\| \w_0^* - \w_0 \|^2] - \frac{1}{2 \eta} \E [\| \w_{T}^* - \w_{T} \|^2] 
+ \sum_{t=0}^{T-1} \frac{1}{2} \E [ F(\w_t) - F(\w_t^*) ]
\nonumber\\
&
+ \Big( 2\eta C_f^2 \sigma_1^2 
+ 2\eta C_g^2 L_f^2 \sigma_0^2 
+ \frac{ C_g^2 L_f^2 \sigma_0^2 }{ \mu } \Big)
\sum_{t=0}^{T-1} \frac{1}{ m_t }
\nonumber\\
\leq &
\frac{1}{2 \eta} \| \w_0^* - \w_0 \|^2 
+ \sum_{t=0}^{T-1} \frac{1}{2} \E[ F(\w_t) - F(\w_t^*) ]
\nonumber\\
&
+ \Big( 2\eta C_f^2 \sigma_1^2 
+ 2\eta C_g^2 L_f^2 \sigma_0^2 
+ \frac{ C_g^2 L_f^2 \sigma_0^2 }{ \mu } \Big)
\sum_{t=0}^{T-1} \frac{1}{ m_t }
\nonumber\\
\end{align}
where (a) is due to the definition of $\w_t^*$ which further implies $\| \w_t^* - \w_{t+1} \|^2 \geq \| \w_{t+1}^* - \w_{t+1}\|^2$.


Re-arranging the above inequality, we have
\begin{align}
    \label{eq:sum_all_one_step_MSCG_combined}
\sum_{t=0}^{T-1} \E[ F(\w_{t+1}) - F(\w_t^*) ]
\leq 
&
\frac{1}{\eta} \| \w_0^* - \w_0 \|^2 
+ F(\w_0) - F(\w_T)
\nonumber\\
&
+ 2 \Big( 2\eta C_f^2 \sigma_1^2 
+ 2\eta C_g^2 L_f^2 \sigma_0^2 
+ \frac{ C_g^2 L_f^2 \sigma_0^2 }{ \mu } \Big)
\sum_{t=0}^{T-1} \frac{1}{ m_t }
\end{align}

Since $\w_t^*$ is an optimal solution, we have $F(\w_t^*) = F(\w_{t'}^*)$ for any $t$ and $t'$.
By applying Jensen's inequality and the convexity of $F$ to LHS of the above (\ref{eq:sum_all_one_step_MSCG_combined}), we have
\begin{align}
\label{eq:converge_MSCG_large_condition_number_2}
\E[ F(\wh_T) - \min_w F(\w) ]
\leq &
\frac{ \| \w_0^* - \w_0 \|^2 }{\eta T}
+ \frac{F(\w_0) - F(\w_T)}{T}
\nonumber\\
&
+ \frac{2}{T}\Big( 2\eta C_f^2 \sigma_1^2 
+ 2\eta C_g^2 L_f^2 \sigma_0^2 
+ \frac{ C_g^2 L_f^2 \sigma_0^2 }{ \mu } \Big)
\sum_{t=0}^{T-1} \frac{1}{ m_t } 
\end{align}
where $\wh_T = \frac{1}{T} \sum_{t=1}^T \w_t$.
\\
Further by the assumption that $F(\w_0) - F(\w_T) \leq \epsilon_0$, we obtain Lemma \ref{lemma:MSCG_convergence}.
\begin{align}
\label{eq:Lemma2_overall}
\E[ F(\wh_T) - \min_w F(\w) ]
\leq &
\frac{\epsilon_0}{T}
+ \frac{1}{T}\sum_{t=0}^{T-1} \frac{4\eta C_f^2 \sigma_1^2 + 4\eta C_g^2 L_f^2 \sigma_0^2 + 2 C_g^2 L_f^2 \sigma_0^2 /\mu }{m_t})
+ \frac{ \| \w_0^* - \w_0 \|^2 }{\eta T}
\nonumber\\
\end{align}

If we let $m_t = \frac{t+1}{\mu}$ for $t=0, ..., T-1$, then since $\sum_{t=1}^{T} \frac{1}{t} \leq 1 + \int_{2}^{T} \frac{1}{t} dt \leq \ln T + 1$, we can further have 
\begin{align}
\label{eq:Lemma2_with_mt}
\E[ F(\wh_T) - \min_w F(\w) ]
\leq &
\frac{\epsilon_0}{T}
+  \big(4\eta C_f^2 \sigma_1^2 + 4\eta C_g^2 L_f^2 \sigma_0^2 + 2 C_g^2 L_f^2 \sigma_0^2 /\mu \big)
\frac{\ln T + 1}{T}
+ \frac{ \| \w_0^* - \w_0 \|^2 }{\eta T}
\nonumber\\
\end{align}

To guarantee $\E[F(\wh_T) - F(\w_*)] \leq \epsilon$, we require $T = \tilde O(1/\epsilon)$ and the total sample complexity is 
$$
\sum_{t=0}^{T-1} m_t 
= \sum_{t=0}^{T-1} \frac{t+1}{\mu}
= \frac{ (T+1) T }{ 2\mu }
= \tilde O(\frac{1}{\mu \epsilon^2})   .
$$
\end{proof}

\subsection{Proof for Theorem \ref{theorem:restart_MSCG} }
\begin{proof}

We first consider the convergence of the inner loop.
Suppose $F(\w_0) - F(\w_*) \leq \epsilon_0$,
$m_t = m = \frac{1}{ c \mu \epsilon_0 }$ where
$c = \frac{1}{ 4 ( 2 \mu \eta C_f^2 \sigma_1^2 
+ 2\mu \eta C_g^2 L_f^2 \sigma_0^2 
+  C_g^2 L_f^2 \sigma_0^2 ) }$.
Starting from (\ref{eq:sum_all_one_step_MSCG}) where we take summation over the one-step result, let $\w = \w_*$ and we have
\begin{align*}
\sum_{t=0}^{T-1} \E[ F(\w_{t+1}) - F(\w_*) ]
\leq &
\frac{1}{2 \eta} \| \w_* - \w_0 \|^2 
+ \Big( 2\eta C_f^2 \sigma_1^2 
+ 2\eta C_g^2 L_f^2 \sigma_0^2 
+ \frac{ C_g^2 L_f^2 \sigma_0^2 }{ \mu } \Big)
\sum_{t=0}^{T-1} \frac{1}{ m }
\\
= &
\frac{1}{2 \eta} \| \w_* - \w_0 \|^2 
+ \Big( 2 \eta C_f^2 \sigma_1^2 
+ 2 \eta C_g^2 L_f^2 \sigma_0^2 
+ \frac{ C_g^2 L_f^2 \sigma_0^2 }{ \mu } \Big)
\sum_{t=0}^{T-1} c \mu \epsilon_0
\\
\leq &
\frac{ F(\w_0) - F(\w_*) }{\mu\eta}
+ \Big( 2 \eta C_f^2 \sigma_1^2 
+ 2 \eta C_g^2 L_f^2 \sigma_0^2 
+ \frac{ C_g^2 L_f^2 \sigma_0^2 }{ \mu } \Big) T c \mu \epsilon_0   ,
\end{align*}
where the last inequality is due to $\mu$-optimal strong convexity of $F$.

Let $\epsilon_0 = F(\w_0) - F(\w_*)$.
Applying Jensen's inequality to LHS of the above inequality, we have
\begin{align*}
\E[ F(\wh_T) - F(\w) ]
\leq &
\frac{ \epsilon_0 }{\mu \eta T}
+ \Big( 2 \mu \eta C_f^2 \sigma_1^2 
+ 2 \mu \eta C_g^2 L_f^2 \sigma_0^2 
+ C_g^2 L_f^2 \sigma_0^2 \Big) c \epsilon_0
\\
= &
\frac{\epsilon_0}{4} 
+ \frac{\epsilon_0}{4}
= \frac{ \epsilon_0 }{2}   ,
\end{align*}
where the last inequality is due to
\begin{align*}
T = \frac{4}{\mu \eta}
\text{ and }
c = \frac{1}{ 4 ( 2 \mu \eta C_f^2 \sigma_1^2 
+ 2 \mu \eta C_g^2 L_f^2 \sigma_0^2 
+ C_g^2 L_f^2 \sigma_0^2 ) }  .
\end{align*}

Then we consider the two consecutive loops.
Given $F(\w_{k-1}) - F(\w_*) \leq \epsilon_{k-1}$, we have $F(\w_k) - F(\w_*) \leq \frac{ \epsilon_{k-1} }{2} = \epsilon_{k}$, as long as we set $m_k = \frac{1}{c\mu \epsilon_{k-1}}$.
To achieve an $\epsilon$-optimal solution, i.e., $\epsilon_K \leq \epsilon$, we require $K = \lceil \log(\frac{\epsilon_0}{\epsilon}) \rceil$,
which leads to the total sample complexity
$$
m_{tot} 
= \sum_{k=1}^K m_k T
= \sum_{k=1}^K \frac{ T }{c \mu} \cdot \frac{1}{ \epsilon_{k-1} }
= \frac{T}{c \mu} \sum_{k=1}^K \frac{ 2^{k-1} }{\epsilon_0}
= \frac{T}{ c \mu \epsilon_0 } (2^K - 1)
\leq \frac{4}{ c \eta \mu^2 \epsilon }   .
$$
\end{proof}

\subsection{Proof of Lemma \ref{lemma:composite_smooth}}\label{subsection:proof:lemma:composite_smooth}

\begin{proof}
We prove $f(g(\w))$ has Lipschitz continuous gradients as follows,
\begin{align*}
&
\| \nabla f(g(\w)) \nabla g(\w) - \nabla f(g(\w')) \nabla g(\w') \|
\\
\stackrel{(a)}{\leq} &
\| \nabla f(g(\w)) \nabla g(\w) - \nabla f(g(\w)) \nabla g(\w') \|
+ \| \nabla f(g(\w)) \nabla g(\w') - \nabla f(g(\w')) \nabla g(\w') \|
\\
= &
\| \nabla f(g(\w)) \| \cdot \| \nabla g(\w) - \nabla g(
\w') \|
+ \| \nabla g(\w') \| \cdot \| \nabla f(g(\w)) - \nabla f(g(\w')) \|
\\
\stackrel{(b)}{\leq} &
C_f L_g \| \w - \w' \|
+ C_g L_g \| g(\w) - g(\w') \|
\\
\stackrel{(c)}{\leq} &
(C_f L_g + C_g^2 L_g) \| \w - \w' \|  ,
\end{align*}
where inequality $(a)$ is due to triangle inequality,
inequality $(b)$ is due to $C_f$-Lipschitz continuity of $f$, $L_g$-smoothness of $g$ and $C_g$-Lipschitz continuity of $g$.
Inequality $(c)$ is due to $C_g$-Lipschitz continuity of $g$.
\end{proof}

\subsection{Proof of Lemma \ref{lemma:one_step_recursion_MSCG} }\label{subsection:proof:lemma:one_step_recursion_MSCG}

\begin{proof}
By the optimality of $\w_{t+1}$ of Line \ref{algorithm:MSCG:line:update} of Algorithm \ref{algorithm:MSCG}, we have
\begin{align}\label{eq:optimality_one_step}
&
\langle \nabla f(\y_{t+1}) \z_{t+1} + \partial r(\w_{t+1}) + \frac{1}{\eta_t} (\w_{t+1} - \w_t), \w - \w_{t+1} \rangle \geq 0
\nonumber\\
\Rightarrow & ~~
\langle \nabla f(\y_{t+1}) \z_{t+1} + \partial r(\w_{t+1}), \w_{t+1} - \w \rangle
\leq
\langle \frac{1}{\eta_t} (\w_{t+1} - \w_t), \w - \w_{t+1} \rangle
\nonumber\\
& ~~ 
=
\frac{1}{\eta_t} \langle \w_{t+1} - \w + \w - \w_t, \w - \w_{t+1} \rangle
\nonumber\\
& ~~ 
=
\frac{1}{\eta_t} \Big( -\| \w_{t+1} - \w \| + \frac{1}{2} \| \w - \w_t \|^2 + \frac{1}{2} \| \w - \w_{t+1} \|^2 - \frac{1}{2} \| \w - \w_t - \w + \w_{t+1} \|^2 \Big)
\nonumber\\
& ~~
=
\frac{1}{2\eta_t} \Big( \| \w - \w_t \|^2 - \| \w - \w_{t+1} \|^2 - \| \w_t - \w_{t+1} \|^2 \Big)      .
\end{align}

For the LHS of the above inequality (\ref{eq:optimality_one_step}), we further have the following lower bound
\begin{align}
\label{eq:lhs_of_optimality_one_step}
&
\langle \nabla f(\y_{t+1}) \z_{t+1} + \partial r(\w_{t+1}), \w_{t+1} - \w \rangle
\nonumber\\
= &
\langle \nabla f(g(\w_t))\nabla g(\w_t) + \partial r(\w_{t+1}), \w_{t+1} - \w \rangle
\nonumber\\
&
+ \langle \nabla f(\y_{t+1}) \z_{t+1} - \nabla f(g(\w_t))\nabla g(\w_t), \w_{t+1} - \w \rangle
\nonumber\\
= &
\langle \nabla f(g(\w_t))\nabla g(\w_t), \w_{t+1} - \w_t + \w_t - \w \rangle
+ \langle \partial r(\w_{t+1}), \w_{t+1} - \w \rangle
\nonumber\\
&
+ \langle \nabla f(\y_{t+1}) \z_{t+1} - \nabla f(g(\w_t))\nabla g(\w_t), \w_{t+1} - \w \rangle
\nonumber\\
\stackrel{(a)}{\geq} &
\Big( f(g(\w_{t+1})) - f(g(\w_t)) \Big)
- \frac{L}{2} \| \w_{t+1} - \w_t \|^2
- \Big( f(g(\w_t)) - f(g(\w)) \Big) 
\nonumber\\
&
+ r(\w_{t+1}) - r(\w)
+ \langle \nabla f(\y_{t+1}) \z_{t+1} - \nabla f(g(\w_t))\nabla g(\w_t), \w_{t+1} - \w \rangle
\nonumber\\
= &
F(\w_{t+1}) - F(\w)
- \frac{L}{2} \| \w_{t+1} - \w_t \|^2
\nonumber\\
&
+ \langle \nabla f(\y_{t+1}) \z_{t+1} - \nabla f(g(\w_t))\nabla g(\w_t), \w_{t+1} - \w \rangle
\end{align}
where inequality $(a)$ is due to convexity of $r$ as well as $L$-smoothness and convexity of $f \circ g$.

Combining the above two inequalities (\ref{eq:optimality_one_step}) and (\ref{eq:lhs_of_optimality_one_step}), we have
\begin{align}
\label{eq:optimality_one_step_before_bound_inner_product}
F(\w_{t+1}) - F(\w)
\leq &
\frac{1}{2\eta_t} \Big( \| \w - \w_t \|^2 - \| \w - \w_{t+1} \|^2 - \| \w_t - \w_{t+1} \|^2 \Big)
\nonumber\\
&
+ \frac{L}{2} \| \w_t - \w_{t+1} \|^2
\nonumber\\
&
+ \underbrace{ \langle \nabla f(g(\w_t))\nabla g(\w_t) - \nabla f(\y_{t+1}) \z_{t+1} , \w_{t+1} - \w \rangle }_{:= A}   .
\end{align}

To upper bound term $A$ in (\ref{eq:optimality_one_step_before_bound_inner_product}), we have 
\begin{align}
\label{eq:one_step_upper_bound_A}
A = &
\langle \nabla f(g(\w_t))\nabla g(\w_t) - \nabla f(\y_{t+1}) \z_{t+1} , \w_{t+1} - \w_t + \w_t - \w \rangle
\nonumber\\
\leq &
\eta_t \| \nabla f(g(\w_t)) \nabla g(\w_t) - \nabla f(\y_{t+1}) \z_{t+1} \|^2
+ \frac{\| \w_t - \w_{t+1} \|^2}{4\eta_t}
\nonumber\\
&
+ \langle \nabla f(g(\w_t))\nabla g(\w_t) - \nabla f(\y_{t+1}) \z_{t+1} , \w_t - \w \rangle  ,
\end{align}
where the last inequality is due to Young's inequality.

Plugging (\ref{eq:one_step_upper_bound_A}) into (\ref{eq:optimality_one_step_before_bound_inner_product}), we finally have
\begin{align}\label{eq:optimality_one_step_after_bound_inner_product}
F(\w_{t+1}) - F(\w)
\leq &
\frac{1}{2\eta_t} \Big( \| \w - \w_t \|^2 - \| \w - \w_{t+1} \|^2 \Big)
\nonumber\\
&
+ \Big( \frac{L}{2} - \frac{1}{4 \eta_t} \Big) \| \w_t - \w_{t+1} \|^2
\nonumber\\
&
+ \eta_t \| \nabla f(g(\w_t)) \nabla g(\w_t) - \nabla f(\y_{t+1}) \z_{t+1} \|^2
\nonumber\\
&
+ \langle \nabla f(g(\w_t))\nabla g(\w_t) - \nabla f(\y_{t+1}) \z_{t+1} , \w_t - \w \rangle
\nonumber\\
\leq &
\frac{1}{2\eta_t} \Big( \| \w - \w_t \|^2 - \| \w - \w_{t+1} \|^2 \Big)
\nonumber\\
&
+ \eta_t \| \nabla f(g(\w_t)) \nabla g(\w_t) - \nabla f(\y_{t+1}) \z_{t+1} \|^2
\nonumber\\
&
+ \langle \nabla f(g(\w_t))\nabla g(\w_t) - \nabla f(\y_{t+1}) \z_{t+1} , \w_t - \w \rangle   ,
\end{align}
where the last inequality is due to the assumption $\eta_t \leq \frac{1}{2L}$.

Recall the following conditions
\begin{align}\label{eq:bounds}
& \E[\z_{t+1}] = \nabla g(\w_t),
&& \E[\| \z_{t+1} \|^2] \leq C_g^2,
\nonumber\\
& \E[\| \z_{t+1} \|] \leq \sqrt{\E[\| \z_{t+1} \|^2]} \leq C_g,
&& \E[ \nabla f(\cdot) ] \leq C_f .
\nonumber\\
& \E[ \y_{t+1} - g(\w_t) \|^2] \leq \frac{\sigma_0^2}{m_t},
&& \E[\| \z_{t+1} - \nabla g(\w_t) \|^2] \leq \frac{\sigma_1^2}{m_t},
\end{align}

By taking expectation conditioned on $\w_t$ on both sides of (\ref{eq:optimality_one_step_after_bound_inner_product}), we have
\begin{align}\label{eq:MSCG_one_step1}
&
\E[ F(\w_{t+1}) - F(\w)]
\nonumber\\
\leq &
\frac{1}{2\eta_t} \Big( \E[ \| \w - \w_t \|^2 ] - \E[ \| \w - \w_{t+1} \|^2 ] \Big)
\nonumber\\
&
+\eta_t \E[ \| \nabla f(g(\w_t)) \nabla g(\w_t) - \nabla f(\y_{t+1}) \z_{t+1} \|^2 ]
\nonumber\\
&
+ \E[ \langle \nabla f(g(\w_t))\nabla g(\w_t) - \nabla f(\y_{t+1}) \z_{t+1} , \w_t - \w \rangle ]
\nonumber\\
\leq &
\frac{1}{2\eta_t} \Big( \E[ \| \w - \w_t \|^2 ] - \E[ \| \w - \w_{t+1} \|^2 ] \Big)
\nonumber\\
&
+ \underbrace{ 2\eta_t \E[ \| \nabla f(g(\w_t)) \nabla g(\w_t) - \nabla f(g(\w_t)) \z_{t+1} \|^2 ] }_{ := A }
\nonumber\\
&
+ \underbrace{ \E[ \langle \nabla f(g(\w_t))\nabla g(\w_t) - \nabla f(g(\w_t)) \z_{t+1} , \w_t - \w \rangle ] }_{ := B }
\nonumber\\
&
+ \underbrace{ 2\eta_t \E[ \| \nabla f(g(\w_t)) \z_{t+1} - \nabla f(\y_{t+1}) \z_{t+1} \|^2 ] }_{ := C }
\nonumber\\
&
+ \underbrace{ \E[ \langle \nabla f(g(\w_t)) \z_{t+1} - \nabla f(\y_{t+1}) \z_{t+1} , \w_t - \w \rangle ] }_{ := D }   .
\end{align}

Then we upper bound the four terms above $A, B, C, D$ by conditions in (\ref{eq:bounds}) as follows.
\begin{align}
\label{eq:MSCG_bound_A}
A \leq
2\eta_t \E[ \| \nabla f(g(\w_t)) \|^2 \cdot \| \nabla g(\w_t) - \z_{t+1} \|^2 ]
\leq 
2\eta_t C_f^2 \frac{\sigma_1^2}{m_t}  .
\end{align}

\begin{align}
\label{eq:MSCG_bound_B}
B 
=
\langle \nabla f(g(\w_t)) \nabla g(\w_t) - \nabla f(g(\w_t)) \E [ \z_{t+1} ] , \w_t - \w \rangle
= 0  ,
\end{align}
where the first equality is due to the fact that $\w_t$ is independent on the randomness, and the second inequality is due to $\E[\z_{t+1}] = \nabla g(\w_t)$ in (\ref{eq:bounds}).

\begin{align}
\label{eq:MSCG_bound_D}
C 
\leq
2\eta_t \E [ \| \z_{t+1} \|^2 \cdot \| \nabla f(g(\w_t)) - \nabla f(\y_{t+1}) \|^2 ]
\leq 
2\eta_t C_g^2 L_f^2 \frac{\sigma_0^2}{m_t}  .
\end{align}

\begin{align}
\label{eq:MSCG_bound_E}
D
\leq &
\E [ \| \z_{t+1} \| \cdot \| \nabla f(g(\w_t)) - \nabla f(\y_{t+1}) \| \cdot \| \w - \w_t \| ]
\nonumber\\
\leq &
\E [ \| \z_{t+1} \| \cdot L_f \| g(\w_t) - \y_{t+1} \| \cdot \| \w - \w_t \| ]
\nonumber\\
\leq &
\E [ \| \z_{t+1} \| ] \cdot \E[ L_f \| g(\w_t) - \y_{t+1} \| \cdot \| \w - \w_t \| ]
\nonumber\\
\leq &
\E[ C_g L_f \| g(\w_t) - \y_{t+1} \| \cdot \| \w - \w_t \| ]
\nonumber\\
\leq &
\E \Big[ \frac{C_g^2 L_f^2}{\mu} \| g(\w_t) - \y_{t+1} \|^2 + \frac{\mu \| \w - \w_t \|^2}{4} \Big]
\nonumber\\
\leq &
\frac{C_g^2 L_f^2 \sigma_0^2}{\mu m_t}
+ \frac{\mu}{4} \E[ \| \w - \w_t \|^2 ]   ,
\end{align}
where the first inequality is due to Cauchy-Schwarz inequality, 
the second inequality is due to $L_f$-smooth of $f$,
the third inequality is due to independence of $\calS_1$ and $\calS_2$,
the fourth inequality is due to conditions in (\ref{eq:bounds}),
the fifth inequality is due to Young's inequality,
and the last inequality is due to conditions in (\ref{eq:bounds}).

Plugging (\ref{eq:MSCG_bound_A}), (\ref{eq:MSCG_bound_B}), (\ref{eq:MSCG_bound_D}) and (\ref{eq:MSCG_bound_E}) into (\ref{eq:MSCG_one_step1}), we have
\begin{align}
\label{eq:MSCG_one_step2}
&
\E[ F(\w_{t+1}) - F(\w) ]
\nonumber\\
\leq &
\frac{1}{2\eta_t} \Big( \E[\| \w - \w_t \|^2] - \E[\| \w - \w_{t+1} \|^2] \Big) 
+ \frac{\mu}{4} \E[ \| \w - \w_t \|^2 ]
\nonumber\\
&
+ 2\eta_t C_f^2 \frac{\sigma_1^2}{m_t}
+ 2\eta_t C_g^2 L_f^2 \frac{\sigma_0^2}{m_t}
+ \frac{C_g^2 L_f^2 \sigma_0^2}{\mu m_t}
\nonumber\\
= &
\frac{1}{2\eta_t} \Big( \E[\| \w - \w_t \|^2] - \E[\| \w - \w_{t+1} \|^2] \Big) 
+ \frac{\mu}{4} \E[ \| \w - \w_t \|^2 ] 
\nonumber\\
&
+ \Big(2\eta_t C_f^2 \sigma_1^2 
+ 2\eta_t C_g^2 L_f^2 \sigma_0^2
+ \frac{C_g^2 L_f^2 \sigma_0^2}{\mu} \Big) \frac{1}{m_t}
\end{align}

\end{proof}  

\subsection{Proof of Lemma \ref{lemma:RMSCG_one_stage_0} }

\begin{proof} (of Lemma \ref{lemma:RMSCG_one_stage_0})
Start with the one-step update of (\ref{eq:optimality_one_step_after_bound_inner_product}) in proof of Lemma \ref{lemma:one_step_recursion_MSCG}.
We let $\eta_t = \eta \leq 1/(2L)$ and $\w = \w^*_t$, i.e., the closted optimal solution to $\w_t$.
Then taking summation over $t=0, ..., T-1$, we have
\begin{align*}
&
\sum_{t=0}^{T-1} F(\w_{t+1}) - F(\w_t^*)
\leq 
\frac{1}{2\eta} \sum_{t=0}^{T-1} \Big( \| \w_t^* - \w_t \|^2 - \| \w_t^* - \w_{t+1} \|^2 \Big)
\nonumber\\
& + \eta \| \nabla f(g(\w_t)) \nabla g(\w_t) - \nabla f( \y_{t+1})  \z_{t+1} \|^2
+ \langle \nabla f(g(\w_t))\nabla g(\w_t) - \nabla f( \y_{t+1})  \z_{t+1} , \w_t - \w_t^* \rangle .
\end{align*}

For LHS, since the optimal values are equal, i.e., $F(\w_t^*) = F(\w_{t'}^*)$, we apply Jensen's inequality as follows
\begin{align*}
&
\frac{1}{T} \sum_{t=0}^{T-1} ( F(\w_{t+1}) - F(\w^*_t) )
\leq 
\frac{1}{2\eta T} \sum_{t=0}^{T-1} \Big( \| \w_t^* - \w_t \|^2 - \| \w_t^* - \w_{t+1} \|^2 \Big)
\nonumber\\
& 
+ \frac{\eta}{T} \sum_{t=0}^{T-1} \| \nabla f(g(\w_t)) \nabla g(\w_t) - \nabla f( \y_{t+1})  \z_{t+1} \|^2
+ \frac{1}{T} \sum_{t=0}^{T-1} \langle \nabla f(g(\w_t))\nabla g(\w_t) - \nabla f( \y_{t+1})  \z_{t+1} , \w_t - \w_t^* \rangle
\\
&
\leq
\frac{1}{2\eta T} \| \w_0^* - \w_0 \|^2 
+ \frac{\eta}{T} \sum_{t=0}^{T-1} \| \nabla f(g(\w_t)) \nabla g(\w_t) - \nabla f( \y_{t+1})  \z_{t+1} \|^2
\nonumber\\
& ~~~~~ 
+ \frac{1}{T} \sum_{t=0}^{T-1} \langle \nabla f(g(\w_t))\nabla g(\w_t) - \nabla f( \y_{t+1})  \z_{t+1} , \w_t - \w_t^* \rangle ,
\end{align*}
where the last inequality is due to the definition of $\w_t^*$: $\| \w_{t+1}^* - \w_{t+1} \| \leq \| \w_t^* - \w_{t+1} \|$.
\end{proof}

\subsection{Proof of Theorem \ref{theorem:robust_restart_MSCG} }
\begin{proof}

As analyzed, the key is to bound the two terms $A$ and $B$ in (\ref{eq:RMSCG_one_stage_0}) of Lemma \ref{lemma:RMSCG_one_stage_0}.
\begin{align*}
&
\frac{1}{T} \sum_{t=0}^{T-1} ( F(\w_{t+1}) - F(\w^*_t) )
\leq 
\frac{1}{2\eta T} \| \w_0^* - \w_0 \|^2 
+ \frac{\eta}{T} \underbrace{ \sum_{t=0}^{T-1} \| \nabla f(g(\w_t)) \nabla g(\w_t) - \nabla f( \y_{t+1})  \z_{t+1} \|^2 }_{ A }
\nonumber\\
& ~~~~~ 
+ \frac{1}{T} \underbrace{ \sum_{t=0}^{T-1} \langle \nabla f(g(\w_t))\nabla g(\w_t) - \nabla f( \y_{t+1})  \z_{t+1} , \w_t - \w_t^* \rangle }_{ B } ,
\end{align*}
which can be further decomposed to
\begin{align*}
A
\leq &
2 \underbrace{\sum_{t=0}^{T-1} \|\nabla g(\w_t) ^{\top}\nabla f(g(\w_t))-  \nabla g(\w_t)^{\top}\nabla f(\y_{t+1})\|^2}_{A_1} 
\\
& + 2   \underbrace{ \sum_{t=0}^{T-1}  \|\nabla g(\w_t)^{\top} \nabla f(\y_{t+1})  - \z_{t+1}^{\top} \nabla f(\y_{t+1}) \|^2}_{A_2}
\nonumber\\
B 
\leq &
\underbrace{ \sum_{t=0}^{T-1} \langle \nabla g(\w_t)^\top \nabla f(g(\w_t)) - \z_{t+1}^\top \nabla f(g(\w_t)) , \w_t - \w_t^* \rangle}_{B_1}
\\
&
+ \underbrace{ \sum_{t=0}^{T-1} \langle \z_{t+1}^\top \nabla f(g(\w_t)) - \z_{t+1}^\top \nabla f(\y_{t+1}) , \w_t - \w_t^* \rangle}_{B_2}
\end{align*}

Under Assumption \ref{assumption:composite_problem}, these four terms have the following upper bounds:
\begin{align*}
A_1
\leq &
\sum_{t=0}^{T-1} \| \z_{t+1} \|^2 \cdot \| \nabla f(g(\w_t)) - \nabla f(\y_{t+1}) \|^2 
\leq 
C_g^2 L_f^2 \sum_{t=0}^{T-1} \|  g(\w_t) - \y_{t+1} \|^2    ,
\\
A_2 
\leq &
 \sum_{t=0}^{T-1} \| \nabla f(g(\w_t)) \|^2 \cdot \| \nabla g(\w_t) - \z_{t+1} \|^2 
\leq 
C_f^2 \sum_{t=0}^{T-1} \| \nabla g(\w_t) - \z_{t+1} \|^2    ,
\\
B_1
\leq & 
\sum_{t=0}^{T-1} \| \nabla f(g(\w_t))\| \cdot \| \nabla g(\w_t) - \z_{t+1} \| \cdot \| \w_t^* - \w_t \| 
\nonumber\\
\leq &
\sum_{t=0}^{T-1} C_f \| \nabla g(\w_t) - \z_{t+1} \| \cdot \| \w_t^* - \w_t \| 
\stackrel{(a)}{\leq} 
\frac{2 C_f^2 }{\mu} \sum_{t=0}^{T-1} \| \nabla g(\w_t) - \z_{t+1} \|^2 + \frac{\mu }{8} \sum_{t=0}^{T-1} \| \w_t^* - \w_t \|^2
\\
B_2 
\leq &  
\sum_{t=0}^{T-1} \| \z_{t+1} \| \cdot \| \nabla f(g(\w_t)) - \nabla f(\y_{t+1}) \| \cdot \| \w_t^* - \w_t \| 
\nonumber\\
\leq &
\sum_{t=0}^{T-1} C_g L_f \| g(\w_t) - \y_{t+1} \| \cdot \| \w_t^* - \w_t \| 
\nonumber\\
\stackrel{(b)}{\leq} &
\frac{2 C_g^2 L_f^2}{\mu} \sum_{t=0}^{T-1} \| g(\w_t) - \y_{t+1} \|^2 + \frac{\mu}{8} \sum_{t=0}^{T-1} \| \w_t^* - \w_t \|^2   ,
\end{align*}
where inequalities $(a)$ and $(b)$ are due to Young's inequality.

By Lemma \ref{lemma:median_of_mean}, the following inequalities hold with probability at leaset $1-\delta$, respectively.
\begin{align*}
\|g(\w_t) - \y_{t+1}\|^2 \leq \frac{486\sigma_0^2 \log(1/\delta)}{m_t}   ,~~~~
\|\nabla g(\w_t) - \z_{t+1} \|^2 \leq \frac{486\sigma_1^2 \log(1/\delta)}{m_t}   .
\end{align*}

Combining the above upper bounds for $A$ and $B$ together, from (\ref{eq:RMSCG_one_stage_0}) we have 
\begin{align*}
&
\frac{1}{T} \sum_{t=0}^{T-1} ( F(\w_{t+1}) - F(\w_*) )
\leq 
\frac{ \| \w_0^* - \w_0 \|^2 }{2 \eta T}
+ \frac{\mu}{4 T} \sum_{t=0}^{T-1} \| \w_t^* - \w_t \|^2
\\
&
+ \frac{1}{T} ( 2 \eta C_g^2 L_f^2 + \frac{ 2 C_g^2 L_f^2 }{ \mu } ) \sum_{t=0}^{T-1} \| g(\w_t) - \y_{t+1} \|^2
+ \frac{1}{T} ( 2 \eta C_f^2 + \frac{2 C_f^2}{\mu} ) \sum_{t=0}^{T-1} \| \nabla g(\w_t) - \z_{t+1} \|^2
\\
&
\leq 
\frac{ F(\w_0) - F(\w_0^*) }{ \mu \eta T }
+ \frac{1}{2 T} \sum_{t=0}^{T-1} ( F(\w_t) - F(\w_t^*) )
\\
& 
+ 2 C_g^2 L_f^2 ( \eta + \frac{ 1 }{ \mu } ) 486 \sigma_0^2 \log(1/\delta) \frac{1}{T} \sum_{t=0}^{T-1} \frac{1}{m_t}
+ 2 C_f^2 ( \eta + \frac{1}{\mu} ) 486 \sigma_1^2 \log(1/\delta) \frac{1}{T} \sum_{t=0}^{T-1} \frac{1}{m_t} .
\end{align*}

Re-arranging the above inequality, we have
\begin{align*}
&
\frac{1}{2 T} \sum_{t=0}^{T-1} ( F(\w_{t+1}) - F(\w_{t+1}^*) )
\\
&
\leq 
\frac{ \epsilon_0 }{ \mu \eta T }
+ \frac{\epsilon_0}{2 T}
+ 2 ( \eta + \frac{1}{\mu} ) ( C_g^2 L_f^2 \sigma_0^2 + C_f^2 \sigma_1^2 ) 486 \log(1/\delta) \frac{1}{T} \sum_{t=0}^{T-1} \frac{1}{m_t}
\\
& =
\frac{ \epsilon_0 }{ \mu \eta T }
+ \frac{\epsilon_0}{2 T}
+ 2 ( \mu \eta + 1 ) ( C_g^2 L_f^2 \sigma_0^2 + C_f^2 \sigma_1^2 ) 486 \log(1/\delta) \frac{1}{\mu m} .
\end{align*}
For LHS, we apply Jensen's inequality to derive the lower bound $F(\wh_T) - F(\w_*)$.

To guarantee $F(\wh_T) - F(\w_*) \leq \frac{\epsilon_0}{2}$, we have
\begin{align*}
& 
\frac{\epsilon_0}{T} ( \frac{2}{\mu \eta} + 1 ) = \frac{\epsilon_0}{4}
\qquad \Leftarrow \qquad 
T = 4 ( \frac{2}{\mu \eta} + 1 ) 
\\
&
4 ( \mu \eta + 1 ) ( C_g^2 L_f^2 \sigma_0^2 + C_f^2 \sigma_1^2 ) 486 \log(1/\delta) \frac{1}{\mu m} 
= 
\frac{\epsilon_0}{4}
\\
& \qquad \qquad \qquad \qquad \qquad \qquad 
\Leftarrow
m = \frac{16}{\mu \epsilon_0} ( \mu \eta + 1 ) ( C_g^2 L_f^2 \sigma_0^2 + C_f^2 \sigma_1^2 ) 486 \log(1/\delta).
\end{align*}

As a result, for Algorithm \ref{algorithm:RMSCG}, given $F(\w_{k-1}) - F(\w_*) \leq \epsilon_{k-1}$ at each stage, we require 
\begin{align*}
T \geq 4 ( \frac{2}{\mu \eta} + 1 ) , ~~~~
m_k \geq \frac{16}{\mu \epsilon_{k-1}} ( \mu \eta + 1 ) ( C_g^2 L_f^2 \sigma_0^2 + C_f^2 \sigma_1^2 ) 486 \log(1/\delta),
\end{align*}
so that we can guarantee the following inequality holds with probability at least $1 - 2T \delta$
\begin{align*}
F(\w_k) - F(\w_*) \leq \frac{\epsilon_{k-1}}{2}   .
\end{align*}

To reach an $\epsilon$-optimal solution with probability at least $1 - 2TK\delta$, we also require $K = \lceil \log(\frac{\epsilon_0}{\epsilon}) \rceil$ and the total sample complexity is
\begin{align*}
m_{tot} = \sum_{k=1}^K m_k T
= \sum_{k=1}^K O( \frac{1}{\mu^2 \epsilon_{k-1}} ) 
= O( \frac{1}{\mu^2 \epsilon} )  .
\end{align*}

\end{proof}

\section{Proofs in Section \ref{section:RMSCG_ref_robust} }

\subsection{Proof of Theorem \ref{theorem:RROSC_ref_convergence} }\label{section:theorem:ref_convergence}

In this section, we prove Theorem \ref{theorem:RROSC_ref_convergence}.
As we mentioned, the key is to bound the two terms $A$ and $B$ in Lemma \ref{lemma:RMSCG_one_stage_0} by making use of the reference truncation technique in (\ref{eq:RMSCG_ref_truncation_y}) and (\ref{eq:RMSCG_ref_truncation_g}).

\begin{proof} (details of Theorem \ref{theorem:RROSC_ref_convergence})
Start from the one-stage result in Lemma \ref{lemma:RMSCG_one_stage_0}.
Recall that by the robust estimator, we have $\tilde \y_0$ and $\tilde \z_0$ with probability at least $1 - 2\delta$ such that 
\begin{align*}
\| \tilde \y_0 - g(\w_0) \| \leq \nu \sigma_0   , ~~~~~ \| \tilde \z_0 - \nabla g(\w_0) \| \leq \nu \sigma_1 ,
\end{align*}
which satisfy (\ref{eq:RMSCG_ref_truncation_y}) and (\ref{eq:RMSCG_ref_truncation_g}).

By (\ref{eq:RMSCG_one_stage_0}), replacing $\y_{t+1}$ and $\z_{t+1}$ with $\yh_{t+1}$ and $\zh_{t+1}$, we have
\begin{align}\label{eq:RMSCG_one_stage_robust_1}
F(\wh_T) - F(\w_*)
\leq &
\frac{ \| \w_0 - \w_* \|^2 }{ 2 \eta T }
+ \frac{ \eta }{ T } \sum_{t=0}^{T-1} \| \nabla g(\w_t)^\top \nabla f(g(\w_t))  - \zh_{t+1}^\top \nabla f(\yh_{t+1})  \|^2
\nonumber\\
&
+ \frac{1}{T} \sum_{t=0}^{T-1} \langle \nabla g(\w_t)^\top \nabla f(g(\w_t))  - \zh_{t+1}^\top \nabla f(\yh_{t+1}) , \w_t - \w_* \rangle
\nonumber\\
\leq &
\frac{ \epsilon_0 }{ \mu \eta T }
+ \frac{ 2\eta }{ T } \sum_{t=0}^{T-1} \| \nabla g(\w_t)^\top \nabla f(g(\w_t)) - \nabla g(\w_t)^\top \nabla f(\yh_{t+1})  \|^2
\nonumber\\
&
+ \frac{ 2\eta }{ T } \sum_{t=0}^{T-1} \| \nabla g(\w_t)^\top \nabla f(\yh_{t+1}) - \zh_{t+1}^\top \nabla f(\yh_{t+1})  \|^2
\nonumber\\
&
+ \frac{1}{T} \sum_{t=0}^{T-1} \langle \nabla g(\w_t)^\top \nabla f(g(\w_t)) - \nabla g(\w_t)^\top \nabla f(\yh_{t+1}) , \w_t - \w_* \rangle
\nonumber\\
&
+ \frac{1}{T} \sum_{t=0}^{T-1} \langle \nabla g(\w_t)^\top \nabla f(\yh_{t+1}) - \zh_{t+1}^\top \nabla f(\yh_{t+1}) , \w_t - \w_* \rangle
\nonumber\\
\leq &
\frac{ \epsilon_0 }{ \mu \eta T}
+ \frac{ 2 \eta }{ T } \cdot 8 C_g^2 L_f^2 ( 3 \log(1/\delta) + 2 ) \max(\sigma_0^2 T / m, C_g^2 D^2)
\nonumber\\
&
+ \frac{ 2 \eta }{ T } \cdot 8 C_f^2 ( 3\log(1/\delta) + 2 ) \max( \sigma_1^2 T / m, L_g^2 D^2 )
\nonumber\\
&
+ \frac{ 1 }{ T } \cdot 8 C_g L_f D ( \log(1/\delta) + 1 ) \max( \sigma_0 \sqrt{T / m}, C_g D )
\nonumber\\
&
+ \frac{ 1 }{ T } \cdot 8 C_f D ( \log(1/\delta) + 1 ) \max( \sigma_1 \sqrt{T / m}, L_g D )  ,
\end{align}
where the second inequality is due to $\mu$-optimal strong convexity and $F(\w_0) - F(\w_*) \leq \epsilon_0$,
the third inequality holds with probability at least $1 - 4\delta$ due to the four results in Lemma \ref{lemma:truncation_bounds}.

Recall $D^2 = \frac{2\epsilon_0}{\mu}$. 
To guarantee $F(\wh_T) - F(\w_*) \leq \epsilon_0 / 2$, we need to set the values for $\eta$ and $T$ as follows
\begin{align}\label{eq:RROSC_condition_of_convergence}
1) &~
\frac{ \epsilon_0 }{ \mu \eta T } 
= 
\frac{\epsilon_0}{10}
\Leftarrow
T = \frac{ 10 }{ \mu \eta },
\nonumber\\
2) &~
\frac{ 2 \eta }{ T } \cdot 8 C_g^2 L_f^2 ( 3 \log(1/\delta) + 2 ) \max(\sigma_0^2 T / m, C_g^2 D^2) \leq \frac{ \epsilon_0 }{ 10 }
\Leftarrow
\left\{
\begin{array}{l}
  \eta \leq \frac{ m \epsilon_0 }{ 160 \sigma_0^2 C_g^2 L_f^2 ( 3 \log(1/\delta) + 2 ) }   \\
  \eta \leq \frac{ 1 }{ C_g \sqrt{ 32 C_g^2 L_f^2 ( 3 \log(1/\delta) + 2 ) } }
\end{array}
\right.   ,
\nonumber\\
3) &~
\frac{ 2 \eta }{ T } \cdot 8 C_f^2 ( 3\log(1/\delta) + 2 ) \max( \sigma_1^2 T / m, L_g^2 D^2 ) \leq \frac{\epsilon_0}{10}
\Leftarrow
\left\{
\begin{array}{l}
  \eta \leq \frac{ m \epsilon_0 }{ 160 \sigma_1^2 C_f^2 ( 3 \log(1/\delta) + 2 ) }   \\
  \eta \leq \frac{ 1 }{ L_g \sqrt{ 32 C_f^2 ( 3 \log(1/\delta) + 2 ) } }
\end{array}
\right.   ,
\nonumber\\
4) &~
\frac{ 1 }{ T } \cdot 8 C_g L_f D ( \log(1/\delta) + 1 ) \max( \sigma_0 \sqrt{T / m}, C_g D ) \leq \frac{\epsilon_0}{10}
\Leftarrow
\left\{
\begin{array}{l}
  \eta \leq \frac{ m \epsilon_0 }{ 1280 \sigma_0^2 C_g^2 L_f^2 ( \log(1/\delta) + 1 )^2 }   \\
  \eta \leq \frac{ 1 }{ 16 C_g L_f ( \log(1/\delta) + 1 ) }
\end{array}
\right.
\nonumber\\
5) &~
\frac{ 1 }{ T } \cdot 8 C_f D ( \log(1/\delta) + 1 ) \max( \sigma_1 \sqrt{T / m}, L_g D )  \leq \frac{\epsilon_0}{10}
\Leftarrow
\left\{
\begin{array}{l}
  \eta \leq \frac{ m \epsilon_0 }{ 1280 \sigma_1^2 C_f^2 ( \log(1/\delta) + 1 )^2 }   \\
  \eta \leq \frac{ 1 }{ 16 L_g C_f ( \log(1/\delta) + 1 ) }   .
\end{array}
\right.
\end{align}
We can set $m = O(1)$ as a constant, instead of an increasing number in Theorem \ref{theorem:restart_MSCG}.
Therefore, we can set $\eta = O(\epsilon_0)$.

Next, we consider the $k$-th and ($k+1$)-th stage of Algorithm \ref{algorithm:RROSC}.
Suppose $\| \tilde \y_0 - g(\w_0) \| \leq \nu \sigma_0$ and $\| \tilde \z_0 - \nabla g(\w_0) \| \leq \nu \sigma_1$, which can be guaranteed by robust estimator with high probability $1-2\delta$.
Let $F(\w_k) - F(\w_*) \leq \epsilon_{k-1}$.
Then as shown above, we have probability at least $1-4\delta$ such that $F(\w_{k+1}) - F(\w_*) \leq \epsilon_{k-1} / 2$ as long as we properly set $\eta_k, T_k$ and $m_k$ for Algorithm \ref{algorithm:ROSC} according to (\ref{eq:RROSC_condition_of_convergence}).

To guarantee $F(\w_K) - F(\w_*) \leq \epsilon$ with probability $1 - 6 K \delta$, we need to set $K = \lceil \log(\epsilon_0 / \epsilon) \rceil$, which implies the total sample complexity is 
\begin{align*}
m_{tot}
= \sum_{k=1}^K T_k
= \sum_{k=1}^K O( \frac{ \log(1/\delta) }{ \mu \eta_k } )
= \sum_{k=1}^K O( \frac{ \log(1/\delta) }{ \mu \epsilon_k } )
= O(\frac{ \log(1/\delta) }{\mu \epsilon})  ,
\end{align*}
where $\eta_k = O(1/(\epsilon_{k-1}))$ according to (\ref{eq:RROSC_condition_of_convergence}) and $\epsilon_k = \epsilon_{k-1} / 2$ by assumption.

Finally, consider the truncation parameter $\lambda_k$ as follows
\begin{align*}
\lambda_k 
= &
O\big( \max( \sqrt{ T_k }, D_k ) \big)
= O\big( \max( \sqrt{ 1 / ( \mu \eta_k ) }, \sqrt{ \epsilon_{k-1} / \mu } ) \big)
\\
= &
O\big( \max( \sqrt{ 1 / ( \mu \epsilon_{k-1} ) }, \sqrt{ \epsilon_{k-1} / \mu } ) \big) .
\end{align*}
\end{proof}

\subsection{Proof of Lemma \ref{lemma:truncation_bounds} }

Before we go through our proof, we first present the following two important lemmas, in which we show six bounds related to $\| \zh_{t+1} - \nabla g(\w_t) \|$ and $\| \yh_{t+1} - g(\w_t) \|$, respectively.

\begin{lem}\label{lemma:truncation_six_bounds_grad_g}
Suppose Assumption \ref{assumption:composite_problem} holds and denote $\hat \sigma_m^2 = \sigma_1^2 / m$.
We have
\begin{align*}
1) & 
\| \zh_{t+1} - \nabla g(\w_t) \| \leq 2 L_g \| \w_t - \w_0 \| + 2 \nu \hat \sigma_m + \lambda
\\
2) &
\| \E[ \zh_{t+1} - \nabla g(\w_t) ] \| 
\leq 
\frac{\hat\sigma_m^2}{\lambda^2} ( L_g \| \w_t - \w_0 \| + \nu \hat \sigma_m ) + \frac{\hat \sigma_m^2}{\lambda}
\\
3) &
\Big( \E[ \| \zh_{t+1} - \nabla g(\w_t) \|^2 ] \Big)^{1/2} 
\leq 
\hat\sigma_m + ( L_g \| \w_t - \w_0 \| + \nu \hat\sigma_m ) \frac{\hat\sigma_m}{\lambda}
\\
4) &
\| \zh_{t+1} - \nabla g(\w_t) \|^2
\leq 
( 2L_g \| \w_t - \w_0 \| + 2\nu \hat\sigma_m + \lambda )^2
\\
5) &
\E[ \| \zh_{t+1} - \nabla g(\w_t) \|^2 ]
\leq 
( \hat\sigma_m + ( L_g \| \w_t - \w_0 \| + \nu \hat\sigma_m ) \frac{\hat\sigma_m}{\lambda} )^2
\\
6) &
\Big( \E [ \| \zh_{t+1} - \nabla g(\w_t) \|^4 ] \Big)^{1/2}
\leq 
( 2L_g \| \w_t - \w_0 \| + 2\nu \hat\sigma_m + \lambda ) \cdot ( \hat\sigma_m + ( L_g \| \w_t - \w_0 \| + \nu \hat\sigma_m ) \frac{\hat\sigma_m}{\lambda} )  . 
\end{align*}
\end{lem}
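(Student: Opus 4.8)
The plan is to condition throughout on the history up to iteration $t$ (which fixes $\w_t$ and the reference $\widetilde\z_0$) and on the high-probability event $\|\widetilde\z_0 - \nabla g(\w_0)\| \leq \nu\hat\sigma_m$, so that $\z_{t+1}$ is a fresh empirical average with $\E[\z_{t+1}] = \nabla g(\w_t)$ and $\E[\|\z_{t+1} - \nabla g(\w_t)\|^2] \leq \sigma_1^2/m = \hat\sigma_m^2$. Writing $\e_t = \z_{t+1} - \nabla g(\w_t)$, the two facts I would establish first are: (i) by the triangle inequality together with $L_g$-smoothness of $g$ and the reference accuracy, $\|\widetilde\z_0 - \nabla g(\w_t)\| \leq L_g\|\w_t - \w_0\| + \nu\hat\sigma_m$; and (ii) the truncation event $\mathcal{E} = \{\|\z_{t+1} - \widetilde\z_0\| > L_g\|\w_t - \w_0\| + \nu\hat\sigma_m + \lambda\}$ satisfies $\mathcal{E} \subseteq \{\|\e_t\| > \lambda\}$, since on $\mathcal{E}$ one has $\lambda < \|\z_{t+1} - \widetilde\z_0\| - (L_g\|\w_t - \w_0\| + \nu\hat\sigma_m) \leq \|\e_t\|$ by (i). These two observations drive everything.

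Bounds 1 and 4 are almost-sure bounds, which I would prove by a two-case split. On $\mathcal{E}^c$ we have $\zh_{t+1} = \z_{t+1}$, so $\|\zh_{t+1} - \nabla g(\w_t)\| \leq \|\z_{t+1} - \widetilde\z_0\| + \|\widetilde\z_0 - \nabla g(\w_t)\| \leq 2L_g\|\w_t - \w_0\| + 2\nu\hat\sigma_m + \lambda$ using the threshold and (i); on $\mathcal{E}$ we have $\zh_{t+1} = \widetilde\z_0$, so the quantity equals $\|\widetilde\z_0 - \nabla g(\w_t)\|$, which is even smaller by (i). Bound 4 is then just the square of Bound 1.

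For the moment bounds I would use the decomposition $\zh_{t+1} - \nabla g(\w_t) = \e_t \mathbf{1}_{\mathcal{E}^c} + (\widetilde\z_0 - \nabla g(\w_t))\mathbf{1}_{\mathcal{E}}$. Bound 2 (the bias) is the delicate step: since $\E[\e_t] = 0$ we rewrite $\E[\zh_{t+1} - \nabla g(\w_t)] = \E[(\widetilde\z_0 - \z_{t+1})\mathbf{1}_{\mathcal{E}}] = (\widetilde\z_0 - \nabla g(\w_t))\Pr(\mathcal{E}) - \E[\e_t \mathbf{1}_{\mathcal{E}}]$, then control $\Pr(\mathcal{E}) \leq \Pr(\|\e_t\| > \lambda) \leq \hat\sigma_m^2/\lambda^2$ by Chebyshev and $\|\E[\e_t\mathbf{1}_{\mathcal{E}}]\| \leq \E[\|\e_t\|\mathbf{1}_{\|\e_t\|>\lambda}] \leq \E[\|\e_t\|^2]/\lambda \leq \hat\sigma_m^2/\lambda$ (using $\|\e_t\|\leq \|\e_t\|^2/\lambda$ on $\{\|\e_t\|>\lambda\}$), and combine with (i). For Bound 3 I would bound $\E[\|\zh_{t+1} - \nabla g(\w_t)\|^2] \leq \E[\|\e_t\|^2] + \|\widetilde\z_0 - \nabla g(\w_t)\|^2 \Pr(\mathcal{E}) \leq \hat\sigma_m^2 + (L_g\|\w_t - \w_0\| + \nu\hat\sigma_m)^2 \hat\sigma_m^2/\lambda^2$ and take square roots using $\sqrt{a^2+b^2}\leq a+b$ for $a,b\geq 0$; Bound 5 is its square.

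Finally, Bound 6 would follow by factoring $\|\zh_{t+1} - \nabla g(\w_t)\|^4 \leq (2L_g\|\w_t - \w_0\| + 2\nu\hat\sigma_m + \lambda)^2 \|\zh_{t+1} - \nabla g(\w_t)\|^2$ via the almost-sure Bound 1 on one factor, taking expectation, and invoking Bound 3 on the remaining second moment. I expect the main obstacle to be Bound 2: one must resist bounding $\E\|\zh_{t+1} - \nabla g(\w_t)\|$ directly (which would lose the crucial $1/\lambda$ and $1/\lambda^2$ decay) and instead exploit the exact cancellation $\E[\e_t] = 0$, so that the residual bias is supported on the rare event $\mathcal{E}$ and is therefore of higher order in $\hat\sigma_m/\lambda$. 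The remaining bounds are mechanical once (i), (ii), and the second-moment estimate on $\e_t$ are in place.
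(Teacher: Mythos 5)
Your proposal is correct and follows essentially the same route as the paper's proof: the same two driving facts (reference accuracy plus $L_g$-smoothness bounding $\|\widetilde\z_0 - \nabla g(\w_t)\|$, and containment of the truncation event in $\{\|\e_t\| > \lambda\}$, which is the paper's $\chi_t \leq \omega_t$), the same Chebyshev bound on the event probability, the same zero-mean cancellation for the bias in 2), and the same factoring of the fourth moment through the almost-sure bound in 6). One point in your favor: your cross-term estimate $\E[\|\e_t\|\mathbf{1}_{\|\e_t\|>\lambda}] \leq \E[\|\e_t\|^2]/\lambda$ is actually more careful than the paper's corresponding intermediate step $\E[\chi_t\|\nabla g(\w_t) - \z_{t+1}\|] \leq \E[\omega_t \lambda]$, which as written caps $\|\e_t\|$ by $\lambda$ precisely on the event where it exceeds $\lambda$; both arguments land on the same final bound $\hat\sigma_m^2/\lambda$.
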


Similarly, we can have the following lemma w.r.t. $\| \yh_{t+1} - g(\w_t) \|$.
\begin{lem}\label{lemma:truncation_six_bounds_func_g}
Suppose Assumption \ref{assumption:composite_problem} holds and denote $\tilde\sigma_m^2 = \sigma_0^2 / m$.
We have
\begin{align*}
1) & 
\| \yh_{t+1} - g(\w_t) \| \leq 2 C_g \| \w_t - \w_0 \| + 2 \nu \tilde\sigma_m + \lambda
\\
2) &
\| \E[ \yh_{t+1} - g(\w_t) ] \| 
\leq 
\frac{\tilde\sigma_m^2}{\lambda^2} ( C_g \| \w_t - \w_0 \| + \nu \tilde\sigma_m ) + \frac{\tilde\sigma_m^2}{\lambda}
\\
3) &
\Big( \E[ \| \yh_{t+1} - g(\w_t) \|^2 ] \Big)^{1/2} 
\leq 
\tilde\sigma_m + ( C_g \| \w_t - \w_0 \| + \nu \tilde\sigma_m ) \frac{\tilde\sigma_m}{\lambda}
\\
4) &
\| \yh_{t+1} - g(\w_t) \|^2
\leq 
( 2C_g \| \w_t - \w_0 \| + 2\nu \tilde\sigma_m + \lambda )^2
\\
5) &
\E[ \| \yh_{t+1} - g(\w_t) \|^2 ]
\leq 
( \tilde\sigma_m + ( C_g \| \w_t - \w_0 \| + \nu \tilde\sigma_m ) \frac{\tilde\sigma_m}{\lambda} )^2
\\
6) &
\Big( \E [ \| \yh_{t+1} - g(\w_t) \|^4 ] \Big)^{1/2}
\leq 
( 2C_g \| \w_t - \w_0 \| + 2\nu \tilde\sigma_m + \lambda ) \cdot ( \tilde\sigma_m + ( C_g \| \w_t - \w_0 \| + \nu \tilde\sigma_m ) \frac{\tilde\sigma_m}{\lambda} )  . 
\end{align*}
\end{lem}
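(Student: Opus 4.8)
The plan is to prove Lemma~\ref{lemma:truncation_six_bounds_func_g} in complete parallel to Lemma~\ref{lemma:truncation_six_bounds_grad_g}, via the substitutions $\nabla g(\w_t)\mapsto g(\w_t)$, $\z_{t+1}\mapsto\y_{t+1}$, $\zh_{t+1}\mapsto\yh_{t+1}$, $\widetilde\z_0\mapsto\widetilde\y_0$, $L_g\mapsto C_g$ (here $g$ is $C_g$-Lipschitz under Assumption~\ref{assumption:composite_problem}, playing the role $\nabla g$ plays with its $L_g$-Lipschitz constant in the gradient lemma), and $\hat\sigma_m\mapsto\tilde\sigma_m$. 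Throughout I would use the three facts attached to the untruncated mini-batch mean: $\E[\y_{t+1}]=g(\w_t)$, $\E[\|\y_{t+1}-g(\w_t)\|^2]\leq\tilde\sigma_m^2$, and the reference bound $\|\widetilde\y_0-g(\w_0)\|\leq\nu\tilde\sigma_m$ guaranteed with high probability by Lemma~\ref{lemma:median_of_mean}. The six estimates are not independent, so I would derive them in the order $1\!\to\!4$, $3\!\to\!5$, $2$, and finally $6$, each later bound reusing an earlier one.

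The technical heart is a single event-containment observation that decouples the truncation probability from both the reference error and the displacement $\|\w_t-\w_0\|$. Writing the truncation threshold as $\rho_t:=C_g\|\w_t-\w_0\|+\nu\tilde\sigma_m+\lambda$, I would note that by the triangle inequality together with $\|g(\w_t)-g(\w_0)\|\leq C_g\|\w_t-\w_0\|$ and $\|\widetilde\y_0-g(\w_0)\|\leq\nu\tilde\sigma_m$, the raw deviation satisfies $\|\y_{t+1}-\widetilde\y_0\|\leq\|\y_{t+1}-g(\w_t)\|+C_g\|\w_t-\w_0\|+\nu\tilde\sigma_m$. Hence the truncation event $\{\|\y_{t+1}-\widetilde\y_0\|>\rho_t\}$ is contained in $\{\|\y_{t+1}-g(\w_t)\|>\lambda\}$, so Chebyshev's inequality gives $\Pr[\text{trunc}]\leq\tilde\sigma_m^2/\lambda^2$. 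For bound~$1$ I would split into the two defining cases: on the non-truncation event $\yh_{t+1}=\y_{t+1}$ and $\|\y_{t+1}-g(\w_t)\|\leq\|\y_{t+1}-\widetilde\y_0\|+\nu\tilde\sigma_m+C_g\|\w_t-\w_0\|\leq 2C_g\|\w_t-\w_0\|+2\nu\tilde\sigma_m+\lambda$, while on the truncation event $\yh_{t+1}=\widetilde\y_0$ yields the strictly smaller bound $C_g\|\w_t-\w_0\|+\nu\tilde\sigma_m$; bound~$4$ is then just the square of bound~$1$.

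For the moment bounds~$3$ and~$5$ I would decompose $\E[\|\yh_{t+1}-g(\w_t)\|^2]$ over the two events: the non-truncation contribution is at most $\E[\|\y_{t+1}-g(\w_t)\|^2]\leq\tilde\sigma_m^2$, and the truncation contribution is at most $(C_g\|\w_t-\w_0\|+\nu\tilde\sigma_m)^2\Pr[\text{trunc}]\leq(C_g\|\w_t-\w_0\|+\nu\tilde\sigma_m)^2\tilde\sigma_m^2/\lambda^2$; applying $\sqrt{a^2+b^2}\leq a+b$ produces bound~$3$ exactly, and bound~$5$ is its square. For the bias bound~$2$ I would exploit $\E[\y_{t+1}-g(\w_t)]=0$ to write $\E[\yh_{t+1}-g(\w_t)]=-\E[(\y_{t+1}-g(\w_t))\mathbb{1}_{\text{trunc}}]+(\widetilde\y_0-g(\w_t))\Pr[\text{trunc}]$; the first term is controlled by $\mathbb{1}_{\text{trunc}}\leq\|\y_{t+1}-g(\w_t)\|/\lambda$ on the event, giving $\tilde\sigma_m^2/\lambda$, and the second by the same probability bound, giving the $(C_g\|\w_t-\w_0\|+\nu\tilde\sigma_m)\tilde\sigma_m^2/\lambda^2$ term. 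Finally bound~$6$ follows mechanically from $\E[\|\yh_{t+1}-g(\w_t)\|^4]\leq(\text{a.s. bound of }\|\yh_{t+1}-g(\w_t)\|^2)\cdot\E[\|\yh_{t+1}-g(\w_t)\|^2]$, combining bound~$4$ with bound~$5$ and taking a square root.

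I expect the main obstacle to be exactly the event-containment step: the threshold $\rho_t$ in~(\ref{eq:RMSCG_ref_truncation_y}) must be peeled apart by the triangle inequality so that all the ``benign'' quantities ($C_g\|\w_t-\w_0\|$ from Lipschitzness and $\nu\tilde\sigma_m$ from the reference estimator) are absorbed, leaving the excess as the clean tail event $\{\|\y_{t+1}-g(\w_t)\|>\lambda\}$. Everything downstream (Chebyshev, Young/AM-GM in the form $\sqrt{a^2+b^2}\leq a+b$, and the product splitting for the fourth moment) is routine once this decoupling is in place; the only bookkeeping care needed is to keep the scaling $\tilde\sigma_m=\sigma_0/\sqrt{m}$ consistent with the reference bound so that the stated constants match.
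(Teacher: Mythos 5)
Your proposal is correct and takes essentially the same approach as the paper, which omits this proof entirely on the grounds that it is ``highly anagolous'' to that of Lemma~\ref{lemma:truncation_six_bounds_grad_g}: your argument (containment of the truncation event in $\{\|\y_{t+1}-g(\w_t)\|>\lambda\}$ via the triangle inequality, $C_g$-Lipschitzness of $g$ and the reference bound; Chebyshev for the truncation probability; the case split for 1) and 4); the two-event decomposition with $\sqrt{a^2+b^2}\le a+b$ for 3) and 5); zero-mean cancellation for 2); and $\E[\|\cdot\|^4]\le(\sup\|\cdot\|^2)\cdot\E[\|\cdot\|^2]$ for 6)) is precisely the paper's gradient-side proof transcribed under the substitutions $\nabla g\mapsto g$, $\z_{t+1}\mapsto\y_{t+1}$, $L_g\mapsto C_g$, $\hat\sigma_m\mapsto\tilde\sigma_m$. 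If anything, your handling of the bias bound 2) is cleaner than the paper's: you bound the truncated-tail term by $\E[\|\y_{t+1}-g(\w_t)\|^2]/\lambda$ directly, whereas the corresponding step in the proof of Lemma~\ref{lemma:truncation_six_bounds_grad_g} passes through $\E[\chi_t\|\nabla g(\w_t)-\z_{t+1}\|]\le\E[\omega_t\lambda]$ (in its indicator notation), an inequality that fails pointwise on the truncation event, even though the final $\hat\sigma_m^2/\lambda$ term it yields is correct and is exactly what your route delivers.
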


The following lemma gives Bernstein inequality for Martingale difference sequence.
\begin{lem}
\label{lemma:bernstein_inequality_0}
(Lemma 1 in \cite{peel2013empirical}, Bernstein inequality for martingales)
Suppose $X_1, ..., X_n$ is a sequence of random variables such that $0 \leq X_i \leq 1$.
Define the martingale difference sequence $\{ Y_n = \E[ X_n | X_1, ..., X_{n-1} ] - X_n \}$ and note $V_n$ the sum of the conditional variances
$$
V_n = \sum_{i=1}^n \V [X_i | X_1, ..., X_{i-1}] 
=
\sum_{i=1}^n \Big( X_i - \E[X_i | X_1, ..., X_{i-1}] \Big)^2  .
$$
Let $S_n = \sum_{i=1}^n X_i$.
Then for all $a, b \geq 0$,
$$
\P\Big[ \sum_{i=1}^n \E[ X_i | X_1, ..., X_{i-1} ] - S_n \geq a, V_n \leq b \Big]
\leq 
\exp\Big( - \frac{a^2}{2b + 2a/3} \Big)  .
$$
\end{lem}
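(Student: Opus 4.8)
The plan is to prove this one-sided tail bound by the classical exponential-supermartingale (Chernoff) argument underlying Freedman's inequality. First I would record the structural facts about the martingale differences $Y_i = \E[X_i \mid X_1,\dots,X_{i-1}] - X_i$: by construction $\E[Y_i \mid X_1,\dots,X_{i-1}] = 0$, and since $0 \le X_i \le 1$ they satisfy the one-sided bound $Y_i \le \E[X_i \mid X_1,\dots,X_{i-1}] \le 1$. Note that only this upper bound, not $|Y_i| \le 1$, is needed because the claim is a one-sided deviation inequality, and that $\sum_{i=1}^n Y_i = \sum_{i=1}^n \E[X_i\mid X_1,\dots,X_{i-1}] - S_n$ is precisely the quantity appearing in the event.

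The technical core is a conditional moment generating function estimate. The key scalar inequality is that for every $\lambda > 0$ and every $x \le 1$ one has $e^{\lambda x} \le 1 + \lambda x + (e^\lambda - 1 - \lambda) x^2$, which holds because the map $x \mapsto (e^{\lambda x} - 1 - \lambda x)/x^2$ is nondecreasing and is therefore dominated by its value $e^\lambda - 1 - \lambda$ at $x = 1$ throughout $(-\infty,1]$. Applying this with $x = Y_i$ and taking the conditional expectation, the linear term vanishes and the quadratic coefficient multiplies $\V[X_i \mid X_1,\dots,X_{i-1}]$, so that after the elementary step $1 + u \le e^u$ I obtain
$$\E\big[e^{\lambda Y_i} \mid X_1,\dots,X_{i-1}\big] \le \exp\Big( (e^\lambda - 1 - \lambda)\,\V[X_i \mid X_1,\dots,X_{i-1}] \Big).$$

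Next I would set $\psi(\lambda) = e^\lambda - 1 - \lambda \ge 0$ and argue that $M_n = \exp\big(\lambda \sum_{i=1}^n Y_i - \psi(\lambda) V_n\big)$ is a supermartingale with $\E[M_n] \le M_0 = 1$; this follows immediately from the conditional estimate above together with the fact that the increment $V_n - V_{n-1} = \V[X_n \mid X_1,\dots,X_{n-1}]$ is determined by $X_1,\dots,X_{n-1}$. On the event $\{V_n \le b\}$ we have $M_n \ge \exp\big(\lambda \sum_{i=1}^n Y_i - \psi(\lambda) b\big)$ since $\psi(\lambda) \ge 0$, and combining this lower bound with Markov's inequality $\E[M_n] \le 1$ restricted to that event gives, for every $\lambda > 0$,
$$\P\Big[ \sum_{i=1}^n Y_i \ge a,\ V_n \le b \Big] \le \exp\big( -\lambda a + \psi(\lambda) b \big).$$

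It then remains to optimize the exponent over $\lambda$. The minimizer is $\lambda^\star = \log(1 + a/b)$, at which the bound becomes $\exp(-b\,h(a/b))$ with $h(u) = (1+u)\log(1+u) - u$, and I would finish with the standard elementary estimate $h(u) \ge u^2/(2(1 + u/3))$ for $u \ge 0$; substituting $u = a/b$ and clearing denominators yields exactly $\exp(-a^2/(2b + 2a/3))$. The main obstacle is organizational rather than deep: one must intersect correctly with the sublevel event $\{V_n \le b\}$ by bounding $M_n$ from below there (instead of trying to control $V_n$ in expectation), and one must verify the two scalar inequalities — the convexity bound giving the conditional MGF estimate and the final $h(u) \ge u^2/(2 + 2u/3)$ — in the correct order so that the supermartingale property and the Chernoff optimization chain together cleanly.
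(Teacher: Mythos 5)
Your proposal is correct, and it is the standard exponential-supermartingale proof of Freedman's inequality: the one-sided bound $Y_i \le 1$, the scalar estimate $e^{\lambda x} \le 1 + \lambda x + (e^{\lambda}-1-\lambda)x^2$ for $x \le 1$ (valid since $(e^{\lambda x}-1-\lambda x)/x^2$ is nondecreasing), the supermartingale $M_n = \exp\bigl(\lambda \sum_i Y_i - \psi(\lambda) V_n\bigr)$ with predictable $V_n$, the restriction of Markov's inequality to the sublevel event $\{V_n \le b\}$, the optimizer $\lambda^\star = \log(1+a/b)$ yielding the Bennett exponent $b\,h(a/b)$, and the elementary bound $h(u) \ge u^2/(2+2u/3)$ are all deployed correctly and in the right order. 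Note, however, that the paper does not prove this lemma at all: it is imported verbatim as Lemma 1 of the cited reference and the paper only adds the scaling remark (substituting $L_i = -k(X_i - 1)$) to extend it to variables in $[0,k]$, so there is no in-paper proof to compare against — your argument supplies exactly the missing derivation. One point worth flagging: the paper's statement contains a typo in the definition of $V_n$, writing $\sum_{i=1}^n \bigl(X_i - \E[X_i \mid X_1,\dots,X_{i-1}]\bigr)^2$ as if it equaled the sum of conditional variances; your proof implicitly uses the correct reading, the predictable quadratic variation $V_n = \sum_{i=1}^n \E\bigl[(X_i - \E[X_i\mid \mathcal F_{i-1}])^2 \mid \mathcal F_{i-1}\bigr]$, which is the only version for which the increment of $V_n$ is $\mathcal F_{n-1}$-measurable and your supermartingale step goes through — so your proof also silently repairs the statement, which is worth making explicit.
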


It is can be verified that Bernstein inequality also holds for random variables $L_1, ..., L_n$ such that $0 \leq L_i \leq k$.
$$
\P\Big[ \sum_{i=1}^n L_i - \sum_{i=1}^n \E[ L_i | L_1, ..., L_{i-1} ] \geq a, \sum_{i=1}^n \V[L_i|L_1, ..., L_{i-1}] \leq b \Big]
\leq 
\exp\Big( - \frac{a^2}{2b + 2ak/3} \Big)  ,
$$
which is the form used in our proofs.
We can derive it easily by letting $L_i = -k (X_i-1)$ in Lemma \ref{lemma:bernstein_inequality_0}, as suggested in \cite{freedman1975tail}.

\begin{proof} (of Lemma \ref{lemma:truncation_bounds})

First of all, we present how we can apply Bernstein inequality to get the confidence bound for the martingale difference sequence.
Let $\gamma_0, ..., \gamma_{T-1}$ be a sequence of random variables and we can define the following upper bounds $k$, $r$ and $s$ w.r.t. $\gamma_t$:
\begin{align*}
| \gamma_t | \leq k, ~~~
| \E_t[\gamma_t] | \leq r, ~~~
\Big( \E_t[ \gamma_t^2 ] \Big)^{1/2} \leq s   ,
\end{align*}
where $\E_t$ is the expectation conditioned on $\gamma_{t-1}$.
By Bernstein inequality in Lemma \ref{lemma:bernstein_inequality_0}, for any $\tau > 0$, we can find $a > 0$ such that
\begin{align}\label{eq:apply_bernstein_inequality}
&
\P \Big[ \sum_{t=0}^{T-1} \gamma_t \geq T r + a(2\tau + 1) \Big]
\leq 
\P \Big[ \sum_{t=0}^{T-1} \gamma_t \geq \sum_{t=0}^{T-1} \E_t[\gamma_t] + a(2\tau + 1) \Big]
\nonumber\\
\leq &
\exp \Big( - \frac{ (2\tau + 1)^2 a^2 }{ 2Ts^2 + 2/3 k a (2\tau + 1) } \Big)
\leq 
\exp \Big( - \frac{ (2\tau + 1)^2 }{ 2 T s^2 / a^2 + 2/3 \cdot k(2\tau + 1) / a } \Big)  ,
\end{align}
where the first inequality is due to $T r \geq \sum_{t=0}^{T-1} \E_t[\gamma_t]$.
When the following two inequalities hold
\begin{align}\label{eq:condition_in_bernstein_inequality}
s \sqrt{T} \leq a, ~~~~~
k \leq 3a   ,
\end{align}
we have
\begin{align}\label{eq:bernstein_inequality_hold}
\P  \Big[ \sum_{t=0}^{T-1} \gamma_t \geq T r + a(2\tau + 1) \Big] 
\leq \exp\Big( - \frac{ ( 2\tau + 1 )^2 }{ 2 + 2 (2\tau + 1) } \Big)
\leq \exp(-\tau)   .
\end{align}

To this end, we need to find $a$ such that $s \sqrt{T} \leq a$ and $k \leq 3a$.
Recall that $\nu \leq \sqrt{T}$ and $\lambda = \max( L_g D, \sigma_m \sqrt{T} ) + \nu \sigma_m$.
We show how to prove this lemma in the following.

1) Let $\gamma_t = \langle \nabla g(\w_t)^\top \nabla f(\yh_{t+1}) - \zh_{t+1}^\top \nabla f(\yh_{t+1}) , \w_t - \w \rangle$ be a random variable conditioned on $\w_t$ and $\E_t[\cdot]$ denote the expectation conditioned on $\w_t$.
Recall that $f$ is $C_f$ Lipschitz continuous, and $\| \w_t - \w \| \leq \| \w_t - \w_0 \| + \| \w_0 - \w \| \leq 2D$ by Line \ref{algorithm:ROSC:line:update_w} of Algorithm \ref{algorithm:ROSC} and the assumption.
By the bounds $1), 2)$ and $3)$ in Lemma \ref{lemma:truncation_six_bounds_grad_g}, we can define the following notations of $k$, $r$ and $s$ for $\gamma_t$.
\begin{align*}
&
| \gamma_t |  
\leq 
2C_f D \| \nabla g(\w_t) - \zh_{t+1} \|
\leq 
2C_f D ( 2 L_g \| \w_t - \w_0 \| + 2\nu \hat\sigma_m + \lambda ) 
\stackrel{\Delta}{=} k ,
\\
&
| \E[ \gamma_t ] |
=  
\langle \E[ ( \nabla g(\w_t) - \zh_{t+1} )^\top \nabla f(\yh_{t+1}) ] , \w_t - \w \rangle
\\
& \qquad \quad
\leq 
2 C_f D \| \E[ \nabla g(\w_t) - \zh_{t+1} ] \|
\\
& \qquad \quad
\leq 
2 C_f D ( \frac{\hat\sigma_m^2}{\lambda^2} (L_g \| \w_t - \w_0 \| + \nu \hat\sigma_m) + \frac{\hat\sigma_m^2}{\lambda} )
\stackrel{\Delta}{=} r,
\\
&
\Big( \E[ \gamma_t^2 ] \Big)^{1/2}
\leq 
\Big( \E[ (2 C_f D)^2 \| \nabla g(\w_t) - \zh_{t+1} \|^2 ] \Big)^{1/2}
\\
& \qquad \qquad \quad
= 
2 C_f D \Big( \E[ \| \nabla g(\w_t) - \zh_{t+1} \|^2 ] \Big)^{1/2}
\\
& \qquad \qquad \quad
\leq 
2 C_f D ( \hat\sigma_m + ( L_g \| \w_t - \w_0 \| + \nu \hat\sigma_m ) \frac{\hat\sigma_m}{\lambda} )  
\stackrel{\Delta}{=} s.
\end{align*}

Then to make $s\sqrt{T} \leq a$ and $k/3 \leq a$ hold, we let $a = 4C_f D \max( \hat\sigma_m \sqrt{T}, L_g D )$:
\begin{align*}
s \sqrt{T} 
= &
2C_f D (\hat\sigma_m + ( L_g \| \w_t - \w_0 \| + \nu \hat\sigma_m ) \frac{\hat\sigma_m}{\lambda} ) \sqrt{T}
\\
\leq &
2 C_f D \cdot 2 \hat\sigma_m \sqrt{T}
\leq 
4 C_f D \max( \hat\sigma_m \sqrt{T}, L_g D )
= a  ,
\\
k / 3
= &
2/3 C_f D ( 2L_g D + 2 \nu \hat\sigma_m + \lambda )
\\
\leq &
2/3 C_f D \cdot 3 \lambda 
=
2 C_f D \lambda
\\
\leq &
2 C_f D ( \max( \hat\sigma_m \sqrt{T}, L_g D ) + \hat\sigma_m \sqrt{T} )
\\
\leq &
4 C_f D \max( \hat\sigma_m \sqrt{T}, L_g D ) )
= a  ,
\end{align*}
which satisfy (\ref{eq:condition_in_bernstein_inequality}).

We also have the upper bound for $r$ as follows
\begin{align*}
r = &
2 C_f D ( \frac{\hat\sigma_m^2}{\lambda^2} (L_g D + \nu \hat\sigma_m) + \frac{\hat\sigma_m^2}{\lambda} )
\\
\leq &
4 C_f D \frac{\hat\sigma_m^2}{\lambda}
=
4 C_f D \frac{\hat\sigma_m^2}{ \max( L_g D, \hat\sigma_m \sqrt{T} ) + \nu \hat\sigma_m }
\\
\leq &
4 C_f D \frac{\hat\sigma_m^2}{ \sigma_m \sqrt{T} }
= 
4 C_f D \frac{\hat\sigma_m}{ \sqrt{T} }  ,
\end{align*}
which, with the setting of $a$, implies the LHS of (\ref{eq:bernstein_inequality_hold}):
\begin{align*}
& \P \Big[ \sum_{t=0}^{T-1} \gamma_t \geq T r + 4C_f D \max(\hat\sigma_m \sqrt{T}, L_g D) ( 2\tau + 1 ) \Big]
\\
\geq &
\P \Big[ \sum_{t=0}^{T-1} \gamma_t \geq T \cdot 4 C_f D \frac{\hat\sigma_m}{ \sqrt{T} } + 4C_f D \max(\hat\sigma_m \sqrt{T}, L_g D) ( 2\tau + 1 ) \Big] 
\\
= &
\P \Big[ \sum_{t=0}^{T-1} \gamma_t \geq 4 C_f D \hat\sigma_m \sqrt{T} + 4C_f D \max(\hat\sigma_m \sqrt{T}, L_g D) ( 2\tau + 1 ) \Big]
\\
\geq &
\P \Big[ \sum_{t=0}^{T-1} \gamma_t \geq 8 C_f D \max(\hat\sigma_m \sqrt{T}, L_g D) ( \tau + 1 ) \Big] .
\end{align*}

Plugging the above lower bound to (\ref{eq:bernstein_inequality_hold}) leads to 
\begin{align*}
&
\P \Big[ \sum_{t=0}^{T-1} \gamma_t \geq 8 C_f D \max(\hat\sigma_m \sqrt{T}, L_g D) ( \tau + 1 ) \Big]
\leq 
\exp( - \tau )  .
\end{align*}
Finally, replacing $\tau = \log(1/\delta)$, we prove the result.

2) This proof is as the one of result 1), where the difference is that we use $C_g$-Lipschitz continuity of $g$ in truncation, instead of $L_g$-Lipschitz continuity of $\nabla g$.
Since the proof is very similar, we skip the proof and state that the following inequality holds with probability at least $1-\delta$:
\begin{align*}
&
\sum_{t=0}^{T-1} \langle \nabla g(\w_t)^\top \nabla f(g(\w_t)) - \nabla g(\w_t)^\top \nabla f(\yh_{t+1}) , \w_t - \w \rangle 
\leq 
8 C_g L_f D \max( \tilde\sigma_m \sqrt{T}, C_g D ).
\end{align*}

3) Let $\gamma_t = \| \nabla g(\w_t)^\top \nabla f(\yh_{t+1}) - \zh_{t+1}^\top \nabla f(\yh_{t+1}) \|^2$ be a random variable conditioned on $\w_t$.
By the bounds $4), 5)$ and $6)$ in Lemma \ref{lemma:truncation_six_bounds_grad_g}, we can define the following notations of $k, r$ and $s$ for $\gamma_t$: 
\begin{align*}
&
| \gamma_t |
\leq C_f^2 \| \nabla g(\w_t) - \zh_{t+1} \|^2
\leq C_f^2 ( 2L_g \| \w_t - \w_0 \| + 2 \nu \hat\sigma_m + \lambda )^2
\stackrel{\Delta}{=} k ,
\\
&
| \E[\gamma_t] |
= \E[ \| \nabla g(\w_t)^\top \nabla f(\yh_{t+1}) - \zh_{t+1}^\top \nabla f(\yh_{t+1}) \|^2 ]
\leq C_f^2 \E[ \| \nabla g(\w_t) - \zh_{t+1} \|^2 ]
\\
& \qquad \quad
\leq C_f^2 ( \hat\sigma_m + ( L_g \| \w_t - \w_0 \| + \nu \hat\sigma_m ) \frac{\hat\sigma_m}{\lambda} )^2
\stackrel{\Delta}{=} r  ,
\\
&
\Big( \E[ \gamma_t^2 ] \Big)^{1/2}
\leq C_f^2 ( 2L_g \| \w_t - \w_0 \| + 2 \nu \hat\sigma_m + \lambda ) \cdot ( \hat\sigma_m + ( L_g \| \w_t - \w_0 \| + \nu \hat\sigma_m ) \frac{\hat\sigma_m}{\lambda} )
\stackrel{\Delta}{=} s  .
\end{align*}

Then to make $s\sqrt{T} \leq a$ and $k/3 \leq a$ hold, we let $a = 12 C_f^2 \max( \hat\sigma_m^2 T, L_g^2 D^2 )$:
\begin{align*}
s\sqrt{T}
= & 
C_f^2 ( 2L_g \| \w_t - \w_0 \| + 2 \nu \hat\sigma_m + \lambda ) \cdot ( \hat\sigma_m + ( L_g \| \w_t - \w_0 \| + \nu \hat\sigma_m ) \frac{\hat\sigma_m}{\lambda} ) \sqrt{T}
\\
\leq &
C_f^2 \cdot 3 \lambda \cdot 2 \hat\sigma_m \sqrt{T} 
= 6 C_f^2 \hat\sigma_m \sqrt{T} \cdot ( \max( \hat\sigma_m \sqrt{T}, L_g D ) + \nu \hat\sigma_m )
\\
\leq &
12 C_f^2 \max( \hat\sigma_m^2 T, L_g^2 D^2 )
= a ,
\\
k/3
= & 
C_f^2 ( 3 \lambda )^2 / 3
= 3 C_f^2 ( \max( \hat\sigma_m \sqrt{T}, L_g D ) + \nu \hat\sigma_m )^2
\\
\leq &
12 C_f^2 \max( \hat\sigma_m^2 T, L_g^2 D^2 )
= a  .
\end{align*}

We also have the upper bound for $r$ as follows
\begin{align*}
r 
= C_f^2 ( \hat\sigma_m + ( L_g D + \hat\sigma_m \sqrt{T} ) \frac{\hat\sigma_m}{\lambda} )^2
\leq 4 C_f^2 \hat\sigma_m^2   ,
\end{align*}
where we use $\lambda = \max( \hat\sigma_m \sqrt{T}, D ) + \nu \hat\sigma_m \geq \max( \hat\sigma_m \sqrt{T}, L_g D )$ .

Using the above upper bound of $r$ and the setting of $a$, by (\ref{eq:bernstein_inequality_hold}), we have
\begin{align*}
&
\P \Big[ \sum_{t=0}^{T-1} \gamma_t \geq 8 C_f^2 ( 2 + 3\tau ) \max( \hat\sigma_m^2 T, L_g^2 D^2 ) \Big]
\\
\leq &
\P \Big[ \sum_{t=0}^{T-1} \gamma_t \geq 4 C_f^2 \hat\sigma_m^2 T + 12 C_f^2 \max( \hat\sigma_m^2 T, L_g^2 D^2 ) ( 2\tau + 1 ) \Big]
\\
\leq & 
\P \Big[ \sum_{t=0}^{T-1} \gamma_t \geq T r + 12 C_f^2 \max( \hat\sigma_m^2 T, L_g^2 D^2 ) ( 2\tau + 1 ) \Big]
\\
\leq &
\exp(-\tau) .
\end{align*}
Finally, replacing $\tau = \log(1/\delta)$, we prove the result.

4) This proof is as the one of result $3)$, where the difference is that we use $C_g$-Lipschitz continuity of $g$ in truncation, instead of $L_g$-Lipschitz continuity of $\nabla g$.
Since the proof is very similar, we skip the proof and state that the following inequality holds with probability at least $1-\delta$:
\begin{align*}
&
\sum_{t=0}^{T-1} \| \nabla g(\w_t)^\top \nabla f(g(\w_t)) - \nabla g(\w_t)^\top \nabla f(\yh_{t+1}) \|^2 
\leq 
8 C_g^2 L_f^2 ( 3\log(1/\delta) + 2 ) \max( \tilde\sigma_m^2 T, C_g^2 D^2 ).
\end{align*}



\end{proof}

\subsection{Proof of Lemma \ref{lemma:truncation_six_bounds_grad_g} }\label{section:proof:lemma:truncation_six_bounds_grad_g}
\begin{proof} (of Lemma \ref{lemma:truncation_six_bounds_grad_g})
First we define two indicators
\begin{align*}
\chi_t = {\bf 1}_{ \| \z_{t+1} - \tilde \z_0 \| \geq L_g \| \w_t - \w_0 \| + \nu \hat \sigma_m + \lambda } ,
~~~
\omega_t = {\bf 1}_{ \| \z_{t+1} - \nabla g(\w_t) \| \geq \lambda }   .
\end{align*}

It can be verified that $\chi_t \leq \omega_t$ since $\omega_t = 1$ is a necessary condition of $\chi_t = 1$, but not sufficient:
\begin{align*}
\| \z_{t+1} - \tilde \z_0 \|
\leq &
\| \z_{t+1} - \nabla g(\w_t) \| + \| \nabla g(\w_t) - \tilde \z_0 \|
\\
\leq &
\| \z_{t+1} - \nabla g(\w_t) \| + \| \nabla g(\w_t) - \nabla g(\w_0) \| + \| \nabla g(\w_0) - \tilde \z_0 \|
\\
\leq &
\| \z_{t+1} - \nabla g(\w_t) \| + L_g \| \w_t - \w_0 \| + \nu \hat \sigma_m   ,
\end{align*}
where the last inequality is due to $L_g$-smoothness of $g$.
One one hand, when $\| \z_{t+1} - \nabla g(\w_t) \| \leq \lambda$, i.e., $\omega_t = 0$, we must have $\z_{t+1} - \tilde \z_0 \leq \lambda + L_g \| \w_t - \w_0 \| + \nu \hat \sigma_m$, i.e., $\chi_t = 0$.
On the other hand, when $\| \z_{t+1} - \nabla g(\w_t) \| \geq \lambda$, i.e., $\omega_t = 1$, it is unknown whether $\z_{t+1} - \tilde \z_0 \geq \lambda + L_g \| \w_t - \w_0 \| + \nu \sigma_m$, i.e., $\chi_t = 1$.
Therefore, $\omega_t \geq \chi_t$.

The above inequalities also show
\begin{align*}
\| \nabla g(\w_t) - \tilde \z_0 \| 
\leq 
L_g \| \w_t - \w_0 \| + \nu \sigma_m  .
\end{align*}

To better show the relation between $\zh_{t+1}$ and $\nabla g(\w_t)$, we can also rewrite the following three equivalent forms:
\begin{align}
\label{eq:RMSCG_ref_three_bounds_three_forms_1}
\zh_{t+1} - \nabla g(\w_t)
= &
\chi_t ( \tilde \z_0 - \nabla g(\w_t) )
+ (1-\chi_t) ( \z_{t+1} - \nabla g(\w_t) )
\\
\label{eq:RMSCG_ref_three_bounds_three_forms_2}
= &
\chi_t ( \tilde \z_0 - \z_{t+1} ) + (\z_{t+1} - \nabla g(\w_t))
\\
\label{eq:RMSCG_ref_three_bounds_three_forms_3}
= &
(1-\chi_t) ( \z_{t+1} - \tilde \z_0 ) + (\tilde \z_0 - \nabla g(\w_t))   .
\end{align}

$1$) To prove the first inequality, i.e., the absolute bound of $\| \zh_{t+1} - \nabla g(\w_t) \|$, we use (\ref{eq:RMSCG_ref_three_bounds_three_forms_3}) and have
\begin{align*}
\| \zh_{t+1} - \nabla g(\w_t) \|
= &
\| (1-\chi_t) ( \z_{t+1} - \tilde \z_0 ) + (\tilde \z_0 - \nabla g(\w_t)) \|
\\
\leq &
(1-\chi_t) \| \z_{t+1} - \tilde \z_0 \| + \| \tilde \z_0 - \nabla g(\w_t) \|
\\
\leq &
L_g \| \w_t - \w_0 \| + \nu \sigma_m + \lambda
+ L_g \| \w_t - \w_0 \| + \nu \sigma_m
\\
= &
2( L_g \| \w_t - \w_0 \| + \nu \sigma_m ) + \lambda   .
\end{align*}

$2$) To prove the second result w.r.t. $\| \E[ \zh_{t+1} - \nabla g(\w_t) ] \|$, we first take a closer look to $\E[ \zh_{t+1} - \nabla g(\w_t) ]$ from the perspective of (\ref{eq:RMSCG_ref_three_bounds_three_forms_2}) as follows
\begin{align*}
\E[ \zh_{t+1} - \nabla g(\w_t) ]
= &
\E[ \chi_t (\tilde \z_0 - \z_{t+1}) + ( \z_{t+1} - \nabla g(\w_t) ) ]
\\
= &
\E[ \chi_t (\tilde \z_0 - \z_{t+1}) ]
\\
= & 
\E[ \chi_t ( \tilde \z_0 - \nabla g(\w_t) + \nabla g(\w_t) - \z_{t+1} ) ] ,
\end{align*}
where the second equality is due to $\E[ \z_{t+1} - \nabla g(\w_t) ] = 0$.

Now we consider $\| \E[ \zh_{t+1} - \nabla g(\w_t) ] \|$ as follows
\begin{align*}
\| \E[ \zh_{t+1} - \nabla g(\w_t) ] \|
= &
\| \E[ \chi_t ( \tilde \z_0 - \nabla g(\w_t) + \nabla g(\w_t) - \z_{t+1} ) ] \|
\\
\leq &
\E \| \chi_t ( \tilde \z_0 - \nabla g(\w_t) + \nabla g(\w_t) - \z_{t+1} ) \|
\\
\leq &
\E \| \chi_t ( \tilde \z_0 - \nabla g(\w_t) ) \| + \E \| \chi_t ( \nabla g(\w_t) - \z_{t+1} ) \|
\\
= &
\E [ \chi_t \| ( \tilde \z_0 - \nabla g(\w_t) ) \| ] + \E [ \chi_t \| \nabla g(\w_t) - \z_{t+1} \| ]
\\
\leq &
\E [ \omega_t ( L_g \| \w_t - \w_0 \| + \nu \sigma_m ) ] + \E [ \omega_t \lambda ]
\\
\leq &
\frac{\sigma_m^2}{\lambda^2} ( L_g \| \w_t - \w_0 \| + \nu \sigma_m ) + \frac{\sigma_m^2}{\lambda},
\end{align*}
where the first inequality is due to Jensen's inequality and the convexity of $\| \cdot \|$.
The third inequality is due to $\chi_t \leq \omega_t$.
The last inequality is due to Markov inequality for a random variable $X$ and constant $a$:
\begin{align*}
\P [ X \geq a ] \leq \frac{\E[X]}{a}  .
\end{align*}
Here we specifically have
\begin{align*}
\P [ \omega_t = 1 ]
=
\P [ \| \z_{t+1} - \nabla g(\w_t) \|^2 \geq \lambda^2 ] 
\leq 
\frac{ \E [ \| \z_{t+1} - \nabla g(\w_t) \|^2 ] }{\lambda^2}
\leq
\frac{ \sigma_m^2 }{\lambda^2}  .
\end{align*}

$3$) To prove the third result w.r.t. $\Big( \E [ \| \zh_{t+1} - \nabla g(\w_t) \|^2 ] \Big)^{1/2}$, we use (\ref{eq:RMSCG_ref_three_bounds_three_forms_3}) and have
\begin{align*}
&
\Big( \E [ \| \zh_{t+1} - \nabla g(\w_t) \|^2 ] \Big)^{1/2}
\\
= &
\Big( \E [ \chi_t \| \tilde \z_0 - \nabla g(\w_t) \|^2 ] \Big)^{1/2}
+ \Big( \E [ (1-\chi_t) \| \z_{t+1} - \nabla g(\w_t) \|^2 ] \Big)^{1/2}
\\
\leq &
\Big( \E [ \chi_t \| \tilde \z_0 - \nabla g(\w_t) \|^2 ] \Big)^{1/2}
+ \Big( \E [ \| \z_{t+1} - \nabla g(\w_t) \|^2 ] \Big)^{1/2}
\\
\leq &
\sigma_m + \Big( \E [ \chi_t ( L_g \| \w_t - \w_0 \| + \nu \sigma_m )^2 ] \Big)^{1/2}
\\
\leq &
\sigma_m + ( L_g \| \w_t - \w_0 \| + \nu \sigma_m ) ( \E [ \omega_t ] )^{1/2}
\\
\leq &
\sigma_m + ( L_g \| \w_t - \w_0 \| + \nu \sigma_m ) \frac{\sigma_m}{\lambda}  ,
\end{align*}
where the first inequality is due to $(1-\chi_t) \leq 1$,
the second inequality is due to $\E [ \| \z_{t+1} - \nabla g(\w_t) \|^2 ] \leq \sigma_m^2$ and $\| \tilde \z_0 - \nabla g(\w_t) \| \leq L_g \| \w_t - \w_0 \|$,  
the third inequality is due to $\chi_t \leq 1$, and
the last inequality is due to $\E[\omega_t] \leq \frac{\sigma_m^2}{\lambda^2}$.

$4$) For $\| \zh_{t+1} - \nabla g(\w_t) \|^2$, we can simply apply the result $1$) for $\| \zh_{t+1} - \nabla g(\w_t) \|$:
\begin{align*}
\| \zh_{t+1} - \nabla g(\w_t) \|^2
\leq 
( 2 L_g \| \w_t - \w_0 \| + 2 \nu \sigma_m + \lambda )^2  .
\end{align*}

$5$) By applying the third result we have proved, we have
\begin{align*}
\E [ \| \zh_{t+1} - \nabla g(\w_t) \|^2 ]
\leq 
\Big( \E [ \| \zh_{t+1} - \nabla g(\w_t) \|^2 ] \Big)^{2/2 }
=
( \sigma_m + ( L_g \| \w_t - \w_0 \| + \nu \sigma_m ) \frac{\sigma_m}{\lambda} )^2  .
\end{align*}

$6$) For the last result, we have
\begin{align*}
&
\Big( \E[ \| \zh_{t+1} - \nabla g(\w_t) \|^4 ] \Big)^{1/2}
\\
= &
\Big( \E[ \| \zh_{t+1} - \nabla g(\w_t) \|^2 \cdot \E[ \| \zh_{t+1} - \nabla g(\w_t) \|^2 ] \Big)^{1/2}
\\
\leq &
\Big( \E[ \| \zh_{t+1} - \nabla g(\w_t) \|^2 \cdot ( 2L_g \| \w_t - \w_0 \| + 2 \nu \sigma_m + \lambda ) \Big)^{1/2}
\\
\leq &
( 2L_g \| \w_t - \w_0 \| + 2 \nu \sigma_m + \lambda ) \cdot ( \sigma_m + ( L_g \| \w_t - \w_0 \| + \nu \sigma_m ) \frac{\sigma_m}{\lambda} )  ,
\end{align*}
where the first and second inequalities are due to results $4$) and $5$), respectively.

We omit the proof of Lemma \ref{lemma:truncation_six_bounds_func_g} since it is highly anagolous to the proof of Lemma \ref{lemma:truncation_six_bounds_grad_g}.
\end{proof}

\section{Proof of Lemma \ref{lemma:square_loss_is_scvx}}
\begin{proof}
Let $F(\w)$ denote the objective in (\ref{eqn:dros}). 
Denote a feature-label pair as $\xi = (\x, y)$.
Since $\E_{\xi\sim \mathbb{P}_i} [\x \x^T]$ is positive semi-definite, there exist $A_i \in \mathbb{R}^{d\times d}$  such that
\begin{align}
\begin{split}
\mathcal{L}_i(\w) &= \E_{\xi\sim \mathbb{P}_i} \ell(\w; \xi) = \E_{\xi\sim \mathbb{P}_i} [(\w^T \x - y)^2]  \\
&=\E_{\xi\sim \mathbb{P}_i} [\w^T \x \x^T \w - 2 \w^T \x y + y^2] \\
& = \w^T A_i^T A_i \w - \b_i^T \w + c_i \\
\end{split} 
\end{align} 
where $\b_i=2 \E_{\xi\sim \mathbb{P}_i}[y\x]$ and 
$c_i = \E_{\xi\sim \mathbb{P}_i}[y^2]$.
Note that the components of $\b_i$ and $c_i$ are constants that do not depend on $\w$.

And we know there exists $A \in \mathbb{R}^{d\times f}$ is matrix such that $A^T A =  \sum\limits_{i=1}^{m} p_i A_i^T A_i$ since $\sum\limits_{i=1}^{m} p_i A_i^T A_i$ is positive semi-positive. 
Define 
\begin{align} 
\begin{split} 
\phi (\w, \p) &= \sum\limits_{i=1}^{m} p_i \mathcal{L}_i (\w) - h(\p) + r(\w) \\
& = \sum\limits_{i=1}^{m} p_i \w^T A_i^T A_i \w -  \sum\limits_{i=1}^{m} \b_i^T \w +  \sum\limits_{i=1}^{m} c_i - h(\p) + r(\w) \\ 
& = \w^T A^T A \w - \b^T \w + c- h(\p) + r(\w) \\
& = \|A \w\|_2^2 - \b^T \w + c- h(\p) + r(\w) 
\end{split} 
\end{align} 
where 
$\b = \sum\limits_{i=1}^{m} \b_i$,  
$c = \sum\limits_{i=1}^{m} c_i$. 
Then we have $F(\w) = \max\limits_{\p} \phi(\w, \p)$.
For any fixed $\p$, applying Lemma 2 of \cite{gong2014linear}, we know that there is a $\mu>0$ such that
\begin{align}
\begin{split}
\phi(\w, \p) - \arg\min\limits_{\w'} \phi(\w', \p) \geq \frac{\mu}{2} \|\w-\w'\|^2 
\end{split}
\label{eqn:local_osc_1}
\end{align}

Denote $\hat{p}(\w) = \arg\max\limits_{\p \in \Delta_m} \phi(\w, \p)$.
Let $\w_* = \arg\min\limits_{\w} F(\w)$, 
and $\p_* = \arg\max\limits_{\w} \phi(\w_*, \p)$, i.e., $(\w_*, \p_*)$ is a saddle point of $\phi(\w, \p)$.
Then by (\ref{eqn:local_osc_1}), we have
\begin{align}
\frac{\mu}{2} \|\w - \w_*\|^2 \leq \phi(\w, \p_*) - \phi(\w_*, \p_*)
\leq \phi(\w, \hat{p}(\w))  - \phi(\w_*, \p_*) = F(\w) - F(\w_*) 
\end{align}
\end{proof}

\end{document}